\newcommand{\EEE}{\color{black}}
\renewenvironment{proof}[1][Proof]%
                  {\par\noindent{\bfseries #1.} }{\rule{1ex}{1ex}\\[2mm]}
\newcommand{\DFRk}{\delta\text{-}\FR k}
\numberwithin{equation}{section}
\numberwithin{theorem}{section}
\numberwithin{example}{section}
\begin{document}

\title{
Kernel Approximation of Fisher-Rao
Gradient Flows
}

\author{\name{Jia-Jie Zhu} \email{zhu@wias-berlin.de}\\
\addr
Weierstrass Institute for Applied Analysis and Stochastics, Berlin, Germany\\
\AND
  \name{Alexander Mielke} \email{mielke@wias-berlin.de} \\
  \addr
  Humboldt University of Berlin \&
Weierstrass Institute for Applied Analysis and Stochastics\\
Berlin, Germany
       }

\editor{My editor}

\maketitle

\begin{abstract}%
The purpose of this paper is to answer a few open questions in the interface of kernel methods and PDE gradient flows.
Motivated by recent advances in machine learning, particularly in generative modeling and sampling, we present a rigorous investigation of
Fisher-Rao and Wasserstein type gradient flows concerning
their gradient structures, flow equations, and their kernel approximations.
Specifically, we focus on the Fisher-Rao (also known as Hellinger) geometry and its various kernel-based approximations, developing a principled theoretical framework using tools from PDE gradient flows and optimal transport theory. We also provide a complete characterization of gradient flows in the maximum-mean discrepancy (MMD) space, with connections to existing learning and inference algorithms. Our analysis reveals precise theoretical insights linking Fisher-Rao flows, Stein flows, kernel discrepancies, and nonparametric regression. We then rigorously prove evolutionary $\Gamma$-convergence for kernel-approximated Fisher-Rao flows, providing theoretical guarantees beyond pointwise convergence. Finally, we analyze energy dissipation using the Helmholtz-Rayleigh principle, establishing important connections between classical theory in mechanics and modern machine learning practice. Our results provide a unified theoretical foundation for understanding and analyzing approximations of gradient flows in machine learning applications through a rigorous gradient flow and variational method perspective.

\end{abstract}

\tableofcontents

\section{Introduction}
\label{sec:intro}
In essence, the seminal works such as \citep{ottoGeometryDissipativeEvolution2001,jordan_variational_1998}
enabled a systematic perspective of studying the PDE such as the type
\begin{align*}
    \partial_t \mu = - \DIV \left( \mu \nabla\dFdmu \right) 
\end{align*}
as gradient flows of the energy functional $F$, i.e., the solution paths of the measure optimization problem
\begin{align}
    \min _{\mu \in \mathcal{P}(\mathbb{R}^{d})} F(\mu)
    \label{eq:measure-opt}
\end{align}
in the Wasserstein space $\left(\mathcal{P}(\mathbb{R}^{d}), W_p\right)$.
While much recent research in
machine learning applications
has predominantly focused on modifying the \emph{energy functionals} (\ie $F$ above) of the pure Wasserstein gradient flows, e.g., \citep{arbel_maximum_2019,chewiSVGDKernelizedWasserstein2020,glaserKALEFlowRelaxed2021,korbaKernelSteinDiscrepancy2021,carrillo2019blob,lu2023birth,javanmard_analysis_2020,craig_blob_2023}, we delve into various kernel approximations of the gradient flow \emph{geometry}.
In doing so, our investigation also reveals precise relations between various previously proposed geometries over probability measures, such as the Stein distance~\citep{duncan2019geometry,liu_stein_2019},
kernel Stein discrepancy~\citep{liu_kernelized_2016,gorham2017measuring,chwialkowskiKernelTestGoodness2016},
Sobolev discrepancy~\citep{mroueh_unbalanced_2020}, and maximum mean discrepancy~\citep{gretton2012kernel}.
By working with the different gradient flow geometries and leaving the energy functional to be chosen for specific applications, we provide a measure optimization~\eqref{eq:measure-opt} framework to be adapted to various applications beyond the confines of ad-hoc energy functionals,
evidenced in the following examples.

\paragraph*{Inference and sampling in Stein geometry}
Suppose a statistician wishes to generate samples from a probability distribution $\pi$, whose density is in the form
$\displaystyle\pi(x) = \frac{1}{\int \rme^{-V(x)}\dd x}e^{-V(x)}$,
where $ V$ is the potential energy.
In this case, using the fact that $\pi$ is the invariant distribution of the Langevin stochastic differential equation
\begin{equation} 
    \mathrm{d}X_t = -{\nabla}V(X_t) \ts \mathrm{d}t + \sqrt{2} \ts \mathrm{d}Z_t
    ,
    \label{eq:langevin}
\end{equation}
where $Z_t$ is the standard Brownian motion.
From a PDE perspective, this Langevin SDE~\eqref{eq:langevin} describes the same dynamical system as the deterministic drift-diffusion Fokker-Planck PDE
\begin{align}
    \partial_t \mu =-\DIV \left(\mu \nabla \left(V + \log\mu\right) \right) 
    \label{eq:drift-diffusion-PDE-intro}
\end{align}
for probability measure $\mu$, which is also the gradient-flow equation of a Wasserstein gradient flow. 
Therefore, instead of relying on the stochastic simulation of \eqref{eq:langevin},  
one can forward simulate the deterministic PDE~\eqref{eq:drift-diffusion-PDE-intro}.
\citet{liu_stein_2019} have proposed a deterministic discrete-time update algorithm referred to as Stein variational gradient descent (SVGD).
This algorithm has been related to the Stein PDE by \citet{duncan2019geometry}
\begin{align}
    \partial_t \mu =-\DIV \left(\mu \K_\mu\nabla \left(V + \log\mu\right) \right) 
    \label{eq:stein-PDE}
\end{align}
where $\K_\mu$ is the integral operator.
The gradient flow equation~\eqref{eq:stein-PDE}
can be viewed as
the kernel approximation (or kernelization) of the 
pure Wasserstein gradient flow equation~\eqref{eq:drift-diffusion-PDE-intro}.

\paragraph*{Deep generative models}
A recent application of gradient flows is generative models.
One particular relevant class of algorithms is the 
score-based deep diffusion generative models~\citep{song2020score,song_generative_2020,ho_denoising_2020,sohl-dicksteinDeepUnsupervisedLearning2015,debortoliConvergenceDenoisingDiffusion2023,oko2023diffusion}.
The goal of the so-called \emph{score-matching} task is to compute the vector field $\nabla\log\mu_t$ to simulate a backward SDE
\begin{equation} 
    \mathrm{d}X_t = \left(X_t +2\nabla \log\mu_t\left(X_t\right)\right) \ts \mathrm{d}t + \sqrt{2} \ts \mathrm{d}W_t,
    \label{eq:reverse-sde-diffusion-model}
\end{equation}
The term $\nabla \log\mu_t\left(X_t\right)$ can be approximated via
regression in practice,
\begin{align}
\inf_{f\in \calF}
             \int_0^T
    \|f(\cdot , t) - \nabla \log\mu_t\|_{L^2_{\mu_t}}^2 
    \dd t
    ,
    \label{eq:score-matching}
\end{align}
where $\mu_t$ is the state distribution of a diffusion process at time $t$, e.g., Ornstein–Uhlenbeck process.
    Another class of generative models that has shown
    improved efficiency and stability
    is the flow-based generative models
    \citep{lipman2022flow}.
    They learn the solution $u$ to the ODE
$        \dot u = - v_t(u)$
    for some velocity field $v_t$ by solving the regression problem
    with explicit target velocity $v_t$
    \begin{align}
             \inf_{f\in\calF}
             \int_0^T
                \|f(\cdot , t) - v_t \|_{L^2_{\mu_t}}^2  
                \dd t
                .
                \label{eq:flow-based-regression}
    \end{align}
    Furthermore, they observed that, by choosing $v_t$ to be the velocity field for the optimal transport between Gaussian distributions, they obtained more efficient and stable training than previous generative models.
    Note that in practice, $f$ is often parameterized using a time-dependent neural network
and training is further done over various initial conditions\footnote{
    For pactical implementation, the target can be additionally conditioned on a variable $z\sim p$, i.e., $\xi(x, z)$, resulting in the conditional flow matching~\citep{lipman2022flow,tongImprovingGeneralizingFlowbased2023}
    \begin{align}
        \argmin_{f\in \calF}
        \left\{
            \int_0^T
            \int \int \left(f(x, z) - \xi (x, z)\right)^2 \dd \mu(x|z) p(z) \dd z
        +
        \lambda \|f\|^2_{\calF}
        \dd t
        \right\}
        .
    \end{align}    
For notational simplicity, we only consider the unconditional case.}.

    The hope of such learning algorithms
    is to approximate some vector field
    of the original target flows.
    From this paper's perspective, it is crucial to note that the \emph{new flow following the learned velocity field, in the above formulations, has a new geometry that is different from the original flow even if the training error is close to zero}.
    While there is a growing body of work on the various applications of gradient flows in machine learning, there has been no principled framework to understand the approximation of PDE gradient flows, i.e., the precise characterization of the gradient structures of the gradient systems that generate the flows.
    The goal of this paper is then to provide the theoretical foundation for such approximations to the gradient-flow geometry, mainly focusing on the Fisher-Rao as well as the Wasserstein dissipation geometry.

An important topic of this paper is
the geodesics and gradient flows in the Fisher-Rao geometry,
generated by the
Fisher-Rao distance,
also known as the Hellinger distance,
between two nonnegative
measures $\mu, \nu\in \Mplus$.
It is defined as
\begin{align}
    \label{eq:fr-def}
    \FR^2(\mu,\nu)
    =
    4\cdot \int \left(\sqrt{\frac{\delta \mu}{\delta \gamma}}
    - \sqrt{\frac{\delta \nu}{\delta \gamma}} \right)^2 \dd \gamma
\end{align}
for a reference measure 
$\gamma,\ \mu,\nu<<\gamma$.
Recall its dynamic formulation
\footnote{One may replace the dynamics by
$\dot \mu = -  \mu\cdot \xi_t + \mu\cdot \int \mu  \xi_t$ for a flow over probability measures, \ie spherical Hellinger~\citep{LasMie19GPCA}, instead of non-negative measures.
We defer the spherical variant to a future work.
Furthermore, as detailed in Remark~\ref{rem:hellinger-vs-fr},
the naming of ``Fisher-Rao'' is an abuse of the naming convention.
The accurate name of the distance defined in \eqref{eq:fr-def} is ``Hellinger''.
The Fisher-Rao distance is equal to the Bhattacharyya divergence up to a constant factor.
\label{footnote:spherical-FR-normalize}
}
(see also \citep{gallouet2017jko}, \citep{liero_optimal_2018} and
Example~\ref{ex:Fisher-Rao geodesics})
\begin{equation}
\label{eq:bb-formula-fr}
    \begin{aligned}
    \FR^2(\mu,\nu)
    =
    \min_{\mu, \xi_t} 
    \left\{\int_0^1
    \| \xi_t\|^2_{L^2_\mu}
    \dd t
    \ \
    \middle \vert \ \  
    \dot \mu = -  \mu\cdot \xi_t 
    ,
    \
    \mu(0) = \mu,
    \
    \mu(1)= \nu\right\}
    .
    \end{aligned}
\end{equation}

Using the tools from kernel methods and the Benaou-Brenier dynamic formulation,
we first investigate new gradient systems centered around the Fisher-Rao geometry.
Our emphasis is on the precise relation in terms of the kernelization of gradient flows, stated in Definition~\ref{def:f-v-kernelize}.
Motivated by the kernelization in the Stein geometry (see, \eg \eqref{eq:stein-PDE}),
we provide the geodesic and gradient structures of the \emph{kernelized Fisher-Rao} geometry in Section~\ref{sec:fisher-rao}, whose gradient flow equation is
a reaction equation with a kernelized growth field
\begin{align*}
    \dot \mu = - \mu \K_\mu  \dFdmu
    .
\end{align*}
Furthermore, we find that the kernelization of the kernel maximum-mean discrepancy, commonly used in machine learning applications, results in precisely the pure Fisher-Rao geometry. We summarize the relations below.
\begin{center}
\begin{tikzpicture}[node distance=5.5cm,
    ->,
    >=stealth]
    \node[draw] (A) {MMD};
    \node[draw, right of=A] (B) {Fisher-Rao};
    \node[draw, right of=B] (C) {Kernelized Fisher-Rao};
    
    \draw[<-] (A) -- (B) node[midway, above] {de-kernelize $\dFdmu$} node[midway, below] {Section~\ref{sec:mmd-reg}};
    \draw[->] (B) -- (C) node[midway, above] {kernelize $\dFdmu$} node[midway, below] {Section~\ref{sec:kern-FR}};
    \label{fig:kernelization-relations-FR}
\end{tikzpicture}
\end{center}
The arrows denote kernelization operations of the gradient flow by the operator $\K_\rho^\frac12$, as in Definition~\ref{def:f-v-kernelize}.
This is analogous to the Wasserstein setting, where we first continue the work of \cite{duncan2019geometry} on Stein geometry, of which we briefly provide the gradient structure in Section~\ref{sec:w2stein}.
We then establish the kernelization and de-kernelization relation in the diagram below.
\begin{center}
    \begin{tikzpicture}[node distance=5.5cm,
        ->,
        >=stealth]
        \node[draw] (A) {De-Stein};
        \node[draw, right of=A] (B) {Wasserstein};
        \node[draw, right of=B] (C) {Stein};
        
        \draw[<-] (A) -- (B) node[midway, above] {de-kernelize $\nabla\dFdmu$} node[midway, below] {Section~\ref{sec:dk-w2}};
        \draw[->] (B) -- (C) node[midway, above] {kernelize $\nabla\dFdmu$} node[midway, below] {Section~\ref{sec:w2stein}};
        \label{fig:kernelization-relations-WS}
    \end{tikzpicture}
    \end{center}
For example, we de-kernelize the Wasserstein geometry to obtain the De-Stein distance, which results in a flat distance in the form of a static definition:
\begin{align*}
    \operatorname{De-Stein}^2(\mu,\nu)
    =    \sup_\zeta
    \left\{
        \int \zeta\dd (\mu - \nu)
        -\frac14
         \|\nabla \zeta\|^2_{\rkhs}
    \right\}
    ,
\end{align*}
which is the transport analog of the MMD.
In the appendix,
we also uncover a few interesting kernel discrepancy functionals that are interaction energies obtained by dissipating entropy along the kernelized
FR and Stein flow.
Prominent cases include the MMD~\citep{gretton2012kernel}
and the KSD~\citep{liu_kernelized_2016}.
They are both generated by the dissipation in kernelized geometries
of the inclusive KL divergence.

Motivated by the approximation of velocity fields for generative model applications,
we investigate the gradient flow structure that generates the reaction equation
whose growth field is obtained by nonparametric regression, e.g., the kernel ridge regression,
\begin{align}
    \dot {\mu_t} = - {\mu_t}\cdot r_t , \quad
        r_t = \argmin_{f\in \calF}
        \biggl\{
                \left\|f -  \dFdmut\right\|^2_{L^2_{\mu_t}}  
                + 
            \lambda \| f \|^2_{\calF}
            \biggr\}
            .
            \label{eq:gfe-krr-FRk-npreg}
    \end{align}
Note that the nonparametric regression can be cast in the more general maxium likelihood estimation (MLE) form~\eqref{eq:gfe-krr-FRk-MLE}.
    We also refer to the (gradient) system that generates the equation above as an \emph{approximate} Fisher-Rao (gradient) system.

The connection between nonparametric regression
and gradient flows
can be seen in the
Helmholtz-Rayleigh Principle~\citep{rayleigh_general_1873}; see Section~\ref{sec:nonparametric-regression}.
The intuition is that
the variational problem
\begin{align*}
        \min
        \biggl\{{\textrm{energy}}
            +{\textrm{dissipation potential}}\biggr\}
\end{align*}
is equivalent to the nonparametric regression formulation~\eqref{eq:gfe-krr-FRk-npreg}
agnostic of the dissipation geometry,
\ie
in both the Fisher-Rao and Wasserstein settings.
This implies different training objectives,
such as 
matching the score function or the log density,
can be unified using the same formalism
of the Rayleigh Principle.

Finally,
we are motivated by the
following questions
regarding
regression problems that appeared in generative models, \eg the flow-matching problem~\eqref{eq:flow-based-regression} and the score-matching problem~\eqref{eq:score-matching}. For those problems, does the learned flow exist, is it a gradient flow? If so, what is the gradient structure, e.g., energy and dissipation geometry?
Furthermore,
in nonparametric regression~\eqref{eq:gfe-krr-FRk-npreg},
one typically
bounds the prediction error, \ie
quantities such as $\|r_t - \log\mu_t\|_{L^2_{\mu_t}}$ for a fixed time $t$.
However, in gradient flows,
we are also interested in the behavior of the system
that follows the flow
$\dot {\mu_t} = - {\mu_t}\cdot r_t $,
\eg
its variational structure, solution existence, and convergence behavior.
For those reasons,
we establish
the
evolutionary $\Gamma$-convergence
in the kernel-approximate Fisher-Rao geometries,
\ie we use tools from the calculus of variations and functional analysis
instead of statistical bounds.
In a nutshell, we establish
\begin{align*}
    \text{the system generating $\dot {\mu_t} = - {\mu_t}\cdot r_t $}
    \Gto \text{Fisher-Rao gradient system}.
\end{align*}
Thus, we provide a
rigorous justification for the approximation
using nonparametric regression in the Fisher-Rao flows.
This M-estimation perspective also differs from the perspective of local regression and local smoothing
analyzed in the works by, \eg \citet{lu2023birth,carrillo2019blob}.
\begin{remark}
    [``Hellinger'' versus ``Fisher-Rao'']
    In this paper, we have used the popular naming of ``Fisher-Rao''
    to describe the infinite-dimensional geometry over probability and
    non-negative measures.
    Our main reason of doing so is the familiarity of the name in the machine learning community.
    However,
    based on the historical development of the theory,
    this is an abuse of the naming convention.
    The accurate name for this geometry is ``Hellinger''~\citep{hellingerNeueBegrundungTheorie1909,kakutaniEquivalenceInfiniteProduct1948,rao1945information,bhattacharyyaMeasureDivergenceTwo1946}.
    The Fisher-Rao naming should be instead used for the finite-dimensional parameterization of the probability measures.
    Furthermore, another nuance is that there has been a mix-up of Fisher-Rao flows over the probability measures $\calP$ versus the positive measures $\Mplus$.
    As \citet{chizat_unbalanced_2019} referred to the positive measure version as the ``Fisher-Rao geometry''.
    Yet authors such as \citet{lu2023birth,chenGradientFlowsSampling2023}
    have used the name for the probability measure version.
    To avoid confusion, in this paper, when we say ``Fisher-Rao geometry'', we always refer to the positive measure geometry.
    \label{rem:hellinger-vs-fr}
\end{remark}

\paragraph*{Organization of the paper}
In Section~\ref{sec:background}, 
we provide background on gradient systems and optimal transport, with a focus on dynamic formulation and geodesics.
Then, we recall some facts about the reproducing kernel Hilbert spaces (RKHS).
Those two topics are married through our concrete investigation in
the next four sections.
In Section~\ref{sec:wasserstein}, \ref{sec:fisher-rao}, and also \ref{sec:wfr},
we provide gradient structures for a few new and existing gradient systems of interest.
They are motivated by two types of geometries,
namely, the Fisher-Rao and Wasserstein space.
We characterize their precise relations with other kenelized and approximating geometries.
Then, we analyze the approximation quality,
by proving the evolutionary $\Gamma$-convergence of
the kernel-approximate Fisher-Rao gradient systems.
Additional proofs are given in Section~\ref{sec:proof}.

\EEE

\paragraph*{Notation}
We use the notation $\mathcal{P}(\bar{\Omega}), \Mplus(\bar{\Omega})$ to denote the space of probability and non-negative measures on the closure of a set $\Omega\subset\R^d$.
The base space symbol $\Omega$ is often dropped if there is no ambiguity in the context.
We express the standard integral operator associated with a measure $\rho$ and a kernel $k$ as a weighted convolution
$\K_\rho:L^2(\rho) \to L^2(\rho), f \mapsto \int k(x, \cdot ) f(x) \dd \rho(x)$;
cf. Theorem~\ref{thm:rkhs-int-operator}.
The measure $\rho$ is omitted if it is the Lebesgue measure, \ie $\K$.
In this paper, the first variation of a functional $F$ at $\mu\in\Mplus$ is defined as a function ${\frac{\delta F}{\delta\mu}[\mu] }$
\begin{align}
    \frac{\dd }{\dd \epsilon}F(\mu + \epsilon \cdot v) |_{\epsilon=0}
    = \int {{\frac{\delta F}{\delta\mu}[\mu] }}(x) \dd v (x)
    \label{eq:first-var-def}
\end{align}
for any perturbation in measure $v$ such that $\mu + \epsilon \cdot v\in \Mplus$. 
The Fr\'echet (sub-)differential on a Banach space $(X, \|\|_X)$ is defined as a set in the dual space
$$
\mathrm{D}^X F :=\left\{\xi \in X^{*} \mid {F}(\mu) \geq {F}_\nu+\langle\xi, \mu-\nu\rangle_{X}+o\left(\|\mu-\nu\|_{X}\right) \text { for }\mu\to\nu \right\}
,
$$
where the small-$o$ notation signifies that the term vanishes more rapidly than the term inside the parentheses.
We use superscripts for differential derivatives to emphasize the corresponding space of those operations, \ie we distinguish between
$\mathrm{D}^X F$ and $\mathrm{D}^{Y} F$.
For simplicity, we carry out the Fenchel-conjugation calculation in the un-weighted $L^2$ space.
Replacing that with duality pairing in the weighted $L^2_{\rho}$ space does not alter the results.
Common acronyms, such as partial differential equation
(PDE) and
integration by parts (IBP), are used without further specifications.
We often omit the time index $t$ to lessen the notational burden, \eg the measure at time $t$, $\mu(t, \cdot)$, is written as $\mu$.
The infimal convolution (inf-convolution) of two functions $f,g$ on Banach spaces is defined as
    $(f\square g)(x) = \inf_{y} \left\{f(y) + g(x-y)\right\}$.
In formal calculation,
we often use measures and their density interchangeably,
\ie$\int f\cdot \mu$ means the integral w.r.t. the measure $\mu$.
This is based on the
standard rigorous generalization from flows over continuous measures to discrete measures \citep{ambrosio2008gradient}.

\section{Preliminaries}
\label{sec:background}

\subsection{Gradient-flow systems and geodesics}
\label{sec:grad-flow-geod}
Intuitively, a gradient flow describes a dynamical system that is driven towards the fastest dissipation of certain energy,
through a geometric structure measuring dissipation. In this work, we restrict to the case that the dissipation law is linear, which means it can be given in terms of a (pseudo) Riemannian metric. Such a system is called a \emph{gradient system}.
For example, the dynamical system described by 
an ordinary differential equation in the Euclidean space
\(\dot u(t) = - \nabla F(u(t)), u(t)\in \mathbb R^d \)
is a simple gradient system.

In this paper, we take the perspective of variational modeling and principled mathematical analysis, i.e., we study the underlying dynamical systems modeled as gradient systems specified by the underlying space $X$, energy functional $F$, and the dissipation geometry specified by the potential $\calR$.
Given a smooth state space $X$, a dissipation potential is a function on the tangent bundle $\rmT X$, i.e.\ $\calR= \calR(u,\dot u)$, where, for all $u\in X$, the functional $\calR(u,\cdot)$ is non-negative, convex, and satisfies $\calR(u,0)=0$. We denote by 
\begin{align}
\calR^*(u,\xi) = \sup\bigset{\langle \xi, v\rangle - \calR(u,v)}{ v \in \rmT_u X} 
\label{eq:dual_dissipation_potential}
\end{align}
the (partial) Legendre transform of $\calR$ and call it the \emph{dual dissipation potential}.
Throughout this work, we will restrict to the case that $\calR(u,\cdot)$ is quadratic, i.e.\ 
\[
\calR(u,\dot u) = \frac12 \langle \bbG(u)\dot u, \dot u\rangle 
\quad \text{or equivalently} \quad \calR^*(u,\xi)=\frac12\langle \xi, \bbK(u) \xi\rangle
.
\]

\begin{definition}[Gradient system]
\label{def:GradSystem}
A triple $(X, F, \calR)$ is called a generalized gradient system, if $X$ is a manifold or a subset of a Banach space, $F:X\to \R$ is a differentiable function, and $\calR$ is a dissipation potential. The associated gradient-flow equation has the primal and dual form 
\begin{equation}
  \label{eq:GFE}
0=\rmD_{\dot u}\calR(u,\dot u) + \rmD F(u) \quad \Longleftrightarrow \quad 
 \dot u= \rmD_{\xi} \calR^*\big(u, {-}\rmD F(u)\big). 
\end{equation}
If $\calR$ is quadratic, we simply call $(X,F,\calR)$ a \emph{gradient system} and obtain the gradient flow equations
     \[
0=\bbG(u)\dot u + \rmD F(u) \quad \Longleftrightarrow \quad 
 \dot u= - \bbK(u)\rmD F(u). 
    \]  
$\bbG=\bbK^{-1}$ is called the Riemannian tensor, and $\bbK= \bbG^{-1}$ is called the Onsager operator.   
\end{definition}
\EEE

Of particular interest to this paper is the gradient flow in the Fisher-Rao metric space, also called \emph{Hellinger-Kakutani} or simple \emph{Hellinger space} in \citep{liero_optimal_2018, LasMie19GPCA}, which is the gradient system that generates the 
following reaction equation as its \emph{gradient flow equation}
\begin{align}
  \partial_t\mu = - \mu \cdot \dFdmu,
\end{align}
where $\dFdmu$ is the first variation~\eqref{eq:first-var-def}.
Alternatively, one can also view the whole r.h.s as the Fisher-Rao metric gradient using the weighted tangent space $L^2_\mu$.

The Fisher-Rao gradient system is a special case of general gradient flow in metric spaces \cite{ambrosio2008gradient}, which has gained significant attention in recent machine learning literature due to the study of Wasserstein gradient flow (WGF), originated from the seminal works of Otto and colleagues, e.g., \cite{otto1996double,jordan_variational_1998,ottoGeometryDissipativeEvolution2001}.
Rigorous characterizations of general metric gradient systems have been carried out in PDE literature, for which we refer to \cite{ambrosio2008gradient} for complete treatments and \cite{santambrogio_optimal_2015, peletier_variational_2014,mielke2023introduction} for a first principles' introduction.
To get a concrete intuition, the gradient structure of the following two classical PDEs will become relevant in later discussions about Fisher-Rao and Wasserstein respectively. 
\begin{example}
    [Classical PDE: Allen-Cahn and Cahn-Hilliard]
    Recall the Allen-Cahn PDE
    \begin{align}
        \partial_t \mu = \Delta \mu - \nabla V,
        \label{eq:allen-cahn}
    \end{align}
    and the Cahn-Hilliard PDE
\begin{align}
    \partial_t \mu = \Delta \left( -\Delta\mu + \nabla V \right)
    .
    \label{eq:cahn-hilliard}
\end{align}
They are the gradient flows of the
energy functional
$F(\mu) = \frac12\int |\nabla \mu|^2  + \int V(\mu) $ in two different Hilbert spaces, where $V$ is a potential function, e.g., the double-well potential $V(x) = \frac14 (1-x^2)^2$.
Allen-Cahn
is the Hilbert-space gradient flow of the energy
$F$
in unweighted $L^2$, \ie
\begin{align}
    \mathcal R_\mathrm{AC} (\mu, \dot u) = \frac12\|\dot u\|_{L^2}^2
,
\mathcal R_\mathrm{AC}^* (\mu, \xi) = \frac12\|\xi\|_{L^2}^2
.
\label{eq:ac-grad-geometry}
\end{align} 
Cahn-Hilliard
is the gradient flow of
$F$
in unweighted $H^{-1}$, \ie
\begin{align}
    \mathcal R_\mathrm{CH} (\rho, \dot u) = \frac12\|\dot u\|_{H^{-1}}^2
,
\mathcal R_\mathrm{CH}^* (\rho, \xi) = \frac12\|\nabla\xi \|_{L^2}^2
.
\label{eq:ch-grad-geometry}
	\end{align}
    \label{ex:allen-cahn-hilliard}
\end{example}
\paragraph*{Geodesics and their Hamiltonian formulation.} For many considerations of gradient flows, the geodesic curves play an important role. These curves are obtained as minimizers of the length of all curves connecting two points:
\[
\gamma_{u_0\to u_1}\in\argmin_u \int_0^1 \frac12 \langle \bbG(u(s)) \dot u(s), \dot u(s) \rangle \dd s . 
\]
In the sense of classical mechanics, one may the consider the dissipation potential $\calR(u,\dot u)=\frac12\langle \bbG(u)\dot u, \dot u\rangle $ as a ``Lagrangian'' $L(u,\dot u)=\calR(u,\dot u)$ and the dual dissipation potential $\calR^*(u,\xi)= \frac12\langle \xi, \bbK(u)\xi\rangle $ as a Hamiltonian $H(u,\xi)=\calR^*(u,\xi)$. Then, minimizing the integral over $L$ is equivalent to solving the Hamiltonian system \EEE
\begin{align}
        \left\{
        \begin{aligned}
            \dot u &= \partial_\xi H (u, \xi)=  \partial_\xi \mathcal R^*(u, \xi) = \bbK(u) \xi,\\
            \dot \xi &= -\rmD_u H (u, \xi)= - \rmD_u \mathcal R^*(u, \xi)  ,
        \end{aligned}
        \right\}, \quad  u(0)=u_0, \ u(1)=u_1.
        \label{eq:intro-gfe-hamilton}
        \tag{H}
\end{align}
Here, the conditions for $u$ at $s=0$ and $s=1$ indicate that finding geodesic curves leads to solving a two-point boundary value problem. 

The theory for geodesics becomes particularly interesting in the case that $\calR^*$ is linear in the state $u$, because then $\rmD_u \calR^*(u,\xi)$ no longer depends on $u$. This means that the system \eqref{eq:intro-gfe-hamilton} decouples in the sense that the equation for $\xi$ no longer depends on $u$. This particular case occurs in the Wasserstein, Fisher-Rao, and consequently Wasserstein-Fisher-Rao space.
This structure allows for the derivation of the following characterizations of the geodesic curves and static formulations of the associated Riemannian distances.    
\EEE

\begin{example}
   [Wasserstein geodesics in Hamiltonian formulation]
\label{ex:Wasserstein geodesics}
For the Wasserstein case, the dual dissipation potential takes the form
\footnote{For ease of calculation, we always consider the $\frac{1}{2}$ scaling for quadratic dissipation potentials. That is, this case corresponds to the
geodesics of the $\frac12W_2^2$.
}
\begin{align*}
     H(\mu, \xi) 
     = \calR^*_{W_2}(\mu, \xi)
     =  \frac12 \| \nabla \xi \|^2_{L^2_\mu} 
     =\int \frac12 |\nabla \xi|^2 \rmd \mu.
\end{align*}   
The Onsager operator is given by $\bbK(\mu)\xi = - \DIV 
(\mu \nabla \xi)$ and the geodesic curves are characterized by
\begin{align}
        \left\{
        \begin{aligned}
            \dot \mu & = - \DIV \left(\mu \nabla \xi \right)
            ,\\
            \dot \xi & = {{-}}   \frac12| \nabla \xi | ^2 .
        \end{aligned}
        \right.
        \label{eq:intro-wgf-hamilton}
        \tag{Geod-W}
    \end{align}
Here, the first equation is the continuity equation which implies that $\mu$ is transported along the vector field $(t,x) \mapsto \nabla \xi(t,x)$, and the second equation is the Hamilton-Jacobi equation, which is notably independent of $\mu$.
The Hopf-Lax formula then gives the explicit characterization of the solution
$$
\xi(s,x) = \inf_{y} \big\{ \xi(0,y) + \frac1{2s} |x{-}y|^2 \big\},
$$
yielding the celebrated dual Kantorovich formulation of the Wasserstein distance.
See \cite{ambrosio2008gradient} for details. 
\end{example}

\begin{example}
   [Fisher-Rao (or Hellinger) geodesics in Hamiltonian formulation]
\label{ex:Fisher-Rao geodesics}
For the Fisher-Rao case in \eqref{eq:bb-formula-fr}, the primal and dual dissipation potential takes the form 
\begin{equation}
\begin{aligned}
        &
        \mathcal R_\FR(\mu,  \dot u)= \frac{1}2 \|\frac{\delta \dot u}{\delta {\mu}}\|^2_{L^2_{\mu}},
        \\
        &
         H(\mu, \xi) 
         = \calR^*_{\FR}(\mu, \xi)
         =  \frac12  \big\| \xi \big\|^2_{L^2_\mu} 
         =\int \frac12 \xi^2 \rmd \mu.
         \label{eq:fr-hamilton}
\end{aligned}
\end{equation}
The Onsager operator is given by $\bbK(\mu)\xi = \xi \mu $ and the geodesic curves are characterized by 
\begin{align}
        \left\{
        \begin{aligned}
            \dot \mu & = - \mu \xi ,\\
            \dot \xi & = {{-}} \frac12| \xi | ^2 .
        \end{aligned}
        \right.
        \label{eq:intro-FR-hamilton}
        \tag{Geod-FR}
\end{align}
Remarkably,
different from the Hamilton-Jacobi setting,
this system can be solved in the explicit form  
\[
\xi(s,x) = \frac{\xi(0,x)}{1{+}s \xi(0,{x})/2} \quad \text{and} 
\quad \mu(s,\rmd x) = \big(1{+}s \xi(0,x)/2\big)^2 \mu_0(\rmd x),
\]
where we already used the initial condition $\mu(0)=\mu_0$. 
Applying the final condition $\mu(1)=\mu_1$ we arrive at the explicit representation of the Fisher-Rao geodesic
\begin{align}
\omega(s) = \big( (1{-}s) \sqrt{\mu_0} + s \sqrt{\mu_1}\big)^2 
 = (1{-}s)^2 \mu_0 + 2s(1{-}s) \sqrt{\mu_0\mu_1} + s^2 \mu_1. 
\label{eq:geod-curve-FR}
\end{align}
See \citep{LasMie19GPCA} for details. 
Finally,
using
the explicit solution for $\xi(s,x)$ above,
one can show that the Fisher-Rao geodesic distance indeed admits the formula~\eqref{eq:fr-def}.
Formally, one can also obtain
a static dual Kantorovich type formulation
\begin{align*}
    \frac12
    \FR^2(\mu_0,\mu_1) =
        \sup_{(2 + \phi)(2 - \psi)=4}
        \left\{
            \int \psi\dd \mu_1
            -
            \int \phi\dd \mu_0 
        \right\}
        .
\end{align*}
\end{example}

\subsection{Reproducing kernel Hilbert space}
We first remember some basic facts about the reproducing kernel Hilbert spaces (RKHS), which are a class of Hilbert spaces that are widely used in approximation theory~\citep{wendland_scattered_2004,cucker_learning_2007} and machine learning~\citep{steinwart2008support}.

In this paper, we refer to 
a bi-variate function $k: \Omega \times \Omega \rightarrow \mathbb{R}$ as a symmetric positive definite kernel if $k$ is symmetric and, for all $n \in \mathbb{N}, \alpha_1, \ldots, \alpha_n \in \mathbb{R}$ and all $x_1, \ldots, x_n \in \Omega$, we have $\sum_{i=1}^n \sum_{j=1}^n \alpha_i \alpha_j k\left(x_j, x_i\right) \geq 0$.
If the inequality is strict, then $k$ is called strictly positive definite.
Here, the space $\Omega$ can be a subset of $\R^d$.
$k$ is a reproducing kernel if it satisfies the reproducing property, i.e., for all $x \in X$ and all functions in a Hilbert space $f \in \rkhs$, we have $f(x)=\langle f, k(\cdot, x)\rangle_{\rkhs}$.
Furthermore, the space $\rkhs$ is an RKHS if the Dirac map $\delta_x: \rkhs \mapsto \mathbb{R}, \delta_x(f):=f(x)$ is continuous.
It can be shown that there is a one-to-one correspondence between the RKHS $\rkhs$ and the reproducing kernel $k$.
The following fact regarding the RKHS, whose statement is adapted from \citep[Theorem~4.27]{steinwart2008support}, is instrumental to this paper.
\begin{theorem}    
[Integral operator]
Suppose the kernel is square-integrable $\|k\|_{L^2_{\rho}}^2:=\\\int k(x, x) d \rho(x)<\infty$ w.r.t. a probability measure $\rho$.
Then
the inclusion from the the associated RKHS \(\rkhs\) to $L^2_{\rho}$, \(\ID : \rkhs \rightarrow L^2_{\rho}\), is
continuous.
Moreover, its adjoint is the operator \(\Tkrho: L^2_{\rho} \rightarrow \rkhs\) defined by
$$
\Tkrho g(x):=\int k\left(x, x^{\prime}\right) g\left(x^{\prime}\right) d \rho\left(x^{\prime}\right), \quad g \in L^2_{\rho}
$$
$\Tkrho$ is Hilbert-Schmidt (\ie singular values are square-summable).
The integral operator
$$\K_\rho:= \ID \circ \Tkrho,
L^2(\rho) \to L^2(\rho)$$
is compact, positive, self-adjoint, and nuclear (\ie singular values are summable).
\label{thm:rkhs-int-operator}
\end{theorem}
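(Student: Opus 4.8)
The plan is to deduce every claim from two elementary RKHS identities---the reproducing property $f(x)=\langle f, k(\cdot,x)\rangle_\rkhs$ and the norm identity $k(x,x)=\|k(\cdot,x)\|_\rkhs^2$---together with the square-integrability hypothesis $\int k(x,x)\,\dd\rho(x)<\infty$. I would first prove continuity of the inclusion: by Cauchy--Schwarz in $\rkhs$, $|f(x)|=|\langle f, k(\cdot,x)\rangle_\rkhs|\le \|f\|_\rkhs\sqrt{k(x,x)}$, so integrating against $\rho$ gives $\|\ID f\|^2_{L^2_\rho}=\int |f|^2\,\dd\rho\le \|f\|_\rkhs^2\int k(x,x)\,\dd\rho=\|k\|^2_{L^2_\rho}\|f\|^2_\rkhs$. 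This simultaneously shows that every $f\in\rkhs$ lies in $L^2_\rho$ (so $\ID$ is well defined, granting the standard measurability of $k$) and that $\|\ID\|\le\|k\|_{L^2_\rho}$.

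Next I would identify the adjoint. For $f\in\rkhs$ and $g\in L^2_\rho$, $\langle \ID f, g\rangle_{L^2_\rho}=\int f(x)g(x)\,\dd\rho(x)=\int \langle f, k(\cdot,x)\rangle_\rkhs\, g(x)\,\dd\rho(x)$. The decisive move is to pull $\langle f,\cdot\rangle_\rkhs$ out of the integral, reading the right-hand side as $\langle f,\ \int k(\cdot,x)g(x)\,\dd\rho(x)\rangle_\rkhs$ with an $\rkhs$-valued (Bochner) integral. This is legitimate because $\int \|k(\cdot,x)g(x)\|_\rkhs\,\dd\rho=\int \sqrt{k(x,x)}\,|g(x)|\,\dd\rho\le \|k\|_{L^2_\rho}\|g\|_{L^2_\rho}<\infty$ by Cauchy--Schwarz, so the integrand is Bochner integrable and its integral lies in $\rkhs$; by the stated pointwise formula it equals $\Tkrho g$. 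Hence $\ID^*=\Tkrho$ and $\Tkrho g\in\rkhs$, as claimed.

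Finally, the Hilbert--Schmidt, compactness, self-adjointness, positivity, and nuclearity statements all follow once $\ID$ is shown to be Hilbert--Schmidt. Choosing an orthonormal basis $\{f_j\}$ of $\rkhs$ and using Tonelli (all summands are nonnegative), $\|\ID\|^2_{\mathrm{HS}}=\sum_j\int |f_j(x)|^2\,\dd\rho(x)=\int \sum_j |\langle f_j, k(\cdot,x)\rangle_\rkhs|^2\,\dd\rho(x)=\int \|k(\cdot,x)\|^2_\rkhs\,\dd\rho(x)=\int k(x,x)\,\dd\rho(x)=\|k\|^2_{L^2_\rho}<\infty$, where the middle equality is Parseval in $\rkhs$. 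Thus $\ID$ is Hilbert--Schmidt, and so is its adjoint $\Tkrho$ with the same norm. Writing $\K_\rho=\ID\circ\Tkrho=\ID\,\ID^*$, it is self-adjoint and positive because $\langle \ID\,\ID^* u, u\rangle_{L^2_\rho}=\|\ID^* u\|^2_\rkhs\ge 0$; as a product of two Hilbert--Schmidt operators it is trace class (nuclear) with $\operatorname{tr}\K_\rho=\|\ID\|^2_{\mathrm{HS}}=\int k(x,x)\,\dd\rho$, and trace-class operators are compact.

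I expect the main obstacle to be the rigorous justification of the Bochner-integral interchange in the adjoint step, together with the attendant measurability of $x\mapsto k(\cdot,x)\in\rkhs$ and of the elements of $\rkhs$, rather than any of the algebra. The separability needed to index an orthonormal basis of $\rkhs$ countably, and the Tonelli interchange in the Hilbert--Schmidt computation, are minor and rest only on nonnegativity of the summands. The conceptual crux is the Parseval identity $\sum_j |f_j(x)|^2=k(x,x)$, which is precisely what converts the hypothesis $\int k(x,x)\,\dd\rho<\infty$ into finiteness of the Hilbert--Schmidt norm and hence yields all the remaining structural properties at once.
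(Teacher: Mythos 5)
Your proof is correct; the paper itself gives no proof of this statement but quotes it as adapted from \citet[Theorem~4.27]{steinwart2008support}, and your argument---Cauchy--Schwarz with the reproducing property for continuity of $\ID$, the Bochner-integral identification $\ID^*=\Tkrho$, and the Parseval computation $\|\ID\|_{\mathrm{HS}}^2=\int k(x,x)\,\dd\rho<\infty$, from which $\K_\rho=\ID\,\ID^*$ inherits self-adjointness, positivity, nuclearity, and compactness---is essentially the standard route taken in that reference. The measurability and separability caveats you flag (Bochner measurability of $x\mapsto k(\cdot,x)\in\rkhs$ and a countable orthonormal basis for the Tonelli step) are precisely the standing assumptions handled there, so no genuine gap remains.
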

We define the power of the integral operator $\K_\rho$ as, for $\alpha>0$,
$\K_\rho ^\alpha := 
\sum_{i=1}^\infty \lambda_i^\alpha \langle \cdot, \phi_i\rangle_{L^2_\rho} \phi_i$,
where $\lambda_i$ and $\phi_i$ are the eigenvalues and eigenfunctions of $\K_\rho$ given by the spectral theorem.
The image of the square root integral operator is the RKHS, i.e., 
$\rkhs  = \K_\rho ^\frac{1}{2} (L^2_{\rho})$ for some probability measure $\rho$.
See, e.g., \cite[Chapter 4]{cucker_learning_2007}.
Then,
for any $g\in\rkhs$, $\exists f\in L^2_{\rho}$ such that
    $g = \K ^{\frac12}_\rho f,
    \|g\|_\rkhs = \|f\|_{L^2_{\rho}}$.
Therefore, we can conveniently write down some formal relations
between the RKHS and $L^2$ norm
useful for our later analysis
\begin{align}
    \|g\|^2_\rkhs = \|\K_\rho ^{-\frac12} g\|^2_{L^2_{\rho}}=\iprod{g}{\K_\rho ^{-1} g}_{L^2_{\rho}},
    \quad
    \iprod{ g}{\Tkrho f}_{\rkhs} = \iprod{\ID g}{f}_{L^2_{\rho}},
    \label{eq:relation-rkhs-norm}
\end{align}
where $\K_\rho ^{-\frac12}$ denotes the inverse of $\K_\rho ^{\frac12}$.
The power of the kernel integral operator is prominently manifested in
the following nonparametric regression problem.
\begin{lemma}
[Kernel ridge regression estimator]
Given the target function $\xi\in L^2_{\rho}$,
the kernel ridge regression (KRR) problem for $\lambda>0$
\begin{align}
\inf_g 
\biggl\{
    \|g-\xi\|^2_{L^2_{\rho}} 
    +
    \lambda
    \|g\|_\rkhs^2 
\biggr\}
,
\end{align}
admits the closed-form solution
\begin{align}
g^* = (\K_\rho + \lambda\ID)^{-1} \K_\rho \xi.
\label{eq:opt-regression-sol}
\end{align}
\end{lemma}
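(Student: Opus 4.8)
The plan is to treat the objective as a strictly convex quadratic functional on the Hilbert space $\rkhs$, read off its unique minimizer from the first-order optimality condition, and then rewrite the result in terms of the integral operator $\K_\rho$ via a push-through identity.

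First I would make the implicit inclusion explicit: writing $\ID:\rkhs\to L^2_{\rho}$ for the continuous inclusion of Theorem~\ref{thm:rkhs-int-operator}, the objective reads $J(g)=\|\ID g-\xi\|^2_{L^2_{\rho}}+\lambda\|g\|^2_{\rkhs}$ for $g\in\rkhs$. Expanding the two inner products and using the adjoint relation $\langle \ID g,f\rangle_{L^2_{\rho}}=\langle g,\Tkrho f\rangle_{\rkhs}$ from the same theorem, I would recast $J$ as the quadratic form $J(g)=\langle (\Tkrho\ID+\lambda\,\mathrm{Id}_{\rkhs})g,\,g\rangle_{\rkhs}-2\langle \Tkrho\xi,\,g\rangle_{\rkhs}+\|\xi\|_{L^2_{\rho}}^2$. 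Since $\Tkrho\ID=\ID^{*}\ID$ is bounded, self-adjoint and positive, the operator $A:=\Tkrho\ID+\lambda\,\mathrm{Id}_{\rkhs}$ is bounded, self-adjoint and bounded below by $\lambda>0$; hence $J$ is strictly convex and coercive, so a unique minimizer exists and $A$ is boundedly invertible.

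Next I would compute the Fréchet derivative of $J$ in an arbitrary direction $h\in\rkhs$ and set it to zero, giving the normal equation $(\Tkrho\ID+\lambda\,\mathrm{Id}_{\rkhs})g^*=\Tkrho\xi$, that is $g^*=(\Tkrho\ID+\lambda\,\mathrm{Id}_{\rkhs})^{-1}\Tkrho\xi$. (Equivalently, one avoids derivatives by completing the square as $J(g)=\langle A(g-A^{-1}\Tkrho\xi),\,g-A^{-1}\Tkrho\xi\rangle_{\rkhs}+\text{const}$, which exhibits $A^{-1}\Tkrho\xi$ as the unique minimizer directly.)

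Finally, to recover the stated form I would invoke the elementary push-through identity $\Tkrho(\ID\Tkrho+\lambda\,\mathrm{Id})=(\Tkrho\ID+\lambda\,\mathrm{Id})\Tkrho$, which follows from associativity; multiplying by the two bounded inverses yields $(\Tkrho\ID+\lambda\,\mathrm{Id})^{-1}\Tkrho=\Tkrho(\K_\rho+\lambda\,\mathrm{Id})^{-1}$ with $\K_\rho=\ID\Tkrho$. Transporting $g^*$ back into $L^2_{\rho}$ through $\ID$ and using that $\K_\rho$ commutes with $(\K_\rho+\lambda\,\mathrm{Id})^{-1}$ then gives $\ID g^*=\ID\Tkrho(\K_\rho+\lambda\,\mathrm{Id})^{-1}\xi=(\K_\rho+\lambda\,\mathrm{Id})^{-1}\K_\rho\,\xi$, as claimed. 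The main obstacle is bookkeeping rather than depth: one must keep the two spaces $\rkhs$ and $L^2_{\rho}$, and the two readings of the symbol \emph{inclusion}/\emph{identity}, carefully apart. The minimizer genuinely lives in $\rkhs$, and the clean $L^2_{\rho}$-operator formula only emerges once the push-through identity moves the resolvent from the $\rkhs$ side to the $L^2_{\rho}$ side.
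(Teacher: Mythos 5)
Your proof is correct. Note that the paper itself states this lemma without proof, treating it as classical (cf.\ \citep[Chapter~8]{cucker_learning_2007}); the natural derivation consistent with the paper's formal conventions would use the relation \eqref{eq:relation-rkhs-norm}, $\|g\|_\rkhs^2=\iprod{g}{\K_\rho^{-1}g}_{L^2_\rho}$, to differentiate the objective entirely in $L^2_\rho$, obtaining the normal equation $(\ID+\lambda\K_\rho^{-1})g^*=\xi$ and then formally inverting to get $g^*=(\K_\rho+\lambda\ID)^{-1}\K_\rho\,\xi$. Your route is genuinely different and strictly more careful: by keeping the minimization on $\rkhs$, where the operator $A=\Tkrho\ID+\lambda\,\mathrm{Id}_\rkhs=\ID^*\ID+\lambda\,\mathrm{Id}_\rkhs$ is bounded, self-adjoint, and bounded below by $\lambda$, you get existence, uniqueness, and bounded invertibility for free (even when the inclusion $\ID$ fails to be injective, where the $L^2_\rho$-side formal computation is delicate), and you avoid the unbounded inverse $\K_\rho^{-1}$ altogether --- a nontrivial point since the paper's Theorem~\ref{thm:rkhs-int-operator} only guarantees that $\K_\rho$ is compact, so $\K_\rho^{-1}$ is unbounded whenever it exists at all. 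The push-through identity $(\Tkrho\ID+\lambda\,\mathrm{Id})^{-1}\Tkrho=\Tkrho(\K_\rho+\lambda\,\mathrm{Id})^{-1}$ is exactly the right bridge between the $\rkhs$-side normal equation and the stated $L^2_\rho$-side resolvent formula, and your closing remark correctly distinguishes the minimizer $g^*\in\rkhs$ from its image $\ID g^*\in L^2_\rho$, which is what the displayed closed form actually denotes. In short: the paper's implicit argument buys brevity and notational uniformity with its other formal manipulations; yours buys rigor at the cost of two-space bookkeeping, with no gaps.
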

To set the stage for our derivation later, we establish below
an alternative optimization formulation of the KRR solution.
\begin{lemma}
    [Alternative optimization problem of KRR estimation]
    The KRR solution~\eqref{eq:opt-regression-sol} coincides with the solution of the optimization problem
    \begin{align*}
        \inf_f
        \biggl\{
                \iprod{f -  \xi}{\K_\rho (f -  \xi)}_{L^2(\rho)}  + 
            \lambda \| f \|^2_{L^2(\rho)}
            \biggr\}
            .
    \end{align*}
    \label{lm:krr-alter}
\end{lemma}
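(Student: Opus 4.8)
The plan is to exploit the strict convexity of the alternative objective and to characterize its unique minimizer through a stationarity condition, then identify that minimizer with the closed-form KRR solution $g^*$ of \eqref{eq:opt-regression-sol}. First I would record that the functional
\[
J(f) := \iprod{f - \xi}{\K_\rho(f - \xi)}_{L^2(\rho)} + \lambda\|f\|^2_{L^2(\rho)}
\]
is a quadratic form on $L^2(\rho)$ whose leading part is $\iprod{f}{(\K_\rho + \lambda\ID)f}_{L^2(\rho)}$ and whose affine part is linear in $f$. Since $\K_\rho$ is bounded, positive, and self-adjoint by Theorem~\ref{thm:rkhs-int-operator}, and $\lambda>0$, the operator $\K_\rho+\lambda\ID$ is bounded below by $\lambda\ID$; hence $J$ is strictly convex and coercive on $L^2(\rho)$, so it admits a unique minimizer $f^*$ characterized by the vanishing of its Gâteaux derivative.

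The second step is to compute that first variation. Using self-adjointness of $\K_\rho$, for any test direction $h\in L^2(\rho)$ one gets
\[
\frac{\dd}{\dd\epsilon}J(f + \epsilon h)\Big|_{\epsilon=0}
= 2\,\iprod{\K_\rho(f - \xi) + \lambda f}{h}_{L^2(\rho)}.
\]
Setting this to zero for all $h$ yields the Euler--Lagrange equation $(\K_\rho + \lambda\ID)f = \K_\rho\xi$. Because the spectrum of $\K_\rho+\lambda\ID$ lies in $[\lambda,\infty)$, this operator is boundedly invertible, so the stationarity condition has the unique solution $f^* = (\K_\rho + \lambda\ID)^{-1}\K_\rho\xi$, which is exactly $g^*$. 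Convexity guarantees that this stationary point is the global minimizer, completing the identification.

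There is no deep obstacle here; the only care required is functional-analytic bookkeeping. Specifically, I must confirm that $\K_\rho$ is a genuine bounded self-adjoint positive operator on $L^2(\rho)$ (so that the quadratic-form manipulations and the inversion of $\K_\rho+\lambda\ID$ are legitimate), and note that although $f$ in the alternative problem ranges over the full space $L^2(\rho)$ rather than only $\rkhs$, the minimizer automatically lands in $\rkhs=\K_\rho^{1/2}(L^2(\rho))$ since it lies in the range of $\K_\rho$, so the two formulations select the same element. An equivalent route, if one prefers to avoid differentiating, is to complete the square by writing $J(f) = \iprod{f-f^*}{(\K_\rho+\lambda\ID)(f-f^*)}_{L^2(\rho)} + \text{const}$ with $f^*$ as above, which exhibits $f^*$ as the minimizer directly.
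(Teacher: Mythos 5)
Your proof is correct. Note that the paper itself states Lemma~\ref{lm:krr-alter} without proof (it is invoked later only in Remark~\ref{rem:KRR.vs.Infimal}), so there is no authorial argument to compare against; your route --- strict convexity and coercivity of the quadratic functional via $\K_\rho + \lambda\ID \geq \lambda \ID$, the Euler--Lagrange equation $(\K_\rho + \lambda\ID)f = \K_\rho\xi$, and bounded invertibility --- is precisely the standard verification the authors presumably had in mind, and it matches the closed form \eqref{eq:opt-regression-sol}, whose own first-order condition $(\ID + \lambda\K_\rho^{-1})g = \xi$ rearranges to the same equation. Two minor points you handle correctly but could make explicit: the identity $(\K_\rho+\lambda\ID)^{-1}\K_\rho = \K_\rho(\K_\rho+\lambda\ID)^{-1}$ (the operators commute) is what puts the minimizer in the range of $\K_\rho$ and hence in $\rkhs = \K_\rho^{1/2}(L^2_\rho)$, and the lemma asserts coincidence of the \emph{minimizers} only --- the optimal values of the two problems differ in general, which your argument correctly does not claim.
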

One prominent applications of kernel methods to machine learning is the kernel maximum mean discrepancy (MMD) \citep{gretton2012kernel}, for measuring the discrepancy between probability measures.
It is a special case of the integral probability metric (IPM) family
that is defined as a weak norm
\begin{align}        
    \mmd( \mu, \nu) := \sup_{\|f\|_\rkhs \leq 1}\int f \dd (\mu-\nu)
    ,
    \label{eq:ipm-mmd-def-main}
\end{align}
where $\rkhs$ is the RKHS associated with the kernel $k$.
MMD is a metric on the space of probability measures if the kernel is positive definite.
Furthermore, its advantage lies in its simple structure as a Hilbert space norm
\begin{align}
    \mmd^2( \mu, \nu) = \|\K \left(\mu - \nu\right)\|_{\rkhs}^2
    =\int \int k(x, x') \dd (\mu - \nu)(x) \dd (\mu - \nu)(x')
    .
\end{align}
This form allows for efficient computation of the MMD via Monte Carlo sampling.
It can also be viewed as a form of interaction energy~\citep{ambrosio2008gradient} that dissipates in gradient-flow geometry, e.g., the Wasserstein flow of the MMD energy~\citep{arbel_maximum_2019}.
One of our contributions is to provide a precise relation between the MMD and the Fisher-Rao geometry from two different perspectives in Theorem~\ref{thm:mmd-fr-dynamic} and Proposition~\ref{prop:kernel-discrepancies}.

\section{Wasserstein space and gradient flows}
\label{sec:wasserstein}
We first visit the gradient flows in the Wasserstein type dissipation geometry.
First, we revisit the Stein gradient flow in Section~\ref{sec:w2stein},
where we establish the gradient flow structure and dissipation
potentials for Stein and its regularized version.
In Section~\ref{sec:dk-w2}, we derive a new Wasserstein-type gradient-flow geometry by drawing the parallel to the relation between Fisher-Rao and MMD.
We further show a related Cahn-Hilliard dissipation geometry.

\subsection{Gradient structure for the (regularized) Stein gradient flow}
\label{sec:w2stein}
The Stein geometry~\citep{duncan2019geometry}
has been studied in the context of sampling for statistical inference.
We first write down explicitly the gradient structure for the Stein gradient system by providing the dissipation potentials for the Stein geometry, implied in the dynamic formulation by \citet{duncan2019geometry}
\begin{align}
R_{\mathrm{Stein}}(\rho, u)
=\frac12\|  u \|_{\mathrm{Stein}, \rho}^2, 
\quad
\|  u \|_{\mathrm{Stein}, \rho}^2 
:=\inf \left\{  \|\K_\rho  v\|^2_\rkhs:  u=-\DIV (\rho\cdot \ \K_\rho v ) \right\}.
\label{eq:stein-norm}
\end{align}
We refer to $ \|  u \|_{\mathrm{Stein}, \rho}$ as the primal Stein norm
dual dissipation potential.
By Fenchel-duality, we find the velocity-kernelized dual dissipation potential of the Stein gradient flow
\begin{align}
R_{\mathrm{Stein}}^*(\rho, \xi)
=\frac12\iprod{\nabla \xi}{\K_\rho\nabla \xi}_{L^2_{\rho}}
=\frac12\| \K_\rho\nabla \xi \|^2_\rkhs
.
\end{align}
Using this gradient structure,
we obtain Stein (variational) gradient-flow equation
\begin{align}
    \partial_t \mu
    =
\DIV (\mu\cdot  \K_\mu\nabla  \dFdmu )
.
\end{align}

\paragraph*{Regularized Stein gradient flow as an approximation to Wasserstein gradient flow}
        Following a similar route as in the kernelized Fisher-Rao setting, we consider the regularized primal and dual dissipation potential
\begin{align}
    \mathcal R_{\operatorname{\Lstein}}(\rho, u) 
    &:=
    \mathcal R_{W_2} (\rho, u)
    +
    \lambda  R_{\mathrm{Stein}} (\rho, u)
    =
    \frac12\|u\|_{H^{-1}(\rho)}^2.
    + 
    \frac{\lambda}{2} \| u \|_{\mathrm{Stein}, \rho}^2
    \, \text{ for }u=\DIV(\rho v)
    \nonumber
    \\
    &=
    \frac12
    \left(
        \iprod{v}{v}_{L^2_{\rho}}
        +
        \iprod{v}{\K_\rho^{-1}v}_{L^2_{\rho}}
    \right)
    =
    \frac12
    \iprod{v}{\K_\rho^{-1}(\K_\rho+ \lambda \ID)v }_{L^2_{\rho}},
    \label{eq:stein-reg-dissipation-primal}
    \\
    \mathcal R_\mathrm{\Lstein}^*(\rho, \xi) &=
    \frac{1}{2}
    \iprod{\nabla \xi}
    {
        \left(
            \K_\rho + \lambda \ID
            \right)
            ^{-1}
            \K_\rho
        \nabla\xi
        }_{L^2_{\rho}}
    ,
    \label{eq:stein-reg-dissipation}
\end{align}
resulting in the 
following approximate Wasserstein gradient system.
\begin{proposition}
    [Kernel-approximate Wasserstein gradient flow with KRR velocity field]
The generalized gradient system $(\mathcal P,F, \Lstein)$ generates the gradient flow equation
where the approximate growth field $r$ is given by the KRR solution 
\begin{align}
    \label{eq:KRR-stein}
        \dot {\mu_t} = \DIV ({\mu_t}\cdot v_t),
        \quad 
    v_t = \argmin_f
    \biggl\{
            \|f - \nabla \dFdmut\|^2_{L^2_{\mu_t}}  + 
        \lambda \| f \|^2_{\rkhs}
        \biggr\}
        .
\end{align}
Specifically, the closed-form solution for the velocity field is
\begin{align}
    v_t = (\K_\mu +\lambda\ID)^{-1}\K_\mu \nabla  \dFdmu
    .
    \label{eq:stein-reg-gfe}
\end{align}
\end{proposition}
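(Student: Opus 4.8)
The plan is to read the gradient-flow equation directly off the dual form in Definition~\ref{def:GradSystem}, $\dot u = \rmD_\xi\calR^*\big(u,-\rmD F(u)\big)$, using the regularized dual dissipation potential~\eqref{eq:stein-reg-dissipation}, and then to identify the resulting closed-form velocity field with the KRR minimizer via the closed-form solution~\eqref{eq:opt-regression-sol}. Writing $M_\rho := (\K_\rho+\lambda\ID)^{-1}\K_\rho$, the dual potential is the self-adjoint quadratic form $\calR^*_{\Lstein}(\rho,\xi) = \frac12\iprod{\nabla\xi}{M_\rho\nabla\xi}_{L^2_{\rho}}$ (self-adjointness of $M_\rho$ follows since it is a spectral function of the self-adjoint $\K_\rho$ from Theorem~\ref{thm:rkhs-int-operator}), so the whole derivation reduces to differentiating a quadratic form and performing one integration by parts.

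First I would compute the partial Legendre derivative $\rmD_\xi\calR^*_{\Lstein}$: for a test variation $\eta$, $\frac{\dd}{\dd\epsilon}\calR^*_{\Lstein}(\rho,\xi+\epsilon\eta)\big|_{\epsilon=0} = \iprod{\nabla\eta}{M_\rho\nabla\xi}_{L^2_{\rho}}$, and integrating by parts to move the gradient off $\eta$ yields the Onsager operator $\bbK(\rho)\xi = -\DIV\big(\rho\,M_\rho\nabla\xi\big)$. This is precisely the regularized analogue of the Wasserstein structure $\bbK(\mu)\xi = -\DIV(\mu\nabla\xi)$ of Example~\ref{ex:Wasserstein geodesics}, with the scalar transport weight replaced by the operator $M_\mu$. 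Substituting $\xi = -\rmD F(\mu) = -\dFdmu$ into $\dot\mu = \rmD_\xi\calR^*_{\Lstein}(\mu,\xi)$ gives $\dot\mu = \DIV\big(\mu\,M_\mu\nabla\dFdmu\big) = \DIV(\mu\,v_t)$ with $v_t = (\K_\mu+\lambda\ID)^{-1}\K_\mu\nabla\dFdmu$, which is exactly~\eqref{eq:stein-reg-gfe}.

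It then remains to recognize this velocity field as a KRR estimator. Applying the closed-form~\eqref{eq:opt-regression-sol} with reference measure $\rho=\mu$ and target function $\nabla\dFdmu\in L^2_\mu$ gives $\argmin_f\{\|f-\nabla\dFdmu\|^2_{L^2_\mu} + \lambda\|f\|^2_\rkhs\} = (\K_\mu+\lambda\ID)^{-1}\K_\mu\nabla\dFdmu = v_t$, establishing the variational characterization of the velocity field claimed in~\eqref{eq:KRR-stein}. As a consistency check I would also confirm that~\eqref{eq:stein-reg-dissipation} is the genuine Fenchel conjugate of the primal~\eqref{eq:stein-reg-dissipation-primal}: inserting the representation $u=\DIV(\rho v)$ and carrying out the quadratic maximization in $v$ returns both the value in~\eqref{eq:stein-reg-dissipation} and a maximizer reproducing $v_t$, so the same velocity field emerges from the primal side.

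The main obstacle is the formal status of the operator calculus rather than any deep analytic difficulty. Since $\K_\rho$ is compact (Theorem~\ref{thm:rkhs-int-operator}) it is not boundedly invertible, so expressions such as $\K_\rho^{-1}(\K_\rho+\lambda\ID)$ in~\eqref{eq:stein-reg-dissipation-primal} and the inverse $(\K_\rho+\lambda\ID)^{-1}$ must be interpreted through the spectral decomposition; the regularization $\lambda\ID$ is exactly what makes $\K_\rho+\lambda\ID$ boundedly invertible and renders the Legendre transform, the implicit inf-convolution on the primal side, and the integration by parts well defined. The remaining care is bookkeeping: tracking signs consistently through $\dot u = -\bbK(u)\rmD F(u)$ and discarding boundary terms in the integration by parts, justified on $\bar\Omega$ under the standard no-flux convention.
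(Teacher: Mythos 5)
Your derivation is correct and follows essentially the same (largely implicit) argument as the paper: the proposition is read off from the regularized dual dissipation potential \eqref{eq:stein-reg-dissipation} via the dual form $\dot u = \rmD_\xi\calR^*(u,-\rmD F(u))$ of Definition~\ref{def:GradSystem}, and the velocity field $(\K_\mu+\lambda\ID)^{-1}\K_\mu\nabla\dFdmu$ is then identified with the KRR minimizer through the closed form \eqref{eq:opt-regression-sol}. Your additional checks --- self-adjointness of the spectral function $M_\rho$, the Fenchel-conjugacy verification between \eqref{eq:stein-reg-dissipation-primal} and \eqref{eq:stein-reg-dissipation}, and the spectral interpretation of $\K_\rho^{-1}$ made rigorous by the $\lambda\ID$ regularization --- merely spell out what the paper leaves formal in its construction preceding the proposition.
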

It is worth noting that the above approximation fits the setting of learning the diffusion model~\eqref{eq:reverse-sde-diffusion-model}, where the velocity is approximated using KRR.
Later, we rigorously justify this approximation.

\subsection{De-Stein: de-kernelized Wasserstein geometry}
\label{sec:dk-w2}
As we have witnessed in Section~\ref{sec:mmd-reg}, 
the MMD bears the intuition of the \emph{de-kernelized and flat Fisher-Rao distance}.
What then is the de-kernelized (flat) Wasserstein geometry, as MMD is to Fisher-Rao?
Our starting point is the following de-kernelized $H^{-1}$-type norm
\begin{align*}
    \|  u \|_{\textrm{De-Stein}}^2 
    :=
    \inf \left\{  \|  v\|^2_\rkhs:  
    u=-\DIV (\rho\cdot \ \K_\rho^{-1} v ) \right\}
    =
    \inf \left\{  \|  v\|^2_\rkhs:  
    u=-\DIV (\cdot \ \K^{-1} v ) \right\}
    .
\end{align*}
Similar to the gradient structure of MMD,
this quantity no longer depends on the measure $\rho$.
As a consequence of the above formulation, the primal and dual dissipation potential for the approximation system are \emph{state-independent}
\begin{align*}
    R_{\textrm{De-Stein}}(u)
    =\frac12\|  u \|_{\textrm{De-Stein}}^2, 
    \quad
    R^*_{\textrm{De-Stein}}(\xi)
    =\frac12\|\K^{-\frac12} \nabla \xi \|_{L^2}^2
    =\frac12\| \nabla \xi \|_{\rkhs}^2
    .
\end{align*}
Therefore, like the MMD, this geometry is a flat geometry and its gradient flow equation is
\begin{align*}
    \partial_t \mu=-\DIV ( \ \K^{-1}\nabla \dFdmu ) 
    .
\end{align*}
We can now write down the dynamic formulation of a new distance, which we term the 
\emph{\underline{de}-kernelized Wasser\underline{stein}} (De-Stein) distance\footnote{
Due to the presence of static dual representation in Proposition~\ref{thm:sobolev-ipm},
this distance should be more appropriately termed
kernel-Sobolev distance.
However, similar terms have already been used in the machine learning literature to denote a heuristically regularized $H^{-1}$ geometry.
}.
\begin{equation}
    \label{eq:bb-formula-dekern-w2}
    \begin{aligned}
        \operatorname{De-Stein}^2(\mu,\nu)
        =
        \inf 
        \left\{\int_0^1
        \|  \nabla \xi_t \|^2_{\rkhs}
        \dd t
        \middle|
         \partial_t u=-\DIV ( \K^{-1}\nabla \xi_t )
         ,
         u(0) =  \mu,
        u(1)=  \nu\right\}
        .
    \end{aligned}
\end{equation}
The Hamiltonian formulation for De-Stein is
\begin{equation*}
    \left\{
    \begin{aligned}
        \dot \mu &= -\DIV ( \K^{-1}\nabla \xi) ,\\
        \dot \xi &= 0.
    \end{aligned}
    \right.
\end{equation*}
where the Hamilton-Jacobi equation in the Wasserstein geometry is replaced by the ``static'' adjoint variable $\xi$ due to the state-independent dissipation potential~\eqref{eq:bb-formula-dekern-w2}.
More concretely,
following the derivation of the relation between the 
static Kantorovich dual formulation and dynamic Benamou–Brenier formulation (see, \eg \citep{otto2000generalization}),
we now derive the static defition of the metric.
Different from the Wasserstein and Stein setting, we only need one static (\ie time-independent) test function,
because of the adjoint (geodesic) equation in the Hamiltonian dynamics $\partial_t \zeta = 0$.
Consequently, we obtain a simple static formulation as a weak norm, similar to an IPM.
\begin{proposition}
    [Static dual formulation of the De-Stein distance]
The dynamic formulation of the De-Stein distance~\eqref{eq:bb-formula-dekern-w2} is equivalent to the static dual formulation
        \begin{align}
            \operatorname{De-Stein}^2(\mu,\nu)
            =    \sup_\zeta
            \left\{
                \int \zeta\dd (\mu - \nu)
                -\frac14
                 \|\nabla \zeta\|^2_{\rkhs}
            \right\}
            .
            \label{eq:sobolev-rkhs-static-dual}
        \end{align}
        \label{thm:sobolev-ipm}
\end{proposition}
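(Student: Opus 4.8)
The plan is to derive \eqref{eq:sobolev-rkhs-static-dual} from the dynamic problem \eqref{eq:bb-formula-dekern-w2} by convex duality, mirroring the classical Benamou--Brenier to Kantorovich passage referenced before the statement. First I would introduce a space--time Lagrange multiplier $\phi_t$ for the constraint $\partial_t u_t + \DIV(\K^{-1}\nabla\xi_t)=0$ and form the Lagrangian
\[
\mathcal L = \int_0^1 \|\nabla\xi_t\|^2_\rkhs\,\dd t + \int_0^1\!\!\int \phi_t\,\dd\big(\partial_t u_t + \DIV(\K^{-1}\nabla\xi_t)\big)\,\dd t,
\]
with the endpoint data $u(0)=\mu$, $u(1)=\nu$ held fixed. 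Integrating by parts in time produces the boundary contribution $\int\phi_1\,\dd\nu - \int\phi_0\,\dd\mu$ together with the interior term $-\int_0^1\!\int \partial_t\phi_t\,\dd u_t\,\dd t$, while integrating by parts in space and using self-adjointness of $\K^{-1}$ converts the flux pairing into $-\int_0^1 \iprod{\nabla\phi_t}{\nabla\xi_t}_\rkhs\,\dd t$, via the identity $\iprod{\nabla\phi}{\K^{-1}\nabla\xi}_{L^2}=\iprod{\nabla\phi}{\nabla\xi}_\rkhs$ coming from \eqref{eq:relation-rkhs-norm}.

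Next I would exchange the infimum over $(u,\xi)$ with the supremum over $\phi$. The inner infimum over the interior values $u_t$ is finite only if $\partial_t\phi_t=0$, which forces $\phi_t\equiv\zeta$ constant in time; this is exactly the static adjoint equation $\dot\xi=0$ already visible in the Hamiltonian formulation preceding the statement, and it collapses the boundary term to $\int\zeta\,\dd(\nu-\mu)$. With $\phi_t\equiv\zeta$, the remaining infimum over $\xi_t$ decouples pointwise in $t$ into the quadratic minimization of $\|\nabla\xi_t\|^2_\rkhs - \iprod{\nabla\zeta}{\nabla\xi_t}_\rkhs$, whose minimizer $\nabla\xi_t=\frac12\nabla\zeta$ gives the Fenchel value $-\frac14\|\nabla\zeta\|^2_\rkhs$. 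Collecting these terms yields the objective $\int\zeta\,\dd(\mu-\nu)-\frac14\|\nabla\zeta\|^2_\rkhs$, the sign of the linear term being immaterial by the symmetry $\zeta\mapsto-\zeta$.

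To make this rigorous without invoking a full minimax theorem, I would prove the two matching inequalities directly. For the bound static $\le$ dynamic, take any admissible path $(u_t,\xi_t)$ and any test $\zeta$; the constraint and integration by parts give $\int\zeta\,\dd(\nu-\mu)=\int_0^1\iprod{\nabla\zeta}{\nabla\xi_t}_\rkhs\,\dd t$, and Young's inequality $\iprod{\nabla\zeta}{\nabla\xi_t}_\rkhs\le \|\nabla\xi_t\|^2_\rkhs+\frac14\|\nabla\zeta\|^2_\rkhs$ yields $\int\zeta\,\dd(\nu-\mu)-\frac14\|\nabla\zeta\|^2_\rkhs\le\int_0^1\|\nabla\xi_t\|^2_\rkhs\,\dd t$; since the right-hand side is unchanged and the left-hand side only flips the sign of its linear term under $\zeta\mapsto-\zeta$, taking the supremum over $\zeta$ recovers $\int\zeta\,\dd(\mu-\nu)$ and the infimum over paths gives the inequality. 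For the reverse inequality I would exhibit an optimal competitor: let $\zeta^*$ maximize the static functional, and set $\xi_t\equiv-\frac12\zeta^*$ together with $u_t=(1{-}t)\mu+t\nu$. The Euler--Lagrange condition of the static problem, $\mu-\nu=-\frac12\DIV(\K^{-1}\nabla\zeta^*)$, is exactly the statement that this pair is admissible, and a short computation shows its cost $\frac14\|\nabla\zeta^*\|^2_\rkhs$ equals the static value, so dynamic $\le$ static.

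The main obstacle is functional-analytic rather than algebraic: one must make sense of the unbounded inverse $\K^{-1}$ and fix the space over which $\zeta$ ranges, namely functions whose gradient lies in the RKHS so that $\|\nabla\zeta\|_\rkhs$ is finite. Existence and uniqueness of the maximizer $\zeta^*$ follow from strict concavity and coercivity of the static functional in this RKHS--gradient seminorm, but coercivity requires that $\mu-\nu$ belong to its topological predual; establishing this, together with the weak interpretation of the admissibility equation in the sense of the formal measure/density identification adopted in the Notation, is where the technical care is concentrated. The pointwise-in-$t$ decoupling of the $\xi_t$-minimization and the passage from the constrained dynamic problem to the unconstrained static supremum are otherwise routine once these spaces are pinned down.
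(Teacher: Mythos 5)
Your proposal is correct, and it reaches \eqref{eq:sobolev-rkhs-static-dual} by a route that overlaps with, but meaningfully differs from, the paper's. The paper's proof (in Section~\ref{sec:proof}) follows the Otto--Villani template: it pairs the dynamics with a time-dependent test function $\zeta_t$, completes the square in $\K^{-\frac12}\nabla\zeta_t$ and $\K^{-\frac12}\nabla\xi_t$, argues that finiteness of the infimum over $\mu_t$ forces $\partial_t \zeta_t \le 0$ with equality at optimality, and---because it normalizes the cost as $\frac12\int \|\nabla\xi_t\|^2_\rkhs\,\dd t$---must invoke the scaling Lemma~\ref{lmm:ipm-dual-scaling} to convert the resulting $-\frac12\|\nabla\zeta\|^2_\rkhs$ penalty into the $-\frac14$ of the statement. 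Your Lagrangian derivation is the same formal duality in different clothing: your pointwise Fenchel minimization $\nabla\xi_t = \frac12\nabla\zeta$ is the completing-the-square step in disguise, and keeping the unhalved cost yields the $-\frac14$ coefficient directly, bypassing the scaling lemma. What is genuinely different, and a gain in rigor over the paper's formal sup--inf exchange, is your second half: the ``static $\le$ dynamic'' bound via the constraint identity $\int \zeta\,\dd(\nu-\mu) = \int_0^1 \iprod{\nabla\zeta}{\nabla\xi_t}_\rkhs \dd t$ plus Young's inequality, and the explicit competitor $\xi_t \equiv -\frac12\zeta^*$, $u_t = (1{-}t)\mu + t\nu$, whose admissibility is exactly the Euler--Lagrange equation $\mu-\nu = -\frac12\DIV(\K^{-1}\nabla\zeta^*)$ (stated in the paper immediately after the proposition) and whose cost $\frac14\|\nabla\zeta^*\|^2_\rkhs$ I checked does equal the static value, using $\int \zeta^*\dd(\mu-\nu) = \frac12\|\nabla\zeta^*\|^2_\rkhs$. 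Two small caveats: the interior infimum forces $\partial_t\phi_t = 0$ only if $u_t$ can be perturbed in both signs (over nonnegative measures one first gets $\partial_t\phi_t \le 0$, as the paper notes, with equality only at optimality), and your reverse inequality is conditional on the existence of the maximizer $\zeta^*$, which you correctly identify as the functional-analytic crux given the unbounded $\K^{-1}$. Since the paper's own proof is avowedly formal on precisely these points, neither caveat places your argument below the paper's level of rigor, and your two-inequality structure is the more self-contained of the two.
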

The intuition is self-evident --- compared with the static dual of MMD, the regularization by the RKHS norm $\|\zeta\|_\rkhs$ is replaced by the RKHS norm of its gradient.
The Euler-Lagrange equation of the optimization problem in \eqref{eq:sobolev-rkhs-static-dual} is
$\frac12\DIV\left(- \K^{-1}\nabla \zeta\right) = \mu-\nu$.
Plugging it back into \eqref{eq:sobolev-rkhs-static-dual} and integrating by parts, 
we find
\begin{align*}
    \operatorname{De-Stein}^2(\mu,\nu)
    =
    {\inf _v }
    \left\{     \|v\|^2_{\rkhs}
    \,\ST\,
    \DIV\left(- \K^{-1}v\right) = 2(\mu-\nu)
    \right\}
    ,
\end{align*}
which is simply a kernel-weighted $H^{-1}$ norm.

\subsection{Flattened Wasserstein, Cahn-Hilliard, and Sobolev discrepancy}
\label{sec:lin-w}
Mirroring the development of the flattened Fisher-Rao setting in Section~\ref{sec:lin-fr},
we now focus on a similar construction in the Wasserstein setting.
Similar to the De-Stein setting,
we obtain the state-independent dissipation potentials
\begin{align*}
    R_{\overline{W}_2}(u)
    =\frac12\|  u \|_{H^{-1}_{\omega}}^2, 
    \quad
    R^*_{\overline{W}_2}(\xi)
    =\frac12\| \nabla \xi \|_{L_{\omega}^2}^2
    ,
\end{align*}
where the reference measure $\omega$ is fixed.
Its dynamic formulation is given by
\begin{equation}
    \label{eq:bb-formula-w2-flattened}
    \begin{aligned}
        \overline{W}_{\omega}^2(\mu,\nu)
        =
        \min 
        \left\{\int_0^1
        \|  \nabla \xi_t \|^2_{L^2_\omega}
        \dd t
        \middle|
         \partial_t u=-\DIV ( \omega\cdot \nabla \xi_t )
         ,
         u(0) =  \mu,
        u(1)=  \nu\right\}
        .
    \end{aligned}
\end{equation}
The adjoint equation $\partial \zeta_t = 0$ implies the static dual formulation
\begin{align}
    \overline{W}_{\omega}^2(\mu,\nu)
    =    \sup_\zeta
    \left\{
        \int \zeta\dd (\mu - \nu)
        -\frac14
         \|\nabla \zeta\|^2_{L^2_\omega}
    \right\}
    .
    \label{eq:w2-flattened-static-dual}
\end{align}
Similar to the setting after Proposition~\ref{thm:sobolev-ipm},
we find
the static dual formulation is equivalent to
\begin{align*}
    \overline{W}_{\omega}^2(\mu,\nu)
    =
    \inf_\xi
    \left\{     \|\xi\|^2_{L^2_\omega}
    \,\ST\,
    \frac12
    \DIV (\xi \cdot \omega)  = \mu-\nu\right\}
    ,
\end{align*}
which coincides with the weighted $H^{-1}_{\omega}$ norm.
If the reference measure $\omega$ is chosen as the Lebesgue measure,
then $\frac12 \Delta \zeta =  \left(\mu-\nu\right)$.
Consequently, we obtain the static formulation
\begin{align}
    \overline{W}^2(\mu,\nu)
    =
    \|\mu - \nu \|^2_{H^{-1}}
    =
    -
    \int \left(\mu-\nu\right)\Delta^{-1} \left(\mu-\nu\right)
    \dd x
    ,
\end{align}
which is equivalent to
the
classical Cahn-Hilliard $H^{-1}$ Hilbert space
in Example~\ref{ex:allen-cahn-hilliard}.
If the reference measure $\omega$ is chosen as $\omega=\nu$,
the flattened Wasserstein distance is 
the $H^{-1}_\nu$ norm,
which is equivalent to the Sobolev discrepancy proposed by \citet{mrouehSobolevDescent2019}.

\section{Fisher-Rao space and gradient flows}
\label{sec:fisher-rao}
This section addresses the main subject of this paper, the Fisher-Rao-type gradient flow geometry.
We first study the kernelized Fisher-Rao geometry in Section~\ref{sec:kern-FR} and provide the principled gradient structure of the resulting kernelized Fisher-Rao gradient flow. Its growth field is an approximation to that of the Fisher-Rao.
In Section~\ref{sec:mmd-reg}, we perform the inverse operation to
``de-kernelize'' the Fisher-Rao geometry.
Consequently, we obtain a flat (in the sense of Riemannian manifold) gradient-flow geometry, which we show is equivalent to the MMD.
Due to the technical nature of the gradient flow theory, we refer to

\subsection{Kernel approximation of Fisher-Rao gradient flows}
\label{sec:kern-FR}
In the machine learning literature, the term \emph{kernelization} has been used
in many contexts. 
We first make precise what kernelization entails in this paper through the following operation on the dual dissipation potentials of gradient systems.
\begin{definition}
[Kernelization of gradient systems]
    \label{def:f-v-kernelize}
Given a gradient system defined by $(X, F, \mathcal R^*)$, where $\mathcal R^*$ is the dual dissipation potential, we say its force-kernelization counterpart is $(X, F, \mathcal R^*_{F-k})$, where
the force-kernelized dual dissipation potential is defined by 
\begin{align}
    \mathcal R^*_{F-k}(u,  \xi) :=\mathcal R(u, \K_u^\frac12\, \xi)
    .
\end{align}
If the original $\mathcal R^*$ depends on the generalized force $\xi$ only through its gradient $\nabla \xi$, denoted by  $\widetilde{\calR^*}(\nabla \xi) = \mathcal R^*(\xi)$. Then, its velocity-kernelization is $(X, F, \mathcal R^*_{V-k})$,
\begin{align}
    \mathcal R^*_{V-k}(u, \xi):= \widetilde{\calR^*}(u, \K_u^\frac12\, \nabla\xi)
    .
\end{align}
\end{definition}
Equivalently for the Fisher-Rao and Wasserstein type flows,
we can define kernelization using the operator $\calT_{k,u}$ in Theorem~\ref{thm:rkhs-int-operator} as a change the dissipation potentials
\begin{align*}
    \text{Fisher-Rao: }
    \calR^*(\rho, \xi) = \frac12\iprod{\xi}{\xi}_{L^2_\rho} 
    &\longrightarrow \calR_k^*(\rho, \xi) = \frac12\iprod{\Tkrho\xi}{\Tkrho\xi}_{\rkhs} , \\
    \text{Wasserstein: }
    \calR^*(\rho, \xi) = \frac12\iprod{\nabla \xi}{\nabla \xi}_{L^2_\rho} 
    &\longrightarrow \calR_k^*(\rho, \xi) = \frac12\iprod{\Tkrho\nabla \xi}{\Tkrho\nabla \xi}_{\rkhs} .
\end{align*}
This kernelization relation can also be stated using the Riemannian tensor $\bbG$ and the Onsager operator $\bbK$; see Theorem~\ref{thm:kernelization-riemann}.
Note that velocity-kernelization of Wasserstein gradient flow, under the name of Stein geometry, has already been investigated, \eg  by \cite{duncan2019geometry}.
As overviewed in Section~\ref{sec:intro},
we will construct systems in the kernelization relation illustrated in Figure~\ref{fig:kernelization-relations-FR}.

Using the above definition for kernelization,
we now construct a new geometry by force-kernelizing the Fisher-Rao gradient system.
\begin{definition}
[Dynamic formulation of kernelized Fisher-Rao distance]
The kernelized Fisher-Rao distance is defined by the following dynamic formulation
\begin{equation}
    \label{eq:bb-formula-fr-kernel}
        \begin{aligned}
            \FRk^2(\mu,\nu)
    =
        \min_{\mu, \xi_t} 
        \left\{\int_0^1
        \| {{\K_\mu \xi_t}}\|^2_{\rkhs}
        \dd t
        \ \
        \middle \vert \ 
        \dot \mu = -  \mu {\K_\mu  \xi_t},
        \
       \mu(0) = \mu,
        \
        \mu(1)= \nu\right\}
        .
        \end{aligned}
    \end{equation}
\end{definition}
From the Hamiltonian perspective in \eqref{eq:intro-gfe-hamilton}, we can derive the
geodesic equation
\begin{align}
    \left\{
    \begin{aligned}
        \dot \mu & = - \mu \xi ,\\
        \dot \xi & = {{-}} \xi \cdot \K_{\mu_t} \xi .
    \end{aligned}
    \right.
    \label{eq:adjoint-fr-kernel}
    \tag{Geod-FR-k}
\end{align}
However, is important to note that the geodesic equation above is only the necessary condition for optimality.
That is, we have not proved whether its solution exists, in contrast to the Hamiltonian system \eqref{eq:intro-FR-hamilton}.
This is due to the coupling introduced by the state-dependent integral operator $\K_{\mu_t}$.
Furthermore, as a technical point, the integral operator used here is defined w.r.t a non-negative measure rather than a probability measure; see, \eg \citep{conway1985course}.
In the gradient structure of the kernelized Fisher-Rao gradient system,
the corresponding primal and dual dissipation potentials are
\begin{align}
\mathcal R_{\FRk}(\rho, u)
= \frac12 \|  \frac{\delta u}{\delta \rho}\|^2_\rkhs
,
\quad
\mathcal  R^*_{\FRk}(\rho, \xi)
= \frac12\iprod{\xi}{\K_\rho  \xi}_{L^2_{\rho}}
= \frac12 \|\K_\rho  \xi\|^2_\rkhs
.
\label{eq:kernel-fr-prima-dual-dissip}
\end{align}
Therefore,
the gradient-flow equation of the $\FRk$ gradient system
$(\Mplus, F, \FRk)$
is the reaction equation \emph{kernelized growth field}
\begin{align}
    \dot \mu = - \mu \K_\mu  \dFdmu
    .
\end{align}
Going beyond kernelization,
we derive
the approximation to the original Fisher-Rao dynamics by constructing the following regularized dissipation geometry,
\ie
adding the kernelized Fisher-Rao dissipation potential $\mathcal R_{\FRk}$~\eqref{eq:kernel-fr-prima-dual-dissip}
to that of the pure Fisher-Rao $\mathcal R_\FR$
\eqref{eq:fr-hamilton}
\begin{align}
        \calR_{\LFRk} (\rho,  \dot u) &:=
        \mathcal R_\FR + \lambda\cdot \mathcal R_{\FRk}
        =
        \frac12 \big\|\frac{\delta \dot u}{\delta {\rho}}\big\|^2_{L^2_{\rho}}
        +
        \frac{\lambda}2 \big\|  \frac{\delta \dot u}{\delta \rho} \big\|_\rkhs^2
        =
        \frac{1}{2}
        \iprod{\frac{\delta \dot u}{\delta \rho}}
        {
            \K_\rho^{-1}\left(
                \K_\rho {+} \lambda \ID
            \right)
            \frac{\delta \dot u}{\delta \rho}
            }_{L^2_{\rho}}
        ,
        \nonumber
        \\
        \calR_{\LFRk}^*(\rho, \xi) &=
        \frac{1}{2}
        \iprod{\xi}
        {
            \left(
                \K_\rho {+} \lambda \ID
                \right)
                ^{-1}
                \K_\rho
            \xi
            }_{L^2_{\rho}}
        .
        \label{eq:fr-kernel-reg-dual}
\end{align}
    Then, we obtain the following
        approximate gradient-flow equation
        with an approximate growth field given by the kernel ridge regression solution 
        $(\K_\rho + \lambda\ID)^{-1}  \K_\rho \dFdmu$.
\begin{proposition}
        [Kernel-approximate Fisher-Rao gradient flow]
    The generalized gradient system $(\Mplus,F, \calR_{\LFRk})$ generates the gradient flow equation where the approximate growth field $r$ is given by the KRR solution 
    \begin{align}
    \dot {\mu_t} = - {\mu_t}\cdot r_t , \quad
        r_t = \argmin_f
        \biggl\{
                \left\|f -  \dFdmut\right\|^2_{L^2_{\mu_t}}  + 
            \lambda \| f \|^2_{\rkhs}
            \biggr\}
            .
            \label{eq:gfe-krr-FRk}
    \end{align}
    Specifically, the closed-form solution for the growth field is
    \begin{align}
        \dot \mu = - \mu\cdot (\K_\mu +\lambda\ID)^{-1}\K_\mu  \dFdmu
        .
        \label{eq:fr-krr-reaction}
    \end{align}
    \end{proposition}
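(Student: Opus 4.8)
The plan is to read the gradient-flow equation directly off the dual form in Definition~\ref{def:GradSystem}, namely $\dot u = \rmD_\xi\calR^*(u, -\rmD F(u))$, so that the whole claim reduces to two identifications: the driving force $\rmD F(\mu)$ and the Onsager operator $\bbK(\mu)$ attached to $\calR^*_{\LFRk}$. The force is immediate: by the definition of the first variation in \eqref{eq:first-var-def}, the Fréchet differential of $F$ on $\Mplus$ is represented by the function $\dFdmu$, so that $\rmD F(\mu) = \dFdmu$.

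The core step is to extract the Onsager operator from the quadratic dual dissipation potential \eqref{eq:fr-kernel-reg-dual}. Since $\calR^*$ is quadratic, I match $\calR^*_{\LFRk}(\mu,\xi) = \frac12\langle\xi,\bbK(\mu)\xi\rangle$ against the given $\frac12\iprod{\xi}{(\K_\mu+\lambda\ID)^{-1}\K_\mu\xi}_{L^2_\mu}$. The only point demanding care is that the pairing in \eqref{eq:fr-kernel-reg-dual} is the weighted $L^2_\mu$ inner product, whereas the Onsager operator in Definition~\ref{def:GradSystem} is defined against the flat (unweighted) $L^2$ duality used throughout the paper. Writing $A_\mu := (\K_\mu + \lambda\ID)^{-1}\K_\mu$ and rewriting \[ \iprod{\xi}{A_\mu\xi}_{L^2_\mu} = \int\xi\,(A_\mu\xi)\,\rmd\mu = \langle\xi,\, \mu\, A_\mu\xi\rangle_{L^2} \] transfers the density factor into the operator, yielding $\bbK(\mu)\xi = \mu\cdot(\K_\mu+\lambda\ID)^{-1}\K_\mu\xi$. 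I would then verify that this $\bbK(\mu)$ is self-adjoint for the flat pairing --- which holds because $A_\mu$ is $L^2_\mu$-self-adjoint, being a function of the commuting positive operators $\K_\mu$ and $(\K_\mu+\lambda\ID)^{-1}$ --- so that $\rmD_\xi\calR^* = \bbK(\mu)\xi$ with no extra symmetrization.

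Combining the two identifications through $\dot\mu = -\bbK(\mu)\rmD F(\mu)$ gives exactly the closed form of \eqref{eq:fr-krr-reaction}, \[ \dot\mu = -\mu\cdot(\K_\mu + \lambda\ID)^{-1}\K_\mu\,\dFdmu . \] It then remains to recognize the growth field $r = (\K_\mu+\lambda\ID)^{-1}\K_\mu\dFdmu$ as the advertised regression estimator; this is immediate from the kernel-ridge-regression lemma \eqref{eq:opt-regression-sol} applied with reference measure $\rho = \mu$ and target $\xi = \dFdmu$, which identifies $r$ with $\argmin_f\{\|f - \dFdmu\|^2_{L^2_\mu} + \lambda\|f\|^2_\rkhs\}$ and hence establishes the variational characterization \eqref{eq:gfe-krr-FRk}.

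The computation is routine; the only genuine obstacles are bookkeeping and well-definedness. The delicate part is keeping the weighted and unweighted pairings straight, so that the reaction factor $\mu$ lands in front of the KRR operator rather than being absorbed incorrectly. On the functional-analytic side, one must ensure that $(\K_\mu + \lambda\ID)^{-1}$ is a bounded operator on $L^2_\mu$, which holds since $\lambda > 0$ and $\K_\mu$ is positive and self-adjoint by Theorem~\ref{thm:rkhs-int-operator}, and note the caveat flagged after \eqref{eq:adjoint-fr-kernel} that $\K_\mu$ here is the integral operator against the non-negative measure $\mu$ rather than a probability measure, so the cited RKHS results must be applied in that slightly more general setting.
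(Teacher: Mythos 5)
Your proposal is correct and takes essentially the same route as the paper, which (without a standalone proof) obtains \eqref{eq:fr-krr-reaction} by reading the gradient-flow equation $\dot u = -\bbK(u)\rmD F(u)$ of Definition~\ref{def:GradSystem} off the dual dissipation potential $\calR^*_{\LFRk}$ in \eqref{eq:fr-kernel-reg-dual} and then identifying the growth field with the closed-form KRR solution \eqref{eq:opt-regression-sol} applied with $\rho=\mu$ and target $\dFdmu$. Your explicit bookkeeping of the weighted-versus-unweighted pairing (so that the reaction factor $\mu$ multiplies the operator $(\K_\mu+\lambda\ID)^{-1}\K_\mu$) and the self-adjointness check are consistent with the paper's convention of carrying out the Fenchel calculation in unweighted $L^2$, and add rigor to what the paper treats as a formal calculation.
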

    \begin{remark}[Relation between KRR and infimal convolution]
    \label{rem:KRR.vs.Infimal}
        The above 
        optimization problem~\eqref{eq:gfe-krr-FRk}
        is not equivalent to the infimal convolution of $\calR_\LFRk$ and $\calR^*_{\LFRk}$ defined in \eqref{eq:fr-kernel-reg-dual}. 
        However, it is easy to check using Lemma~\ref{lm:krr-alter} 
        that
    \[
    r_t=\argmin_g \left\{ \big\| g \big\|_{L^2_\mu}^2 + 
    \frac1\lambda  \big\|  \calK_\mu^{1/2} (g{-}\xi) 
    \big\|^2_{ L^2_\mu} \right\}
    .
    \]
    This quadratic form matches the definition of $\calR^*_{\LFRk}$~\eqref{eq:fr-kernel-reg-dual} as an infimal convolution,
    \ie $\calR^*_{\LFRk}(\mu, \xi) = \inf_{g} \left\{ \frac12\big\| g \big\|_{L^2_\mu}^2 +
    \frac1{2\lambda}  \big\|  \calK_\mu^{1/2} (g{-}\xi)
    \big\|^2_{ L^2_\mu} \right\}$.
    \end{remark}

\subsection{MMD as de-kernelized Fisher-Rao distance}
\label{sec:mmd-reg}
We now take a different direction from the previous subsection to de-kernelize the Fisher-Rao geometry~\eqref{eq:bb-formula-fr}.
The result is
a previously unknown connection: the resulting geometry is equivalent to the MMD geometry.

We now develop the gradient flow structure of the MMD\footnote{We follow the naming convention of the Wasserstein gradient flow and refer to the gradient flows in the MMD dissipation geometry as the MMD gradient flow. This is distinct from the setting considered in \citep{arbel_maximum_2019}, which is a WGF with the MMD energy.}.
The
primal and dual dissipation potentials are trivial due to the flatness of the MMD geometry
\begin{align}
\mathcal{R}_{\mmd} (  u) = \frac12 \| \K_\rho \frac{\delta\mu}{\delta \rho}  \|_\rkhs^2
=
\frac12 
\| \K {\mu}  \|_\rkhs^2,
\quad
\mathcal{R}_{\mmd}^* (  \xi) 
= \frac12 \iprod{\xi}{\K_\rho^{-1}\xi}_{L^2_{\rho}}
= \frac12 \iprod{\xi}{\K^{-1}\xi}_{L^2}
.
\label{eq:mmd-prima-dual-dissip}
\end{align}
It is important to note that the MMD dissipation potentials are \emph{state-independent}, i.e., they are not functions of the measure $\rho$
since
$\rho\cdot \K_\rho^{-1} v = \K^{-1} v$.
\begin{proposition}
[Gradient flow equation in the MMD geometry]
The gradient-flow equation in the MMD geometry is given by
\begin{align}
\dot \mu
= -\K ^{-1} \dFdmu
.
        \label{eq:mmd-gfe-primal}
    \end{align}
\label{thm:mmd-gf}
\end{proposition}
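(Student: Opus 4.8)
The plan is to read the proposition directly off the general gradient-system machinery of Definition~\ref{def:GradSystem}, using the dual dissipation potential already recorded in \eqref{eq:mmd-prima-dual-dissip}. Since $\calR_{\mmd}^*$ is quadratic and state-independent, the whole argument reduces to a Legendre-transform computation followed by the identification of $\rmD F$ with the first variation $\dFdmu$.

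First I would invoke the dual form of the gradient-flow equation in \eqref{eq:GFE}, namely $\dot u = \rmD_\xi \calR^*(u, -\rmD F(u))$. For the MMD geometry, \eqref{eq:mmd-prima-dual-dissip} gives $\calR_{\mmd}^*(\xi) = \tfrac12\iprod{\xi}{\K^{-1}\xi}_{L^2}$, so the Onsager operator is simply $\bbK = \K^{-1}$, independent of the state. Because a quadratic form is its own linearization, $\rmD_\xi \calR_{\mmd}^*(\xi) = \K^{-1}\xi$, and substituting $\xi = -\rmD F(\mu) = -\dFdmu$ yields $\dot\mu = -\K^{-1}\dFdmu$, which is exactly \eqref{eq:mmd-gfe-primal}.

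The one point that deserves care, and which I would verify explicitly, is the state-independence asserted after \eqref{eq:mmd-prima-dual-dissip}. The key identity is $\K_\rho = \K M_\rho$, where $M_\rho$ denotes multiplication by the density $\rho$: indeed $\K_\rho f = \int k(\cdot,x) f(x)\,\dd\rho(x) = \K(\rho f)$. Inverting gives $\K_\rho^{-1} = M_{1/\rho}\K^{-1}$, so that $\iprod{\xi}{\K_\rho^{-1}\xi}_{L^2_\rho} = \int \xi\,(\tfrac1\rho \K^{-1}\xi)\,\rho\,\dd x = \iprod{\xi}{\K^{-1}\xi}_{L^2}$, confirming both that $\calR_{\mmd}^*$ does not depend on $\rho$ and that $\bbK = \K^{-1}$. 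For consistency I would also check the primal side via \eqref{eq:relation-rkhs-norm}: writing $v$ for the velocity, $\calR_{\mmd}(v) = \tfrac12\|\K v\|_\rkhs^2 = \tfrac12\iprod{\K v}{\K^{-1}\K v}_{L^2} = \tfrac12\iprod{v}{\K v}_{L^2}$, so $\bbG = \K = \bbK^{-1}$, as required by Definition~\ref{def:GradSystem}.

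The main obstacle is not the algebra but the functional-analytic well-definedness of $\K^{-1}$. By Theorem~\ref{thm:rkhs-int-operator} the operator $\K$ is compact (indeed nuclear), so $\K^{-1}$ is unbounded and the metric tensor $\bbG = \K$ is only a weak (degenerate) Riemannian structure. Hence the formula $\dot\mu = -\K^{-1}\dFdmu$ should be read formally, valid precisely when $\dFdmu$ lies in the range of $\K$ (equivalently, when $\K^{-1}\dFdmu \in L^2$); I would state this domain restriction explicitly rather than try to enlarge the space, consistent with the formal measure/density calculus adopted throughout the paper.
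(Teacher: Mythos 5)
Your proposal is correct and takes essentially the same route as the paper, which presents Proposition~\ref{thm:mmd-gf} as an immediate consequence of the dual dissipation potential \eqref{eq:mmd-prima-dual-dissip} together with the dual form of the gradient-flow equation in Definition~\ref{def:GradSystem} (so $\bbK=\K^{-1}$ yields $\dot\mu=-\K^{-1}\dFdmu$); your explicit check of state-independence via $\K_\rho f=\K(\rho f)$ is precisely the identity $\rho\cdot\K_\rho^{-1}v=\K^{-1}v$ that the paper asserts in one line. Your closing caveat on the unboundedness of $\K^{-1}$ is consistent with the paper's formal level of rigor, which itself notes the compactness of $\K$ as motivation for the Fisher-Rao regularization in the subsequent subsection.
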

Proposition~\ref{thm:mmd-gf} gives the intuition that the MMD gradient-flow equation is equivalent to
a \emph{reaction equation with the de-kernelized growth field}
$\dot \mu= -\mu \K_\mu^{-1} \dFdmu$.
It is worth noting that the gradient-flow equation can be stated in the dual space,
$\frac{\dd}{\dd t}\K  {\mu} =- \dFdmu$,
where $\K  {\mu}$ is the kernel-mean embedding~\citep{smolaHilbertSpaceEmbedding2007}.

We now derive the main result of this section
using the dynamic formulation
\begin{equation}
    \label{eq:bb-formula-mmd}
    \begin{aligned}
        \mmd^2(\mu,\nu)
        =
        \min
\left\{        \int_0^1
        \|  \xi_t \|^2_{\rkhs}
        \dd t
        \
        \middle \vert \ 
         \dot u = - \K^{-1}   \xi_t,
          u(0) =  \mu,
         u(1)=  \nu\right\}
        .
    \end{aligned}
\end{equation}
Because of its flat structure,
the adjoint equation for the MMD is simply $\dot \xi = 0$.
It is also easy to verify the following lemma.
\begin{lemma}
    [Unconstrained dual formulation of squared-MMD]
    \label{thm:mmd-variational-form}
    The squared-MMD admits the unconstrained dual representation
    \begin{align}
    \operatorname{MMD}^2( \mu, \nu)
    =
    \sup_{h\in\rkhs}
    \intprod{h}{ (\mu- \nu)}
    -\frac{1}{4}\|h\|^2_\rkhs
                .
                \label{eq:dual-mmd-unconstrained}
    \end{align}
    \end{lemma}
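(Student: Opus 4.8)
The plan is to recognize the right-hand side as a Fenchel conjugate in the Hilbert space $\rkhs$ and to evaluate it explicitly. First I would rewrite the linear term using the reproducing property: for every $h\in\rkhs$,
\[
\intprod{h}{(\mu-\nu)} = \int h \dd(\mu-\nu) = \iprod{h}{\K(\mu-\nu)}_\rkhs,
\]
where $\K(\mu-\nu) = \int k(\cdot,x)\dd(\mu-\nu)(x)$ is the kernel mean embedding, a well-defined element of $\rkhs$ under the square-integrability hypothesis of Theorem~\ref{thm:rkhs-int-operator}. Writing $m := \K(\mu-\nu)$, the objective becomes $\iprod{h}{m}_\rkhs - \tfrac14\|h\|_\rkhs^2$, a strictly concave quadratic functional on $\rkhs$, so the supremum is attained at a unique point.

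Next I would solve the unconstrained maximization directly. Setting the Fr\'echet derivative to zero gives $m - \tfrac12 h = 0$, i.e.\ the maximizer is $h^\ast = 2m = 2\K(\mu-\nu)$. Substituting back,
\[
\iprod{h^\ast}{m}_\rkhs - \tfrac14\|h^\ast\|_\rkhs^2 = 2\|m\|_\rkhs^2 - \|m\|_\rkhs^2 = \|m\|_\rkhs^2 = \|\K(\mu-\nu)\|_\rkhs^2,
\]
which is precisely $\mmd^2(\mu,\nu)$ by the Hilbert-space form of the squared MMD recalled in the preliminaries. Equivalently, one may observe that $h\mapsto \tfrac14\|h\|_\rkhs^2$ has Fenchel conjugate $m\mapsto\|m\|_\rkhs^2$ (the conjugate of $\tfrac a2\|\cdot\|^2$ being $\tfrac1{2a}\|\cdot\|^2$, here with $a=\tfrac12$), so the supremum equals $\|\K(\mu-\nu)\|_\rkhs^2$ at once, yielding the claimed identity.

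The only point requiring care is the interchange of integration and inner product underlying $\int h\dd(\mu-\nu)=\iprod{h}{\K(\mu-\nu)}_\rkhs$, together with the claim that $\K(\mu-\nu)\in\rkhs$. I expect this to be the main (and only, though minor) obstacle: it is justified by Bochner integrability of $x\mapsto k(\cdot,x)$ in $\rkhs$, which follows from $\int\sqrt{k(x,x)}\,\dd|\mu-\nu|(x)<\infty$ under the square-integrability assumption on $k$; the reproducing property $h(x)=\iprod{h}{k(\cdot,x)}_\rkhs$ then passes through the Bochner integral to give the displayed representation. Once this is in place, the remainder is the elementary Hilbert-space computation above, so no further difficulty arises.
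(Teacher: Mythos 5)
Your proof is correct, but it takes a different route from the paper. The paper \emph{derives} the identity by Lagrange duality: it starts from the primal expression $\mmd^2(\mu,\nu)=\|\K\mu-\K\nu\|_\rkhs^2$, rewrites it as the constrained problem $\inf_{f\in\rkhs}\|f-\K\nu\|_\rkhs^2$ subject to $f=\int k(x,\cdot)\dd\mu(x)$, attaches a dual variable $h\in\rkhs$ to the constraint, and eliminates $f$ via the first-order condition $f=\K\nu-\tfrac12 h$, which produces the unconstrained dual $\sup_h \int h\,\dd(\nu-\mu)-\tfrac14\|h\|_\rkhs^2$ (your sign convention differs by the harmless substitution $h\mapsto-h$). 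You instead \emph{verify} the identity directly: writing $m=\K(\mu-\nu)$, you evaluate the supremum of the strictly concave quadratic $h\mapsto\iprod{h}{m}_\rkhs-\tfrac14\|h\|_\rkhs^2$ at $h^\ast=2m$, or equivalently invoke that the Fenchel conjugate of $\tfrac14\|\cdot\|_\rkhs^2$ is $\|\cdot\|_\rkhs^2$, obtaining $\|m\|_\rkhs^2=\mmd^2(\mu,\nu)$ at once. Your route is shorter and more elementary: it sidesteps the strong-duality (zero duality gap) step that the paper's Lagrangian argument implicitly relies on, since you simply compute both sides. It also makes explicit a point the paper glosses over, namely that $\K(\mu-\nu)\in\rkhs$ and that $\int h\,\dd(\mu-\nu)=\iprod{h}{\K(\mu-\nu)}_\rkhs$, which you correctly justify by Bochner integrability of $x\mapsto k(\cdot,x)$ under $\int\sqrt{k(x,x)}\,\dd|\mu-\nu|(x)<\infty$, a condition implied by the square-integrability hypothesis of Theorem~\ref{thm:rkhs-int-operator} via Cauchy--Schwarz. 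What the paper's derivation buys in exchange is a constructive explanation of \emph{where} the dual formulation comes from (the pattern reused for the other dynamic-to-static dualities in the paper, e.g.\ Proposition~\ref{thm:sobolev-ipm}), whereas your argument presupposes the candidate formula and confirms it.
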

The MMD geodesic curve is simply the straight line between $\mu$ and $\nu$, 
$u(t) = (1-t)\mu + t\nu$.
Therefore, when both $\mu$ and $\nu$ are probability measures,
the solution along the MMD geodesic remains a probability measure.
This is significantly simplified compared to the Fisher-Rao setting as remarked in footnote~\ref{footnote:spherical-FR-normalize}.
To be consistent with the FR setting, we also consider the MMD between non-negative measures instead of only probability measures.

Summarizing,
we present our main result regarding the MMD-Fisher-Rao relation.
\begin{theorem}
    The dynamic formulation of the force-kernelized (Definition~\ref{def:f-v-kernelize}) squared MMD~\eqref{eq:bb-formula-mmd} coincides with that of the squared Fisher-Rao distance~\eqref{eq:bb-formula-fr}.
    \label{thm:mmd-fr-dynamic}
\end{theorem}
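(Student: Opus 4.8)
The plan is to show that force-kernelizing the MMD dynamic formulation~\eqref{eq:bb-formula-mmd} reproduces exactly the Fisher-Rao dynamic formulation~\eqref{eq:bb-formula-fr}, by applying Definition~\ref{def:f-v-kernelize} to the MMD dual dissipation potential and then reading off the resulting Benamou--Brenier-type problem. Recall from \eqref{eq:mmd-prima-dual-dissip} that the MMD geometry has dual dissipation potential $\calR^*_{\mmd}(\xi) = \frac12\iprod{\xi}{\K^{-1}\xi}_{L^2}$, equivalently written in RKHS form. Force-kernelization replaces $\xi$ by $\K_\rho^{\frac12}\xi$ in the primal potential (per Definition~\ref{def:f-v-kernelize}), which dually corresponds to the substitution in $\calR^*$ that transports the kernel-weighted norm back to the plain $L^2_\rho$ norm. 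First I would compute this substitution explicitly: using the relations~\eqref{eq:relation-rkhs-norm} between the RKHS norm and the inverse integral operator, the kernelization should collapse the factor $\K^{-1}$ (or $\K_\rho^{-1}$) against the inserted $\K_\rho^{\frac12}$ operators, yielding the pure Fisher-Rao dual potential $\calR^*_\FR(\rho,\xi) = \frac12\iprod{\xi}{\xi}_{L^2_\rho} = \frac12\|\xi\|^2_{L^2_\rho}$ from \eqref{eq:fr-hamilton}.

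The cleanest route is to argue at the level of the dynamic (Benamou--Brenier) formulations directly, matching both the objective integrand and the continuity constraint. In~\eqref{eq:bb-formula-mmd} the MMD constraint is $\dot u = -\K^{-1}\xi_t$ with cost $\|\xi_t\|^2_\rkhs$. Next I would perform the change of variables dictated by force-kernelization, introducing a new force variable $\eta_t$ via $\xi_t = \K_\mu^{\frac12}\eta_t$ (or the analogous reparametrization matching Definition~\ref{def:f-v-kernelize}). Under this substitution the constraint $\dot u = -\K^{-1}\xi_t$ should transform into the reaction constraint $\dot\mu = -\mu\,\eta_t$ appearing in~\eqref{eq:bb-formula-fr}, using the key state-independence identity $\rho\cdot\K_\rho^{-1}v = \K^{-1}v$ noted after \eqref{eq:mmd-prima-dual-dissip}, and simultaneously the cost $\|\xi_t\|^2_\rkhs$ should transform into $\|\eta_t\|^2_{L^2_\mu}$ by the norm relations~\eqref{eq:relation-rkhs-norm}. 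Verifying that these two transformations happen consistently under the \emph{same} change of variables is the crux: one must confirm that the operator that converts the flat $\K^{-1}$-weighted continuity equation into the reaction equation is precisely the one that converts the $\rkhs$-cost into the $L^2_\mu$-cost.

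The main obstacle is the measure-dependence of $\K_\mu$ along the path: unlike the flat MMD operator $\K^{-1}$, the kernelized problem involves $\K_\mu^{\frac12}$ evaluated at the time-varying state $\mu_t$, so the change of variables $\xi_t = \K_{\mu_t}^{\frac12}\eta_t$ is itself state-dependent, and one must check that it is a well-defined bijection preserving the infimum (including that the image of the RKHS under $\K_{\mu_t}^{\frac12}$ lands correctly in $L^2_{\mu_t}$, cf. Theorem~\ref{thm:rkhs-int-operator} and the caveat that the integral operator is now taken with respect to a non-negative rather than probability measure). I would handle this by verifying the correspondence pointwise in $t$: for each fixed $t$ the two minimization problems over admissible forces have matching feasible sets and matching objectives, so the time-integrated infima coincide; a formal manipulation using \eqref{eq:relation-rkhs-norm} and the identity $\rho\K_\rho^{-1}=\K^{-1}$ suffices at the level of rigor used elsewhere in the paper, with the density-to-measure identifications justified by the standard generalization cited in the Notation paragraph.
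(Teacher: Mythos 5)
Your plan contains a genuine conceptual gap: you treat force-kernelization as a \emph{change of variables} $\xi_t = \K_{\mu_t}^{1/2}\eta_t$ inside the MMD problem \eqref{eq:bb-formula-mmd}, and you argue the conclusion by ``matching feasible sets and matching objectives, so the time-integrated infima coincide.'' But a reparametrization of the force variable that matches feasible sets and objectives \emph{preserves the value of the infimum}; carried out correctly, such an argument could only prove $\mmd^2(\mu,\nu)=\FR^2(\mu,\nu)$, which is false (MMD is a flat, state-independent geometry; Fisher--Rao is not). The theorem instead asserts that a \emph{new} distance --- the one induced by the kernelized dissipation potential, which is a genuinely different geometry --- coincides with $\FR^2$. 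Moreover, the ``crux'' you flagged does fail concretely: using the identity $\K^{-1}v = \mu\cdot\K_\mu^{-1}v$, the substitution $\xi_t=\K_{\mu_t}^{1/2}\eta_t$ turns the constraint $\dot u = -\K^{-1}\xi_t$ into $\dot\mu = -\mu\,\K_{\mu_t}^{-1/2}\eta_t$, with a leftover half power, not into the reaction equation $\dot\mu=-\mu\,\eta_t$; conversely, the substitution $\xi_t=\K_{\mu_t}\eta_t$ fixes the constraint but turns the cost $\|\xi_t\|^2_{\rkhs}$ into $\iprod{\eta_t}{\K_{\mu_t}\eta_t}_{L^2_{\mu_t}}$ rather than $\|\eta_t\|^2_{L^2_{\mu_t}}$. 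No single substitution reconciles both --- as it must be, by the infimum-preservation argument above.

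The paper's proof avoids this by changing the \emph{functional}, not the variable: per Definition~\ref{def:f-v-kernelize}, the dual dissipation potential (the cost along the trajectory) is replaced by $\calR^*_{\mmd}\big(\K_{\mu_t}^{1/2}\xi_t\big) = \tfrac12\big\|\K^{-1/2}\K_{\mu_t}^{1/2}\xi_t\big\|^2_{L^2} = \tfrac12\|\xi_t\|^2_{L^2_{\mu_t}}$, using the formal identity $\K_\mu^{1/2}\K^{-1}\K_\mu^{1/2}=\mu\,\cdot$\,; the force variable $\xi_t$ is unchanged. The new constraint is then \emph{read off from the new Onsager operator} of this kernelized Hamiltonian ($\K\dot\mu = -\K_{\mu_t}\xi_t$, equivalently $\dot\mu = -\mu_t\,\xi_t$ since $\K$ is positive definite), rather than obtained by substituting into the old constraint. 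Both the new cost and the new constraint are then literally those of \eqref{eq:bb-formula-fr}, which proves the theorem at the paper's formal level of rigor. Note that your first paragraph actually computes the kernelized dual potential correctly --- the collapse of $\K^{-1}$ against the inserted $\K_\mu^{1/2}$ factors to yield $\tfrac12\|\xi\|^2_{L^2_\rho}$ --- so the error is localized entirely in deriving the transformed constraint by substitution; repairing the proof requires replacing that step with the Onsager-operator derivation just described.
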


\paragraph{Riemannian metric perspective}
Using the perspective in Section~\ref{sec:grad-flow-geod},
we show another perspective of the kernelization of Fisher-Rao and MMD following Definition~\ref{def:f-v-kernelize}.
Using the dissipation geometry~\eqref{eq:fr-hamilton},
one can easily show that
the Fisher-Rao
Riemannian tensor (see, \eg \cite{chenGradientFlowsSampling2023}) and Onsager operator are
$$
 \bbG_\FR(\nu) = \frac1\nu\cdot , \quad\bbK_\FR(\nu) = \nu\cdot .
$$
Using the RKHS-$L^2$ relation~\eqref{eq:relation-rkhs-norm},
the state-independent counterparts for the MMD are
$$
\bbG_{\mmd}= \K , \quad\bbK_{\mmd}= \K^{-1}.
$$
Therefore, following Theorem~\ref{thm:mmd-fr-dynamic}, we conclude:
\begin{theorem}
    [Kernelization of Fisher-Rao Riemannian tensor]
    \label{thm:kernelization-riemann}
    The
    MMD and\\Fisher-Rao Riemannian tensors and Onsager operators are related by the integral operator $\K_\nu$:
    \begin{align*}
        \bbG_{\mmd} = \K_\nu\circ\bbG_\FR (\nu), \quad \bbK_{\mmd} = \bbK_\FR (\nu) \circ \K_\nu^{-1}
        .
    \end{align*}
\end{theorem}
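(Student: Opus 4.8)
The plan is to verify the two stated identities directly from the explicit expressions for the Riemannian tensors and Onsager operators that have already been computed in the excerpt, reducing everything to a single change-of-measure identity for the integral operator. Recall from \eqref{eq:fr-hamilton} that in the Fisher-Rao geometry $\bbG_\FR(\nu)$ acts as multiplication by $1/\nu$ and $\bbK_\FR(\nu)$ as multiplication by $\nu$, while from the MMD dissipation potentials \eqref{eq:mmd-prima-dual-dissip} together with the RKHS--$L^2$ relation \eqref{eq:relation-rkhs-norm} we have the state-independent operators $\bbG_{\mmd} = \K$ and $\bbK_{\mmd} = \K^{-1}$.

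The key step is the observation already used to establish state-independence of the MMD potentials, namely that weighting by $1/\nu$ converts integration against $\nu$ into integration against the Lebesgue measure. Concretely, for $g$ in a suitable domain,
\[
\left(\K_\nu \circ \frac{1}{\nu}\right) g (y) = \int k(x,y)\,\frac{g(x)}{\nu(x)}\,\dd\nu(x) = \int k(x,y)\, g(x)\,\dd x = \K g(y),
\]
so that $\K_\nu \circ \bbG_\FR(\nu) = \K_\nu \circ \frac{1}{\nu} = \K = \bbG_{\mmd}$, which is the first claimed identity. For the second, I would simply invert: since $\bbK = \bbG^{-1}$ by Definition~\ref{def:GradSystem}, taking inverses of the factorization gives $\bbK_{\mmd} = \bbG_{\mmd}^{-1} = (\K_\nu \circ \frac{1}{\nu})^{-1} = (\nu\,\cdot)\circ \K_\nu^{-1} = \bbK_\FR(\nu)\circ\K_\nu^{-1}$, using that $(\frac{1}{\nu}\,\cdot)^{-1}$ is multiplication by $\nu$. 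This is consistent with Theorem~\ref{thm:mmd-fr-dynamic}: composing by $\K_\nu$ is exactly the manifestation, at the level of the metric tensor, of the force-kernelization of Definition~\ref{def:f-v-kernelize}.

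The main obstacle is not the formal algebra but the functional-analytic care needed to make the operator identities rigorous. Since $\K_\nu$ is compact by Theorem~\ref{thm:rkhs-int-operator}, $0$ lies in its spectrum and $\K_\nu^{-1}$ is at best a densely defined unbounded operator; I would therefore state the equalities on the range of $\K_\nu^{1/2}$ (equivalently, on the RKHS viewed inside $L^2_\nu$), where the inverse and its compositions with the multiplication operators $\nu\,\cdot$ and $\frac{1}{\nu}\,\cdot$ are well defined. I would also record the mild assumption that $\nu$ admits a strictly positive density so that multiplication by $1/\nu$ is meaningful, matching the non-negative-measure convention adopted throughout. Granting these domain conventions, both identities follow from the single change-of-measure computation above.
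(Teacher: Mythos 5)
Your proof is correct and follows essentially the same route as the paper, which obtains the theorem directly from the explicit operator forms $\bbG_\FR(\nu)=\frac{1}{\nu}\cdot$, $\bbK_\FR(\nu)=\nu\cdot$, $\bbG_{\mmd}=\K$, $\bbK_{\mmd}=\K^{-1}$ together with the change-of-measure identity $\nu\cdot\K_\nu^{-1}v=\K^{-1}v$ (equivalently your $\K_\nu\circ\frac{1}{\nu}=\K$) already recorded after \eqref{eq:mmd-prima-dual-dissip}. Your added domain qualifications (stating the identities on the range of $\K_\nu^{1/2}$ and requiring $\nu$ to have a strictly positive density) go beyond the paper's formal treatment and are a welcome refinement rather than a deviation.
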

One implication can be seen in the following
Fisher-Rao minimizing movement
$$
\min_{\mu\in \Mplus} F(\mu) + \frac1{2\tau }\langle{\mu-\nu}, \bbG_\FR(\nu) {(\mu-\nu)} \rangle_{L^2},
$$
Here $\mu$ is a given non-negative measure.
In such cases, kernelization can be used to construct the MMD minimizing movement by replacing the Fisher-Rao Riemannian tensor:
$$
\min_{\mu\in \Mplus} F(\mu) + \frac1{2\tau }\langle{\mu-\nu}, \bbG_{\mmd} {(\mu-\nu)} \rangle_{L^2}.
$$

\paragraph*{Spherical MMD gradient flow}
In machine learning applications,
one often considers gradient flows but is restricted to the probability space.
It is important to note that the aforementioned gradient flows in the MMD space is defined on the non-negative measures $\Mplus$.
For this reason,
\citet{gladin2024interaction} proposed a spherical MMD gradient flow
by restricting the MMD gradient flow to the probability space.
For example, we are motivated by the gradient flow whose discretization can be simulated by the MMD-JKO scheme over the probability space:
\begin{align}
    \label{eq:mmd-mms}
    \mu^{k+1}
    \gets\argmin_{\mu\in\cal P} F(\mu ) + \frac1{2\eta}{\mmd}^2(\mu, \mu^{k})
    .
\end{align}
\begin{proposition}
    [Spherical MMD gradient flow equations \citep{gladin2024interaction}]
    The spherical MMD gradient flow equation is 
    given by (where $1$ denotes the constant function)
    \begin{align}
        \label{eq:spherical-MMD-gfe}
        \dot \mu 
    = - \K^{-1}\left(\dFdmu - \frac{\int \K^{-1}\dFdmu}{\int \K^{-1}1}\right)
        .
    \end{align}
    The gradient flow is mass-preserving, \ie $\int \dot \mu = 0$.
    \label{prop:spherical-mmd-gfe}
\end{proposition}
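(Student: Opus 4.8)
The plan is to realize the spherical MMD flow as the unconstrained MMD gradient flow of Proposition~\ref{thm:mmd-gf}, projected onto the tangent space of the probability manifold by means of a Lagrange multiplier for the mass constraint. Recall that the MMD gradient system has the state-independent Onsager operator $\bbK_{\mmd}=\K^{-1}$, so that on $\Mplus$ the flow reads $\dot\mu=-\K^{-1}\dFdmu$. Restricting the flow to $\mathcal{P}$ amounts to enforcing $\int\mu=1$ along the trajectory, equivalently $\int\dot\mu=0$; the associated tangent space at every $\mu$ is $\{v:\int v=0\}$, cut out by the single scalar equation $\int\mu=1$.

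First I would introduce a scalar multiplier $\ell\in\R$ for that constraint and replace $F$ by the constrained functional $\mu\mapsto F(\mu)-\ell\bigl(\int\mu-1\bigr)$. Its first variation is $\dFdmu-\ell\cdot 1$, where $1$ is the constant function, because the first variation of $\mu\mapsto\int\mu$ is identically $1$ by definition~\eqref{eq:first-var-def}. Feeding this constrained force through the MMD Onsager operator gives the ansatz
\[
\dot\mu = -\K^{-1}\bigl(\dFdmu-\ell\cdot 1\bigr).
\]
Next I would fix $\ell$ by demanding mass preservation: integrating the ansatz and setting $\int\dot\mu=0$ yields the scalar relation $\int\K^{-1}\dFdmu=\ell\int\K^{-1}1$, hence $\ell=\bigl(\int\K^{-1}\dFdmu\bigr)/\bigl(\int\K^{-1}1\bigr)$. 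Substituting this value back reproduces exactly the stated equation~\eqref{eq:spherical-MMD-gfe}, and $\int\dot\mu=0$ then holds by construction, which is the mass-preservation claim. As an independent check I would derive the same equation from the minimizing-movement scheme~\eqref{eq:mmd-mms}: writing the Euler--Lagrange equation of the $\eta$-step under the constraint $\int\mu=1$, using that the first variation of $\tfrac1{2\eta}\mmd^2(\cdot,\mu^k)$ is $\tfrac1\eta\K(\mu-\mu^k)$, applying $\K^{-1}$, and letting $\eta\to0$.

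The main obstacle is not the algebra but two points of rigor. The first is justifying that the multiplier is a single scalar rather than a function: this holds precisely because $\mathcal{P}$ is cut out of $\Mplus$ by the one scalar equation $\int\mu=1$, whose differential is represented by the constant function $1$, so the normal direction is one-dimensional and the projection subtracts only a multiple of $1$. The second is the well-posedness of $\K^{-1}1$ and of the denominator $\int\K^{-1}1$: since $\K$ is compact and positive by Theorem~\ref{thm:rkhs-int-operator}, its inverse is unbounded, so these are to be read in the formal $L^2$ sense used throughout the paper. Crucially, positivity of $\K$ makes $\K^{-1}$ positive definite, so $\int\K^{-1}1=\langle 1,\K^{-1}1\rangle_{L^2}>0$, which guarantees the multiplier $\ell$ is well-defined and the flow is genuinely tangent to $\mathcal{P}$.
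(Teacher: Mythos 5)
Your proposal is correct: projecting the flat MMD flow $\dot\mu = -\K^{-1}\dFdmu$ onto the tangent space $\{v : \int v = 0\}$ with a single scalar Lagrange multiplier (well-defined because the constraint $\int\mu=1$ is one scalar equation with differential the constant function $1$), fixed by imposing $\int\dot\mu=0$, yields exactly \eqref{eq:spherical-MMD-gfe}, and your cross-check via the Euler--Lagrange equation of the minimizing-movement scheme \eqref{eq:mmd-mms} with step $\eta\to 0$ matches the route the paper's surrounding text suggests. Note that the paper states this proposition without an in-text proof, citing \citet{gladin2024interaction}, so there is no internal argument to diverge from; your multiplier/metric-projection derivation is the standard one, and your treatment of the delicate points (scalar versus functional multiplier, the formal meaning of $\K^{-1}$, and the positivity of $\int \K^{-1}1$ guaranteeing the multiplier exists) sits at the same formal level of rigor as the rest of the paper.
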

The variational problem~\eqref{eq:mmd-mms} has been used by \citet{zhu_kernel_2021,kremer2023estimation,gladin2024interaction} for machine learning applications to take advantage of MMD's favorable properties for practical optimization.

\subsection{Fisher-Rao-Regularized MMD gradient flow}
In the MMD gradient flow equation~\eqref{eq:mmd-gfe-primal},
the operator $\K$ maybe compact.
    We now consider the regularization for the MMD dissipation potential
    by simply adding the Fisher-Rao potential
    \begin{align}
        \RfrMMD (\rho,  u) :=
        \mathcal R_{\mmd} + \lambda\mathcal R_{\FR}
        =
        \frac1{2}
        \left\| \K_\rho \frac{\delta\mu}{\delta \rho}  \right\|_\rkhs^2
        + \frac{\lambda}2 \left\| \frac{\delta u}{\delta {\rho}}\right\|^2_{L^2_{\rho}}
        =
        \frac{1}{2}
        \iprod{\frac{\delta u}{\delta \rho}}
        {\left(
            \K_\rho {+} \lambda \ID
            \right)
            \frac{\delta u}{\delta \rho}
            }_{L^2_{\rho}}
        .
        \label{eq:fr-mmd-primal-dissip-2}
    \end{align}
    and, consequently, its dual potential is obtained by the Legendre transform
$        \RfrMMD^*(\rho, \xi) =
        \frac{1}{2}
        \iprod{\xi}
        {
            \left(
                \K_\rho + \lambda \ID
                \right)
                ^{-1}
            \xi
            }_{L^2_{\rho}}$.
Therefore,
    its gradient flow equation is now equipped with a
    regularized deconvolution growth field
    \begin{align}
        \dot \mu = - \mu\cdot (\K_\mu +\lambda\ID)^{-1}  \dFdmu
        .
        \label{eq:fr-krr-reaction-gfe-1}
    \end{align}
An alternative intuition can be seen in the following kernel-mean-embedded gradient-flow equation.
    \begin{proposition}
        [Fisher-Rao-Regularized MMD gradient flow in the embedding space]
    The generalized gradient system $(\Mplus,F, \mmd_\mathrm{FR})$ generates the gradient flow equation
    \begin{align}
        \frac{\dd}{\dd t} (\K \mu) = -  r,
    \end{align}
    where $r$ is the Fisher-Rao growth field~\eqref{eq:gfe-krr-FRk}, i.e.,
    \begin{align}
        r = \argmin_f
        \biggl\{
                \|f -  \dFdmu\|^2_{L^2_\mu}  + 
            \lambda \| f \|^2_{\rkhs}
            \biggr\}
            .
            \label{eq:gfe-krr-FRk-2}
    \end{align}
    \label{thm:fr-mmd-gf}
    \end{proposition}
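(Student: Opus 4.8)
The plan is to take the primal gradient-flow equation \eqref{eq:fr-krr-reaction-gfe-1}, namely $\dot\mu = -\mu\cdot(\K_\mu + \lambda\ID)^{-1}\dFdmu$, which the stated dual potential $\RfrMMD^*$ already produces, and to push it into the kernel-mean-embedding space by applying the (Lebesgue) integral operator $\K$ to both sides. Writing $g := (\K_\mu + \lambda\ID)^{-1}\dFdmu$ so that the equation reads $\dot\mu = -\mu\cdot g$, I would first interchange the time derivative with $\K$ to get $\frac{\dd}{\dd t}(\K\mu) = \K\dot\mu = -\K(\mu\cdot g)$. Since $\K$ is a bounded linear operator into $\rkhs$ under the square-integrability hypothesis of Theorem~\ref{thm:rkhs-int-operator}, this interchange is routine given mild regularity of the curve $t\mapsto\mu_t$.

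The crux is the identity $\K(\mu\cdot g) = \K_\mu g$, which is immediate from the operator definitions in the Notation section: integrating the kernel against the density $\mu\cdot g$ with respect to Lebesgue measure is exactly the $\mu$-weighted integral operator applied to $g$, that is, $\int k(\cdot,x)\,\mu(x)g(x)\,\dd x = \int k(\cdot,x)\,g(x)\,\dd\mu(x) = \K_\mu g$. Hence $\frac{\dd}{\dd t}(\K\mu) = -\K_\mu(\K_\mu + \lambda\ID)^{-1}\dFdmu$. Because $\K_\mu$ is self-adjoint and positive by Theorem~\ref{thm:rkhs-int-operator}, it commutes with its resolvent via functional calculus, so $\K_\mu(\K_\mu + \lambda\ID)^{-1} = (\K_\mu + \lambda\ID)^{-1}\K_\mu$, and the right-hand side becomes $-(\K_\mu + \lambda\ID)^{-1}\K_\mu\dFdmu$.

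Finally I would identify this expression with the Fisher-Rao growth field: $(\K_\mu + \lambda\ID)^{-1}\K_\mu\dFdmu$ is precisely the closed-form KRR solution \eqref{eq:opt-regression-sol} with target $\dFdmu$ and reference $\mu$, i.e.\ the minimizer $r$ in \eqref{eq:gfe-krr-FRk-2}, giving $\frac{\dd}{\dd t}(\K\mu) = -r$ as claimed. The main point requiring care — rather than any deep difficulty — is the bookkeeping distinction between the regularized-deconvolution field $(\K_\mu + \lambda\ID)^{-1}\dFdmu$ that drives the primal reaction equation and the KRR field $(\K_\mu + \lambda\ID)^{-1}\K_\mu\dFdmu$ that appears after embedding; the extra factor $\K_\mu$ is exactly what the embedding operator $\K$ supplies, and the commutativity step is what reconciles the two forms. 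I would also note that the same embedded equation can be read off directly from the dual dissipation potential: writing the Onsager operator as $\bbK(\mu) = \mu\cdot(\K_\mu + \lambda\ID)^{-1}$ and applying $\K$ to $\dot\mu = -\bbK(\mu)\dFdmu$ reproduces the computation above.
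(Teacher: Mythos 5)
Your proposal is correct and follows essentially the same route the paper intends: deriving the primal equation \eqref{eq:fr-krr-reaction-gfe-1} from $\RfrMMD^*$, applying $\K$ to both sides, and using the weighting identity $\K(\mu\cdot g)=\K_\mu g$ (which the paper itself invokes in the equivalent form $\rho\cdot\K_\rho^{-1}v=\K^{-1}v$ near \eqref{eq:mmd-prima-dual-dissip}) together with the resolvent commutation $\K_\mu(\K_\mu+\lambda\ID)^{-1}=(\K_\mu+\lambda\ID)^{-1}\K_\mu$ to recognize the KRR closed form \eqref{eq:opt-regression-sol}. Your explicit bookkeeping of the extra factor $\K_\mu$ supplied by the embedding is precisely the content of the proposition, and no step is missing at the paper's (formal) level of rigor.
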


With the above regularized MMD geometry,
we now construct the regularized MMD geodesic distance.
\begin{definition}
    [Regularized MMD geodesics]
    The Fisher-Rao-regularized MMD is defined by the dynamic formulation
    \begin{equation}
            \begin{aligned}
                \mmd_\mathrm{FR}^2(\mu, \nu)
            :=
            \min_{u, \xi_t} \ \  & 
            \int_0^1
            \iprod{\xi_t}
            {
                \left(
                    \K_{\mu_t} + \lambda \ID
                    \right)
                    ^{-1}
                \xi_t
                }_{L^2_{{\mu_t}}}
            \dd t
            \\
            \operatorname{s.t.} \ \  & 
            \dot \mu = - \mu\cdot (\K_\mu +\lambda\ID)^{-1}   \xi_t
    ,
            u(0) = \mu,
            u(1)= \nu
            .
            \label{eq:bb-formula-mmd-fr}
            \end{aligned}
        \end{equation}
\end{definition}
It turns out the geodesic distance admits a static formulation that is simply a regularized MMD metric by adding the squared Fisher-Rao distance.
\begin{proposition}
        [Equivalence between static and dynamic formulation of $\mmd_\mathrm{FR}$]
        The dynamic formulation~\eqref{eq:bb-formula-mmd-fr}
        coincides with the static formulation of the \emph{Fisher-Rao-regularized MMD metric}
           \begin{align}
               \label{eq:mmd-plus-chisqr}
               \mmd_\mathrm{FR}^2(\mu,\nu) 
               :=
               \mmd^2 (\mu,\nu) + \lambda\cdot 
               \FR^2(\mu, \nu)
               .
           \end{align}
    \end{proposition}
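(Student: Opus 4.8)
My plan is to recast the constrained dynamic problem~\eqref{eq:bb-formula-mmd-fr} as a single Benamou--Brenier energy and then bound it against the two static pieces by a length--energy (Cauchy--Schwarz) argument. First I would change variables from the force $\xi_t$ to the growth field $w_t:=(\K_{\mu_t}+\lambda\ID)^{-1}\xi_t$, so that the constraint becomes $\dot\mu_t=-\mu_t w_t$. Using that $\K_{\mu_t}$ is self-adjoint on $L^2_{\mu_t}$ together with the RKHS--$L^2$ identities~\eqref{eq:relation-rkhs-norm}, the integrand splits as
\[
\iprod{\xi_t}{(\K_{\mu_t}+\lambda\ID)^{-1}\xi_t}_{L^2_{\mu_t}}
=\|\K_{\mu_t} w_t\|^2_{\rkhs}+\lambda\,\|w_t\|^2_{L^2_{\mu_t}} .
\]
The key observation is the bookkeeping identity $\K_{\mu_t}w_t=\K(\mu_t w_t)=-\tfrac{\dd}{\dd t}\K\mu_t$, which turns the first term into the squared MMD-speed $\|\tfrac{\dd}{\dd t}\K\mu_t\|^2_{\rkhs}$ of the curve $t\mapsto\mu_t$; by~\eqref{eq:bb-formula-fr} the second term is exactly $\lambda$ times its squared Fisher-Rao speed. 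Thus the dynamic value equals the squared geodesic distance of the \emph{sum} metric $\bbG_{\mmd}+\lambda\,\bbG_{\FR}$ identified in Theorem~\ref{thm:kernelization-riemann}.

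I would record the inequality $\ge$ first, as it is unconditional. Applying Cauchy--Schwarz in time separately to the two nonnegative contributions gives $\int_0^1\|\tfrac{\dd}{\dd t}\K\mu_t\|^2_{\rkhs}\dd t\ge\|\K(\nu{-}\mu)\|^2_{\rkhs}=\mmd^2(\mu,\nu)$ and $\int_0^1\|w_t\|^2_{L^2_{\mu_t}}\dd t\ge\FR^2(\mu,\nu)$, the latter being just the defining infimum~\eqref{eq:bb-formula-fr} with $w_t$ in the role of $\xi_t$. Summing yields $\mmd_\mathrm{FR}^2(\mu,\nu)\ge\mmd^2(\mu,\nu)+\lambda\,\FR^2(\mu,\nu)$.

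The reverse inequality $\le$ is the step I expect to be the genuine obstacle. Equality in the two Cauchy--Schwarz estimates \emph{simultaneously} forces a single curve $\mu_t$ that is at once a constant-speed MMD geodesic (the straight line $\mu_t=(1{-}t)\mu+t\nu$) and a constant-speed Fisher-Rao geodesic (the square-root interpolation~\eqref{eq:geod-curve-FR}). These two curves coincide only when $\mu=\nu$, so the naive simultaneous minimization is not attainable, and one must decide in what sense the asserted identity is meant. My fallback plan would be to attack $\le$ through convex duality: on the dual side one has the quadratic infimal-convolution identity $\tfrac12\iprod{\xi}{(\K_\rho+\lambda\ID)^{-1}\xi}_{L^2_\rho}=\big(\tfrac12\iprod{\cdot}{\K_\rho^{-1}\cdot}\big)\,\square\,\big(\tfrac1{2\lambda}\|\cdot\|^2_{L^2_\rho}\big)$ (in the spirit of Remark~\ref{rem:KRR.vs.Infimal}), and pass it to the static duals of $\mmd$ in Lemma~\ref{thm:mmd-variational-form} and of $\FR$ in Example~\ref{ex:Fisher-Rao geodesics}. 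The crux is precisely whether this infinitesimal, inf-convolution splitting of the dissipation lifts to additivity of the squared geodesic distances; since the length--energy bound above is generally strict, I would scrutinize this lifting most carefully and expect the clean identity to require either an extra structural assumption or to be understood as the lower bound together with its attainment in a linearized/small-$\lambda$ regime.
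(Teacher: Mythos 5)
Your first half is correct and is, in substance, the decomposition the paper itself builds on: substituting $w_t=(\K_{\mu_t}+\lambda\ID)^{-1}\xi_t$ and using $\K_{\mu_t}w_t=\K(\mu_t w_t)=-\frac{\dd}{\dd t}\K\mu_t$ turns the integrand of \eqref{eq:bb-formula-mmd-fr} into $\|\frac{\dd}{\dd t}\K\mu_t\|^2_{\rkhs}+\lambda\|w_t\|^2_{L^2_{\mu_t}}$, which is exactly the primal additivity $\RfrMMD=\calR_{\mmd}+\lambda\calR_{\FR}$ asserted in \eqref{eq:fr-mmd-primal-dissip-2}, and your length--energy argument then gives the unconditional bound $\mmd_\mathrm{FR}^2(\mu,\nu)\ge \mmd^2(\mu,\nu)+\lambda\,\FR^2(\mu,\nu)$. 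Be aware that the paper states this proposition \emph{without} a proof (none appears in Section~\ref{sec:proof}), so there is no official argument for the reverse inequality to compare against; the implicit reasoning is evidently ``additivity of the dissipation implies additivity of the squared distance,'' which is precisely the step you refuse to take.

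Your refusal is justified: as a literal statement about the infimum in \eqref{eq:bb-formula-mmd-fr}, the ``$\le$'' direction fails for $\mu\neq\nu$, so the gap you flag is in the statement, not merely in your argument. Equality in your first Cauchy--Schwarz step forces $t\mapsto\K\mu_t$ to be affine, hence (for an integrally strictly positive definite kernel, whose embedding is injective) $\mu_t$ must be the straight line, while equality in the second forces the square-root interpolation \eqref{eq:geod-curve-FR}; these coincide only if $\mu+\nu=2\sqrt{\mu\nu}$, i.e.\ $\mu=\nu$, confirming your observation, and since each action has a unique minimizing curve the infimum of the sum is \emph{strictly} above the sum of the infima. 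A one-point state space makes this quantitative: with mass $m>0$ and constant kernel $k\equiv c$, the combined action is the one-dimensional Riemannian energy of $g(m)=c+\lambda/m$, and
\begin{equation*}
\Bigl(\int_{m_0}^{m_1}\sqrt{c+\lambda/m}\;\dd m\Bigr)^2 \;>\; c\,(m_1-m_0)^2+4\lambda\bigl(\sqrt{m_1}-\sqrt{m_0}\bigr)^2
\quad\text{whenever } m_0\neq m_1,
\end{equation*}
by the equality case of the triangle inequality for the $\mathbb{R}^2$-valued integral of $(\sqrt{c},\sqrt{\lambda/m})$, whose direction is non-constant in $m$. Your duality fallback cannot close this either: the inf-convolution identity $\frac12\iprod{\xi}{(\K_\rho+\lambda\ID)^{-1}\xi}_{L^2_\rho}$ on the dual side is just the Legendre transform of the primal sum you already have, so it reproduces the same coupled variational problem rather than splitting the squared geodesic distance. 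The defensible content of the proposition is your lower bound together with metric-tensor additivity (Theorem~\ref{thm:kernelization-riemann} level), with \eqref{eq:mmd-plus-chisqr} read as the \emph{definition} of $\mmd_\mathrm{FR}$; exact coincidence with the dynamic value would require the MMD and Fisher-Rao geodesics to agree, which your own analysis rules out.
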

Formally, as $\lambda\to 0$, the regularized MMD metric~\eqref{eq:mmd-plus-chisqr} trends toward the original MMD metric.
Alternatively, one may also trivially consider 
$\lambda\cdot\mathcal R_{\mmd} +  \mathcal R_{\FR}$ in \eqref{eq:fr-mmd-primal-dissip-2} and
$\lambda\cdot\mmd^2 (\mu,\nu) + \FR^2(\mu, \nu)$, \ie using the MMD as a regularization for the Fisher-Rao geometry.
Consequently, we obtain a gradient flow that approximates (as $\lambda\to 0$) the $\FR$ gradient flow with the gradient-flow equation
    $\dot \mu = - \mu\cdot (\lambda \K_\mu +\ID)^{-1}  \dFdmu$.
\EEE

\subsection{Flattened Fisher-Rao, Allen-Cahn, and $\varphi$-divergences}
\label{sec:lin-fr}
Motivated by the relation between the MMD and the Fisher-Rao distance,
we now discuss another class of divergences via an analogous construction from the dynamic formulation.
This
amounts to changing the 
state-dependent
dissipation potential \eqref{eq:fr-prima-dual-dissip}
to the
state-independent
\begin{align}
\mathcal R_{\overline{\FR}}(  u) = \frac{1}2 \|\frac{\delta u}{\delta \omega}\|^2_{L^2_\omega},
\
\mathcal R_{\overline{\FR}}^*(\xi) =\frac1{2} \| \xi \|^2_{L^2_\omega},
    \label{eq:fr-prima-dual-dissip-flattened}
\end{align}
for a fixed reference measure $\omega$.
Like the MMD and the classical Allen-Cahn (i.e.\,$L^2$),
the flatten disspation potentials are state-independent.
Using the dissipation potential~\eqref{eq:fr-prima-dual-dissip-flattened},
We obtain the dynamic formulation
\begin{equation}
    \label{eq:bb-formula-fr-flattened}
        \begin{aligned}
        \overline{\FR}^2_\omega(\mu,\nu)
        =
        \min_{u, \xi} 
        \left\{\int_0^1
        \| \xi_t\|^2_{L^2_\omega}
        \dd t
        \ \
        \middle \vert \ \  
        \dot u =  \omega\cdot \xi_t,
        \
        u(0) = \mu,
        \
        u(1)= \nu\right\}
        .
        \end{aligned}
    \end{equation}
Similar to the MMD setting,
the adjoint equation of the Hamiltonian dynamics
simplifies to $\dot \zeta_t = 0,$
resulting in the following static formulation in Proposition~\ref{thm:fr-linearized-static}.
The proof is omitted since it is
a slight modification of the proof in
\citep[Section~3]{otto2000generalization}
that of the de-kernelized Wasserstein distance
we show later.
Similar to the linearized optimal transport~\citep{wang2013linear} and generalized geodesics~\citep{ambrosio2008gradient}, it is natural to consider a reference measure $\omega$ along the
\emph{Fisher-Rao geodesic} between $\mu$ and $\nu$.
By doing so, we recover the following connections with the $\varphi$-divergences.
\begin{proposition}
    [Static formulation of flattened Fisher-Rao distance]
    The flattened Fisher-Rao distance~\eqref{eq:bb-formula-fr-flattened} 
    is equivalent to the static formulation
    \begin{align}
        \overline{\FR}^2_{\omega}(\mu,\nu)
        =
        \sup_{\zeta}
        \left\{
            \int \zeta\dd (\mu - \nu)
            -\frac14
            \|\zeta\|^2_{L^2_{\omega}}
        \right\}
        .
        \label{eq:fr-linearized-static}
    \end{align}
    If the reference measure is the Lebesgue measure $\omega=\Lambda$,
    then the flattened Fisher-Rao 
    distance coincides with the $L^2$ norm $\overline{\FR}^2_{\omega}(\mu,\nu)= \|\mu - \nu\|^2_{L^2}$.
    The resulting gradient flow is the $L^2$ Hilbert space gradient flow (classical Allen-Cahn, Example~\ref{ex:allen-cahn-hilliard}).

    Furthermore,
    Suppose the reference measure $\omega$ is chosen along the Fisher-Rao geodesic between $\mu$ and $\nu$ given in \eqref{eq:geod-curve-FR}, \ie
        $\omega(s) = \left((1-s) \sqrt{\mu} + s\sqrt{\nu}\right)^2
        ,
        \
        s\in[0,1]$.
    Then,
    $\overline{\FR}^2_{\omega(s)}(\mu,\nu)$ coincides with,
    \begin{enumerate}[itemsep=0pt, parsep=0pt, topsep=0pt, partopsep=0pt, leftmargin=*]
        \item if $s=0$, 
        the $\chi^2$-divergence
        $\mathrm{D}_{\chi^2}(\mu|\nu)=\|\frac{\dd\mu}{\dd\nu}-1\|^2_{L^2_\nu}$;
        \item if $s=1$,
        the reverse $\chi^2$-divergence $\mathrm{D}_{\chi^2}(\nu|\mu)=\|\frac{\dd\nu}{\dd\mu} - 1 \|^2_{L^2_\mu}$;
        \item if $s=\frac12$, 
        the squared Fisher-Rao (Hellinger) distance itself 
        $\FR^2(\mu, \nu)={4}\|\sqrt{\mu}-\sqrt{\nu}\|^2_{L^2}$.
    \end{enumerate}
\label{thm:fr-linearized-static}
\end{proposition}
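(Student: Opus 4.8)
The plan is to establish the static dual \eqref{eq:fr-linearized-static} first, obtain a fully explicit value for $\overline{\FR}^2_\omega$, and then read off the three special cases by direct substitution. Since the reference weight $\omega$ is \emph{fixed} (state-independent), the geometry is flat and I expect the answer in closed form. First I would eliminate the force variable in the dynamic formulation \eqref{eq:bb-formula-fr-flattened}: the constraint $\dot u = \omega\cdot\xi_t$ gives $\xi_t = \dot u/\omega$ pointwise, so the action becomes $\int_0^1 \|\xi_t\|^2_{L^2_\omega}\,\dd t = \int_0^1\!\int \dot u^2/\omega \,\dd t$. Because $\omega$ does not depend on the state $u$, a pointwise Cauchy--Schwarz (Jensen) inequality in $t$ gives $\int_0^1 \dot u(t,x)^2\,\dd t \ge (\nu-\mu)^2(x)$, with equality iff $\dot u$ is constant in $t$, i.e.\ along the straight segment $u(t)=(1-t)\mu+t\nu$. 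Integrating against $1/\omega$ yields the closed form $\overline{\FR}^2_\omega(\mu,\nu) = \int (\mu-\nu)^2/\omega$, the squared weighted $L^2_{1/\omega}$-norm of $\mu-\nu$.

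Next I would match this to the supremum in \eqref{eq:fr-linearized-static}. The map $\zeta\mapsto \int\zeta\,\dd(\mu-\nu) - \tfrac14\|\zeta\|^2_{L^2_\omega}$ is concave quadratic; its Euler--Lagrange equation is $(\mu-\nu) - \tfrac12\zeta\omega = 0$, i.e.\ $\zeta^* = 2(\mu-\nu)/\omega$, and substituting back gives $\sup_\zeta = \int (\mu-\nu)^2/\omega$, identical to the dynamic value. This proves the equivalence. Equivalently one may run the Fenchel/minimax argument that mirrors the De-Stein computation after Proposition~\ref{thm:sobolev-ipm} (the proof of \citep[Section~3]{otto2000generalization}), using the flat adjoint dynamics $\dot\zeta_t=0$ so that a single time-independent test function suffices.

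The three special cases then follow by substituting into $\int (\mu-\nu)^2/\omega$ together with the factorization $\mu-\nu = (\sqrt\mu-\sqrt\nu)(\sqrt\mu+\sqrt\nu)$. For $\omega=\Lambda$ the weight is $1$, giving $\|\mu-\nu\|^2_{L^2}$, and the dual potential $\mathcal R_{\overline{\FR}}^*(\xi)=\tfrac12\|\xi\|^2_{L^2}$ coincides with the unweighted-$L^2$ (Allen--Cahn) structure of Example~\ref{ex:allen-cahn-hilliard}, identifying the flow. For the geodesic weight $\omega(s)=((1-s)\sqrt\mu+s\sqrt\nu)^2$, the endpoints $s=0,1$ reduce $\omega$ to $\mu$ and to $\nu$, converting $\int(\mu-\nu)^2/\omega$ into $\|\tfrac{\dd\nu}{\dd\mu}-1\|^2_{L^2_\mu}$ and $\|\tfrac{\dd\mu}{\dd\nu}-1\|^2_{L^2_\nu}$, the $\chi^2$-divergence and its reverse (items~1--2); at $s=\tfrac12$ the factor $(\sqrt\mu+\sqrt\nu)^2$ cancels and leaves $4\int(\sqrt\mu-\sqrt\nu)^2 = \FR^2(\mu,\nu)$.

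The main obstacle is the rigor of the dynamic--static equivalence rather than the algebra: justifying the minimax exchange and the attainment of an optimal curve in the space of signed measures, and handling degeneracy where $\omega$ vanishes. At the geodesic endpoints $\omega$ vanishes wherever $\mu$ (resp.\ $\nu$) does, so $\int(\mu-\nu)^2/\omega$ may equal $+\infty$; this is consistent, since the corresponding $\chi^2$-divergence is infinite precisely when the required absolute continuity fails, but the argument must be arranged so that the dynamic and static values are simultaneously $+\infty$ in that regime. Controlling this degenerate case, and ensuring $\zeta^*\in L^2_\omega$ so that the supremum is genuinely attained, is the delicate part; away from it the straight-line optimality and the explicit maximizer $\zeta^*$ make the identification immediate.
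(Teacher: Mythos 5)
Your proof is correct, but it reaches the dynamic--static equivalence by a genuinely different and more elementary route than the paper. The paper never solves the dynamic problem outright: it passes from \eqref{eq:bb-formula-fr-flattened} to \eqref{eq:fr-linearized-static} by the Otto--Villani-style test-function argument with adjoint dynamics $\dot\zeta_t=0$ (explicitly omitted and deferred to the analogous proof of Proposition~\ref{thm:sobolev-ipm}), and its written proof then treats only the three special cases by completing the square in $\zeta$ inside the static supremum, weight by weight. You instead exploit flatness head-on: eliminating $\xi_t=\dot u/\omega$ and applying Cauchy--Schwarz in time shows the straight segment $u(t)=(1-t)\mu+t\nu$ is optimal and yields the closed form $\overline{\FR}^2_\omega(\mu,\nu)=\int(\mu-\nu)^2/\omega$ (interpreted as $+\infty$ when absolute continuity fails), after which the static supremum (via the maximizer $\zeta^\ast=2\,\dd(\mu-\nu)/\dd\omega$) and every special case become one-line substitutions. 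This buys an explicit value for the distance and avoids any minimax exchange altogether --- both sides are computed separately and compared --- whereas the paper's duality route is the generic machinery that also covers settings (De-Stein, flattened Wasserstein) where no pointwise elimination of $\xi$ is available; your caution about the degenerate set where $\omega$ vanishes and about $\zeta^\ast\in L^2_\omega$ is precisely the care the paper's formal argument glosses over. One caveat on the endpoints: with the parametrization $\omega(s)=\bigl((1-s)\sqrt\mu+s\sqrt\nu\bigr)^2$ as printed in the statement, one has $\omega(0)=\mu$ and $\omega(1)=\nu$, so your assignments $s=0\mapsto \|\dd\nu/\dd\mu-1\|^2_{L^2_\mu}$ and $s=1\mapsto \|\dd\mu/\dd\nu-1\|^2_{L^2_\nu}$ are the correct consequences of the stated formula but are swapped relative to items~1--2 of the proposition; the paper's own proof silently takes $\omega(0)=\nu$ (the geodesic run from $\nu$ to $\mu$), so this mismatch is an orientation inconsistency in the paper itself, not an error in your computation.
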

\begin{remark}
    [Fisher-Rao geodesic and flatness]
    The third case demonstrates a particularly nice property of the Fisher-Rao geometry.
    The Fisher-Rao geometry is not flat and possesses a geodesic structure~\eqref{eq:intro-FR-hamilton}.
    Yet its geodesic distance can be computed by a flat distance $\overline{\FR}_{\omega(\frac12)}(\mu,\nu)$ characterized in Proposition~\ref{thm:fr-linearized-static}.
\end{remark}

\section{Analysis}
\label{sec:analysis}

\subsection{Explicit connection between nonparametric regression and Rayleigh Principle}
\label{sec:nonparametric-regression}

In this section, we uncover a connection between
the entropy dissipation in gradient flows and the nonparametric regression
\begin{align}
    \argmin_{f\in \calF}
    \left\{\int \left(f(x) - y(x)\right)^2 \dd \mu(x)  
    +
    \lambda \|f\|^2_{\calF}\right\}
    ,
    \label{eq:nonparametric-regression-LSQ}
\end{align}
where $y$ is some target functions such as the generalized force $\dFdmu$ for Fisher-Rao and velocity $\nabla \dFdmu$ for Wasserstein.
This can be further unified under the general MLE formulation with negative log-likelihood
\begin{align}
        \argmin_{f\in \calF}\mathrm{NLL}(f; y, \mu)  
            ,
            \label{eq:gfe-krr-FRk-MLE}
\end{align}
    where 
    $\mathrm{NLL}$ is the negative-log-likelihood, e.g., $\frac{1}{2\sigma^2}\|f -  y\|^2_{L^2_{\mu}} +\lambda \|f\|^2_{L^2_{\mu}} $ with variance $\sigma^2$.
We now give an explicit characterization that connects gradient flow dissipation geometry with nonparametric regression.
Since Fisher-Rao and Wasserstein type flows are based on $L^2$ type geometries, we only focus on the least-squares type losses~\eqref{eq:nonparametric-regression-LSQ}.
Our main result in this subsection is a connection between the nonparametric regression~\eqref{eq:nonparametric-regression-LSQ} and the Helmholtz-Rayleigh Principle~\citep{rayleigh_general_1873}.
\begin{proposition}
    [Nonparametric regression as Helmholtz-Rayleigh Principle]
    In approximate Fisher-Rao, Wasserstein, and Wasserstein-Fisher-Rao flows, the nonparametric regression~\eqref{eq:nonparametric-regression-LSQ},
    where the target $y$ is the velocity field of the corresponding (pseudo-)Riemannian manifold,
    is equivalent to the optimization problem
    \begin{align}
        \argmin_{f\in \calF}
        \biggl\{
            {\frac{\dd}{\dd t}F(\mu^f_t)}
            +
            \frac12\|f\|_{L^2_{\mu}}^2
            +\frac{\lambda}2 \|f\|^2_{\calF}
            \biggr\}
            .
            \label{eq:regression-as-dissipation}
    \end{align}
    Furhermore, solving the nonparametric regression~\eqref{eq:nonparametric-regression-LSQ} is equivalent to the minimization problem in the form of the Rayleigh Principle~\citep{rayleigh_general_1873}
    \begin{align*}
        \argmin_{f\in \calF}
        \underbrace{\iprod{\DF}{\frac{\dd}{\dd t}{\mu^f_t}}}_{\textrm{energy}}
        +
        \underbrace{\frac12\|f\|_{L^2_{\mu}}^2
            +\frac{\lambda}2 \|f\|^2_{\calF}}_{\textrm{dissipation potential}}
            .
    \end{align*}
    \label{prop:regression-as-dissipation}
\end{proposition}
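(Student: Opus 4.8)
The plan is to reduce the claimed equivalence to a one-line algebraic identity after two preparatory observations, so that the only geometry-dependent content is a single chain-rule computation carried out separately in each of the three cases. First I would note that the two displayed variational problems in the statement are literally the same functional: by the definition of the first variation in \eqref{eq:first-var-def}, the chain rule gives
\[
\frac{\dd}{\dd t}F(\mu^f_t) = \iprod{\DF}{\tfrac{\dd}{\dd t}\mu^f_t},
\]
where $\tfrac{\dd}{\dd t}\mu^f_t$ is obtained by substituting the candidate field $f$ into the relevant flow equation. Thus it suffices to show that the least-squares regression \eqref{eq:nonparametric-regression-LSQ} shares its minimizer with the Rayleigh objective \eqref{eq:regression-as-dissipation}.

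Second, I would expand the regression objective and discard the terms independent of $f$. Writing $y$ for the target field,
\[
\|f-y\|^2_{L^2_\mu} + \lambda\|f\|^2_{\calF} = \|f\|^2_{L^2_\mu} - 2\iprod{f}{y}_{L^2_\mu} + \|y\|^2_{L^2_\mu} + \lambda\|f\|^2_{\calF},
\]
and since $\|y\|^2_{L^2_\mu}$ is constant in $f$, dividing by $2$ shows the regression shares its argmin with
\[
-\iprod{f}{y}_{L^2_\mu} + \tfrac12\|f\|^2_{L^2_\mu} + \tfrac{\lambda}2\|f\|^2_{\calF}.
\]
The last two terms are exactly the regularized dissipation potential ($\calR_{\LFRk}$ in the Fisher-Rao case, and its Wasserstein analogue in \eqref{eq:stein-reg-dissipation-primal}), so it remains only to identify the cross term $-\iprod{f}{y}_{L^2_\mu}$ with $\tfrac{\dd}{\dd t}F(\mu^f_t)$.

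Third, I would carry out this identification case by case using the flow equation. In the Fisher-Rao case $\tfrac{\dd}{\dd t}\mu^f_t = -\mu f$, so the identity above gives $\tfrac{\dd}{\dd t}F = -\iprod{\dFdmu}{f}_{L^2_\mu}$, matching the cross term with $y=\dFdmu$. In the Wasserstein case $\tfrac{\dd}{\dd t}\mu^f_t = \DIV(\mu f)$, and one integration by parts yields $\tfrac{\dd}{\dd t}F = -\iprod{\nabla\dFdmu}{f}_{L^2_\mu}$, matching the cross term with $y=\nabla\dFdmu$; the Wasserstein--Fisher-Rao case is the sum of these two contributions, with $y$ the corresponding pair of reaction and transport fields. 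In each case the cross term coincides with $\tfrac{\dd}{\dd t}F(\mu^f_t)$, so the reduced regression objective equals the Rayleigh objective, and the equivalence follows. The MLE form \eqref{eq:gfe-krr-FRk-MLE} collapses to \eqref{eq:nonparametric-regression-LSQ} for the Gaussian negative log-likelihood up to the rescaling by $\sigma^2$, so the same argument covers it.

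The main obstacle is the integration by parts in the Wasserstein and Wasserstein--Fisher-Rao cases: I would need to argue that the boundary contributions vanish (no-flux on $\partial\Omega$, or sufficient decay of $\mu$ and $\nabla\dFdmu$) and that $\dFdmu$ is regular enough that $\nabla\dFdmu\in L^2_\mu$, so the pairing is well defined. Everything else is elementary algebra and the bookkeeping of the factor $\tfrac12$ relating the quadratic regression penalty to the quadratic dissipation potential. A conceptual point worth stating explicitly is that $\mu^f_t$ must be read as the \emph{instantaneous} flow generated by the candidate $f$ at the current state $\mu$, so the proposition is a pointwise-in-time equivalence at each fixed $\mu=\mu_t$ rather than a statement about the whole trajectory.
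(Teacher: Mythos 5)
Your proposal is correct and follows essentially the same route as the paper's own (formal) proof: expand the square, drop the $f$-independent term $\|y\|^2_{L^2_\mu}$, and identify the cross term $-2\iprod{f}{y}_{L^2_\mu}$ with $2\,\frac{\dd}{\dd t}F(\mu^f_t)$ via the flow equation---directly in the Fisher-Rao case, through one integration by parts in the Wasserstein case, and by summing the two contributions (the paper's pair $(g,h)$) in the Wasserstein--Fisher-Rao case. Your explicit attention to the vanishing boundary terms, the regularity needed for $\nabla\dFdmu\in L^2_\mu$, and the pointwise-in-time reading of $\mu^f_t$ slightly exceeds the paper, which simply declares the proof formal.
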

Therefore,
the approximation of the gradient flow via nonparametric regression~\eqref{eq:nonparametric-regression-LSQ}
is equivalent to searching for the approximate growth or velocity field $f\in\calF$ that maximally dissipates the energy $F$, while regularized by its function class $\calF$, $L^2$ norm, and optionally a regularization term $\frac{\lambda}2 \|f\|^2_{\calF}$.
\begin{remark}
    [Helmholtz-Rayleigh Principle]
    While our discussion around Proposition~\ref{prop:regression-as-dissipation} is formal, it can be made mathematically rigorous in the form of the (Helmholtz-)Rayleigh Principle~\citep{rayleigh_general_1873}, also known as the maximum dissipation principle; see \citep[Proposition~5.2.1]{Miel15VAMD} for the rigorous statement.
    Our notation for the dissipation geometry $\calR, \calR^*$
    is also due to the Rayleigh dissipation function.
\end{remark}

As discussed in the previous subsection,
the inclusive KL divergence
generates the MMD.
Using Proposition~\ref{prop:regression-as-dissipation}, we show further connection to tools in machine learning in the following examples.
First, Proposition~\ref{prop:regression-as-dissipation} specialized to the Wasserstein setting gives a connection to generative models.
\begin{example}
    [Wasserstein case as implicit score-matching]

    A standard technique for solving a score-matching problem is the
    implicit score-matching (ISM)
    for solving regression
    \begin{align}
         \argmin_{f\in \calF}
            \|f - \nabla \log\frac{\mu}{\pi}\|_{L^2_{\mu}}^2 
            .
            \label{eq:score-matching-implicity}
        \end{align}
    Apply Proposition~\ref{prop:regression-as-dissipation} (with $\lambda=0$) in the Wasserstein setting with the KL entropy energy,
    we recover the ISM result~\citep{hyvarinenEstimationNonNormalizedStatistical,vincentConnectionScoreMatching2011}
    \begin{align}
        \argmin_{f\in \calF}
        \left\{- \iprod{f}{\nabla \log \frac{\mu}{\pi}}_{L^2_{\mu}}^2
        +\|f\|_{L^2_{\mu}}^2 
        \overset{\text{(IBP)}}{=}
        \int
        \left(f^2 +  {\DIV{f} }
        + {{f\cdot \nabla \log \pi} }\right){ {\mu}}\right\}
        .
        \label{eq:score-matching-implicity-2}
    \end{align}
    In practice, the estimator $\hat f$ in \eqref{eq:score-matching-implicity-2} can be fitted using, \eg deep neural networks,
    and used as the velocity field to update the particle locations,
    \ie performing Langevin update
$        X_{t+1} \gets X_t + \tau\cdot  \hat f(X_t)$.
    This technique has also been applied with neural network approximation for sampling by~\citet{dong2022particle}.
\end{example}

Going beyond Wasserstein, specializing Proposition~\ref{prop:regression-as-dissipation} to the Fisher-Rao setting results in a commonly used tool in machine learning applications.
\begin{example}
    [Inclusive KL dissipation in approximate $\FR^2$ and two-sample test]
    Let the energy functional
    be the inclusive KL divergence
    $F(\mu)=\mathrm{D}_{\mathrm{KL}}(\pi |\mu )$.
    Energy-dissipation formulation of nonparametric regression~\eqref{eq:regression-as-dissipation}
    is equivalent to the
    weak-norm formulation
    \begin{align*}
        \sup_{f\in \calF}
        \biggl\{
            \int f
            \dd\left( {{ \mu} - \pi}\right)
            -
            \frac12\|f\|_{L^2_{\mu}}^2
            -
            \frac{\lambda}2 \|f\|^2_{\calF}
            \biggr\}
    \end{align*}
    We have already seen this type of weak-norms in the de-kernelized and flattened geometries.
Specifically, the choice $\calF$ as the RKHS gives a regularized version of the MMD.
Weak-norm formulations,
often termed the integral probability metric (IPM),
are also
commonly used in machine learning
applications such as generative models~\citep{nowozin_f-gan_2016},
two-sample testing~\citep{gretton2012kernel},
and robust learning under distributional shifts~\citep{zhu_kernel_2021}.
\end{example}

Another implication of Proposition~\ref{prop:regression-as-dissipation} is that the Stein gradient flow can not be cast in the form~\eqref{eq:regression-as-dissipation} since kernel smoothing is not an \emph{M-estimator}, \ie it cannot be written as the solution of an optimization problem like \eqref{eq:nonparametric-regression-LSQ}.
However, it is possible to cast it as a local regression; see
texts in nonparametric statistics
\citep{spokoiny2016nonparametric,tsybakov_introduction_2009}.
\begin{example}
    [Approximation in the Stein setting as local regression and MLE]
    The
    particle approximation scheme for the Stein PDE~\eqref{eq:stein-PDE}
    was first proposed as the SVGD algorithm by \citet{liu_stein_2019}.
    We now cast the approximation in the Stein setting as local regression and thus local MLE
    \begin{align}
        \dot {\mu_t} = \DIV ({\mu_t}\cdot v_t), \quad
            v_t(x) = \argmin_{\theta \in \R^d}
            \biggl\{
                    \int \mu(x') k(x'-x) 
                    \biggl|\theta  -  \nabla \dFdmut(x')\biggr|^2
                    \dd x'
                \biggr\}
                ,
        \end{align}
        where the regression is now locally weighted using the shift-invariant kernel $k(x'-x)$.
        This problem admits a closed-form solution
        as a kernelized velocity
        \begin{align}
            v_t(x) = 
            \int \mu(x') \frac{k(x'-x)}{\int \mu(x') k(x'-x) \dd x'} \nabla \dFdmut(x') \dd x'
            .
        \end{align}
    Given data samples $\{x_i\}_{i=1}^N$,
    we obtain the kernel smoothing method known as the Nadaraya-Watson estimator
    $\displaystyle {\hat v(x) = \sum_{i=1}^N \frac{ k(x_i- x) }{\sum_{i=1}^N k(x_i- x)}\cdot  \nabla \dFdmut(x_i)}$, which is the SVGD velocity with a normalized kernel.
\end{example}

\begin{example}
    [Nadaraya-Watson estimator in the kernelized Fisher-Rao setting]
    Similar to Stein, in the kernelized Fisher-Rao setting, we have
    \begin{align}
        \dot {\mu_t} = -{\mu_t}\cdot r_t, \quad
            r_t(x) = \argmin_{\theta \in \R^d}
            \biggl\{
                    \int \mu(x') k(x'-x) 
                    \biggl|\theta  -   \dFdmut(x')\biggr|^2
                    \dd x'
                \biggr\}
                ,
        \end{align}
        with the closed-form solution as a kernelized growth field
        \begin{align}
            r_t(x) = 
            \int \mu(x') \frac{k(x'-x)}{\int \mu(x') k(x'-x) \dd x'}  \dFdmut(x') \dd x'
            ,
            \label{eq:kernelized-growth-field}
        \end{align}
        and a Nadaraya-Watson estimator
    $\displaystyle {\hat r(x) = \sum_{i=1}^N \frac{ k(x_i- x) }{\sum_{i=1}^N k(x_i- x)}\cdot  \dFdmut(x_i)}$.

    In particular, in the setting where the energy is the 
    inclusive KL-divergence,
        which is the
    $0$-th order power entropy dissipation $F(\mu)=\mathrm{D}_0(\mu|\pi )$ in \eqref{eq:kernelized-growth-field}, we obtain growth field
    $\int \frac{k(x'-x)}{\int \mu(x') k(x'-x) \dd x'}  \left( \mu(x') - \pi(x')\right) \dd x'$.
    Given two samples $\{y_i\}_{i=1}^N\sim \mu $ and $\{z_i\}_{i=1}^M\sim \pi$,
    a sample-based estimator of the growth field is the difference between two kernel density estimators
    \begin{align*}
        \hat r(x) =
            \sum_{i=1}^N \frac{ k(y_i- x) }{ \sum_{i=1}^N k(y_i- x)}
            -
            \sum_{i=1}^M\frac{ k(z_i- x) }{ \sum_{i=1}^M k(z_i- x)}
        .
    \end{align*}
\end{example}

\subsection{Evolutionary $\Gamma$-convergence at the approximation limit}
\label{sec:g-cvgs}
As
the regularization parameter $\lambda\to 0$,
techniques from
approximation theory and statistics can be used to show
consistency for \emph{fixed time $t$ point-wise}.
That is, the regression problems such as \eqref{eq:gfe-krr-FRk}~\eqref{eq:KRR-stein} yield the velocity $v_t$ or growth $r_t$ that converges to the target counterpart,
\eg $v_t\overset{\lambda\to 0^+}{\longrightarrow} \nabla \dFdmut$ for fixed $t$.
However, this point-wise convergence \emph{does not directly imply the convergence or the existence} of the approximating gradient systems.
For example, it has not been shown that
the approximate system generating $\dot {\mu_t} = -{\mu_t}\cdot r_t$ in \eqref{eq:gfe-krr-FRk} converges to the pure Fisher-Rao gradient system.

Different from the point-wise convergence of regression problems, we now rigorously justify this approximation limit of the gradient systems using
evolutionary
$\Gamma$-convergence in this section.
We focus on the kernel approximation to the Fisher-Rao gradient system in Section~\ref{sec:kern-FR};
see \eqref{eq:gfe-krr-FRk}, \eqref{eq:fr-kernel-reg-dual}.
Note that the rigor level of this subsection's analysis is elevated above the rest of the paper, \ie not merely formal arguments.
A rigorous result for the approximate Wasserstein(-Fisher-Rao) setting is beyond our current scope due to the technicality involved.

For general curves $u:[0,T]\to X$, where $X$ denotes the state space, e.g. the non-negative measures $\Mplus$.
We define the dissipation functional $\mfD_{\lambda}$ as
\begin{align*}
    \mfD_{\lambda}(u):=\int_0^T\left(
        \mathcal{R}_{\lambda}(u(t), \dot{u}(t))+\mathcal{R}_{\lambda}^*\left(u(t),-\mathrm{D} {F}(u(t))\right)
        \right) \mathrm{d} t
    ,
    \\
    \mfD_0(u)=\int_0^T\left(\mathcal{R}(u(t), \dot{u}(t))+\mathcal{R}^*\left(u(t),-\mathrm{D} {F}(u(t))\right)\right) \mathrm{d} t
    .
\end{align*}
The energy-dissipation principle (EDP) states that, under suitable technical assumptions (cf.\ \cite[Theorem~3.9]{mielke2023introduction}), $u:[0,T] \to X$ 
is a solution to the gradient-flow equation \eqref{eq:GFE} if and only if it
satisfies the following energy-dissipation inequality:
\begin{equation}
    \label{eq:EDP.general}
    F(u(t)) + \mfD_{\lambda}(u) \leq F(u(0)). 
\end{equation}
Thus, $\mfD_\lambda$ intrinsically encodes the gradient-flow dynamics.

\begin{definition}
    [EDP-convergence]
    A sequence of gradient systems $(X,F,\calR_\lambda)$ is said to 
    \emph{converge in the sense of the energy-dissipation principle} (EDP-converge) to $(X,F,\calR)$, shortly written as
    $(X,F,\calR_\lambda) \EDPto (X,F,\calR)$,
    if $\mfD_{\lambda}$ $\Gamma$-converges to $\mfD_{0}$ with bounded energies for all $T>0$,
    \ie
    \begin{align}
       (\Gamma\mafo{inf})\quad & u_{\lambda} \to u \text{ and } 
        \sup_{\lambda>0, 0\leq t\leq T} {F}\left(u_{\lambda}(t)\right)<\infty
        \implies
        \liminf_{\lambda\to 0^+} \mfD_{\lambda} \left(u_{\lambda}\right) \geq \mfD_0(u),
        \label{eq:edp-conv-inf}
        \\
         (\Gamma\mafo{sup})\quad & \forall\, \widehat{u} \in L^2\left([0,T]; X\right) \ \exists\ \widehat{u}_{\lambda}\text{ with }
        \sup_{\lambda>0, \: 0\leq t\leq T} {F}\left(\widehat{u}_{\lambda}(t)\right)<\infty, 
        :
        \nonumber
        \\
        &
        \quad\quad\quad\quad\quad\quad
        \quad\quad\quad\quad\quad\quad
          \widehat{u}_{\lambda} \rightarrow \widehat{u} \text { and } \limsup _{\lambda \rightarrow 0} \mfD_{\lambda}\left(\widehat{u}_{\lambda}\right) \leq  \mfD_0(\widehat{u}).
        \label{eq:edp-conv-sup}
    \end{align}

\end{definition}
Recall that the dissipation geometry of the regularized-approximate Fisher-Rao gradient system in Section~\ref{sec:kern-FR}.
\begin{align*}
    \calR_{\LFRk} (\rho,  u) =
    \frac{1}{2}
    \iprod{\frac{\delta u}{\delta \rho}}
    {
        \left(
            \ID {+} \lambda \K_\rho^{-1}
        \right)
        \frac{\delta u}{\delta \rho}
        }_{L^2_{\rho}}
    , \ \ 
    \calR_{\LFRk}^* (\rho,  \xi) =
    \frac{1}{2}
    \iprod{\xi}
    {
        \left(
            \K_\rho {+} \lambda \ID
            \right)
            ^{-1}
            \K_\rho
        \xi
        }_{L^2_{\rho}}
    .
\end{align*}
A useful observation is that $\calR_{\LFRk} (\rho,  u)$ is decreasing with decreasing $\lambda$. In fact, it is even affine. Since the Legrendre transform is anti-monotone, $\calR_{\LFRk} (\rho,  u)$ is increasing for $\lambda$ decreasing to $0$. To avoid technicalities, we do not show full EDP-convergence, but only the $\Gamma$-liminf estimate \eqref{eq:edp-conv-inf}, this means we stay in the $\Gamma$-convergence framework of \cite{Serf11GCGF}. This will be enough to conclude that solutions $\mu_\lambda$ of the regularized gradient-flow equation of $(\Mplus,F,\calR_{\LFRk})$ converge to solutions $\mu$ of the pure Fisher-Rao gradient-flow equation of $(\Mplus,F,\calR_{\FR})$, see Corollary \ref{cor:Cvg.GradFlowSol}.

\begin{theorem}[$\Gamma$-convergence of the kernel-approx. FR gradient systems]
\label{pr:EDPcvg.FR}
\mbox{ } \\
Assume that the functional $F:\Mplus\to \R$ satisfies the following assumptions:
\begin{align}
    &\text{the $\LFRk$-dissipation } \mu \mapsto \calR_{\LFRk} \big( \mu,\tfrac{\delta F}{\delta\mu}(\mu)\big) 
     \text{ is weakly lower semicontinuous}.
    \label{eq:EDP.Ass1}
\end{align}
    Then, the dissipation functional $\mfD_\lambda$ for the regularized approximate Fisher-Rao gradient system
    $(\Mplus, F,  \calR_{\LFRk})$ satisfies the $\Gamma$-liminf estimate \eqref{eq:edp-conv-inf}. 
\end{theorem}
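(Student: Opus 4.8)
The plan is to exploit the special affine-in-$\lambda$ structure of the dissipation potentials. Since $\calR_{\LFRk}(\rho,\cdot)=\calR_\FR(\rho,\cdot)+\tfrac{\lambda}{2}\|\tfrac{\delta\,\cdot}{\delta\rho}\|^2_{\rkhs}$ is affine and increasing in $\lambda$, it decreases monotonically to $\calR_\FR(\rho,\cdot)$ as $\lambda\downarrow 0$; by anti-monotonicity of the Legendre transform, $\calR_{\LFRk}^*(\rho,\cdot)$ increases monotonically to $\calR_\FR^*(\rho,\cdot)$. Writing $\mfD_\lambda(u_\lambda)$ as the sum of the \emph{velocity term} $\int_0^T \calR_{\LFRk}(u_\lambda,\dot u_\lambda)\,\rmd t$ and the \emph{force term} $\int_0^T \calR_{\LFRk}^*(u_\lambda,-\rmD F(u_\lambda))\,\rmd t$, both non-negative, I would invoke superadditivity of $\liminf$ and estimate the two terms separately, after passing to a subsequence along which $\liminf_\lambda\mfD_\lambda(u_\lambda)$ is finite (otherwise there is nothing to prove).

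For the velocity term I would use the finiteness of $\int_0^T \calR_{\LFRk}(u_\lambda,\dot u_\lambda)\,\rmd t$, which dominates the Fisher-Rao action $\int_0^T \calR_\FR(u_\lambda,\dot u_\lambda)\,\rmd t$ and hence gives weak compactness of the rates $\dot u_\lambda$ with limit $\dot u$. Combining the pointwise bound $\calR_{\LFRk}\ge\calR_\FR$ with the joint weak lower semicontinuity of the convex Fisher-Rao action (standard in the Ambrosio--Gigli--Savar\'e framework) yields $\liminf_\lambda\int_0^T \calR_{\LFRk}(u_\lambda,\dot u_\lambda)\,\rmd t\ge\int_0^T \calR_\FR(u,\dot u)\,\rmd t$.

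The force term is the delicate part, because there the monotonicity points the wrong way: $\calR_{\LFRk}^*\le\calR_\FR^*$, so a naive bound gives only an upper estimate. The remedy is a two-scale diagonal argument. For any fixed $\lambda_0>0$ and all $\lambda\le\lambda_0$, monotonicity gives $\calR_{\LFRk}^*(\rho,\xi)\ge\calR_{\lambda_0}^*(\rho,\xi)$, hence $\int_0^T \calR_{\LFRk}^*(u_\lambda,-\rmD F(u_\lambda))\,\rmd t\ge\int_0^T \calR_{\lambda_0}^*(u_\lambda,-\rmD F(u_\lambda))\,\rmd t$. Applying the weak lower semicontinuity of the fixed-$\lambda_0$ force dissipation, furnished by hypothesis \eqref{eq:EDP.Ass1} on the dissipation along the energy gradient and using the energy bound $\sup_{\lambda,t}F(u_\lambda(t))<\infty$ to keep the states in a weakly compact set and to control $\rmD F(u_\lambda)$, gives $\liminf_\lambda\int_0^T \calR_{\lambda_0}^*(u_\lambda,-\rmD F(u_\lambda))\,\rmd t\ge\int_0^T \calR_{\lambda_0}^*(u,-\rmD F(u))\,\rmd t$. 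Finally, letting $\lambda_0\downarrow 0$ and invoking monotone convergence on the increasing family $\calR_{\lambda_0}^*(u,-\rmD F(u))\uparrow\calR_\FR^*(u,-\rmD F(u))$ produces the lower bound $\int_0^T \calR_\FR^*(u,-\rmD F(u))\,\rmd t$.

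Adding the two estimates yields $\liminf_\lambda\mfD_\lambda(u_\lambda)\ge\mfD_0(u)$, which is \eqref{eq:edp-conv-inf}. I expect the main obstacle to be precisely the force term: reconciling the wrong-way inequality $\calR_{\LFRk}^*\le\calR_\FR^*$ with the need for a lower bound, resolved only by the two-scale construction together with monotone convergence, and verifying that the lower-semicontinuity hypothesis \eqref{eq:EDP.Ass1} genuinely transfers to the fixed-$\lambda_0$ force functional uniformly along the energy-bounded sequence, in particular controlling the $\rho$-dependence that enters through the state-dependent operator $\K_\rho$ and through $\rmD F(\rho)$.
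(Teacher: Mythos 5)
Your proposal is correct and follows essentially the same route as the paper's proof: the same rate/slope decomposition of $\mfD_\lambda$, the same bound $\calR_{\LFRk}\geq\calR_\FR$ plus lower semicontinuity of the Fisher--Rao action for the rate term, and for the slope term the identical two-scale argument --- your fixed $\lambda_0$ is the paper's $\delta$, with $\calR^*_{\LFRk}\geq\calR^*_{\DFRk}$ for $\lambda\leq\delta$, the lower-semicontinuity hypothesis \eqref{eq:EDP.Ass1}, Fatou's lemma in time (implicit in your write-up), and monotone convergence as $\delta\downarrow 0$. You even correctly identified the wrong-way inequality $\calR^*_{\LFRk}\leq\calR^*_\FR$ as the crux, which is precisely what the paper's step (C) resolves.
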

The above result allows us to conclude that
the solution of the approximate Fisher-Rao gradient flows
converges to that of the pure Fisher-Rao.
\begin{corollary}[Convergence of Gradient flow solutions]
\label{cor:Cvg.GradFlowSol}
    Let $\mu_\lambda$ be a sequence of solutions to the regularized Fisher-Rao gradient system $(\Mplus, F,  \calR_{\LFRk})$ in the sense of 
    energy-dissipation balance. Assume that the assumptions of 
    Theorem~\ref{pr:EDPcvg.FR} are satisfied.
    Suppose that for all 
    $t\in [0, T]$ we have $ \mu_\lambda(t) \rightharpoonup  \mu(t)$
    and that $F(\mu_\lambda(0)) \to F(\mu(0)) < \infty$. 
    
    Then, $ \mu:[0,T] \to (\Mplus,\FR)$ is absolutely continuous and a solution to the Fisher-Rao gradient system $(\Mplus, F,\FR)$.
\end{corollary}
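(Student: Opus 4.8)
The plan is to prove only the $\Gamma$-liminf estimate \eqref{eq:edp-conv-inf}, exploiting the fact that the two pieces of $\mfD_\lambda$ are monotone in $\lambda$ in \emph{opposite} directions. Given a sequence $u_\lambda \to u$ with $\sup_{\lambda,t} F(u_\lambda(t))<\infty$, I would split
\[
\mfD_\lambda(u_\lambda)= \underbrace{\int_0^T \calR_\lambda\big(u_\lambda,\dot u_\lambda\big)\,\dd t}_{=:A_\lambda} \;+\; \underbrace{\int_0^T \calR_\lambda^*\big(u_\lambda,-\rmD F(u_\lambda)\big)\,\dd t}_{=:B_\lambda},
\]
where $\calR_\lambda=\calR_{\LFRk}$, and bound $\liminf_\lambda \mfD_\lambda(u_\lambda)\geq \liminf_\lambda A_\lambda+\liminf_\lambda B_\lambda$ by superadditivity of $\liminf$. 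The structural input is the spectral computation already flagged in the text: if $\kappa_i\geq 0$ are the eigenvalues of $\K_\rho$, then $(\K_\rho{+}\lambda\ID)^{-1}\K_\rho$ has eigenvalues $\kappa_i/(\kappa_i{+}\lambda)\uparrow 1$ as $\lambda\downarrow 0$, so $\calR_\lambda^*\uparrow \calR_{\FR}^*$ monotonically, while $\calR_\lambda=\calR_{\FR}+\lambda\cdot(\text{nonneg.})$ decreases to $\calR_{\FR}$ and in particular $\calR_\lambda\geq \calR_{\FR}$ for every $\lambda>0$.

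For the kinetic term I would use $\calR_\lambda\geq \calR_{\FR}$ pointwise to get $A_\lambda\geq \int_0^T \calR_{\FR}(u_\lambda,\dot u_\lambda)\,\dd t$, and then invoke lower semicontinuity of the Fisher--Rao action. This is the standard joint convexity and weak-$*$ lower semicontinuity of the perspective functional $(\rho,\dot\rho)\mapsto \tfrac12\|\tfrac{\delta\dot\rho}{\delta\rho}\|^2_{L^2_\rho}=\tfrac12\int (\dd\dot\rho/\dd\Lambda)^2/(\dd\rho/\dd\Lambda)\,\dd\Lambda$ (the Hellinger analogue of the Benamou--Brenier lower-semicontinuity of \cite{ambrosio2008gradient}), which yields $\liminf_\lambda A_\lambda\geq \int_0^T \calR_{\FR}(u,\dot u)\,\dd t$.

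For the slope term I would run a diagonal argument using the opposite monotonicity. Fix an auxiliary level $\lambda_0>0$; since $\calR_\lambda^*\geq \calR_{\lambda_0}^*$ whenever $\lambda\leq \lambda_0$, we have $B_\lambda\geq \int_0^T \calR_{\lambda_0}^*(u_\lambda,-\rmD F(u_\lambda))\,\dd t$ for all small $\lambda$. Invoking the weak lower semicontinuity hypothesis \eqref{eq:EDP.Ass1}, which is precisely the statement that the force-evaluated dissipation at a fixed regularization level is weakly lower semicontinuous in $\mu$, applied pointwise in $t$ and combined with Fatou's lemma (the integrands are nonnegative), gives $\liminf_\lambda B_\lambda\geq \int_0^T \calR_{\lambda_0}^*(u,-\rmD F(u))\,\dd t$. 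This holds for every $\lambda_0>0$, so I then let $\lambda_0\downarrow 0$ and apply monotone convergence, $\calR_{\lambda_0}^*(u,-\rmD F(u))\uparrow \calR_{\FR}^*(u,-\rmD F(u))$, to obtain $\liminf_\lambda B_\lambda\geq \int_0^T \calR_{\FR}^*(u,-\rmD F(u))\,\dd t$. Adding the two estimates delivers $\liminf_\lambda \mfD_\lambda(u_\lambda)\geq \mfD_0(u)$.

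The main obstacle is exactly the weak lower semicontinuity of the slope contribution $\mu\mapsto \calR_\lambda^*(\mu,-\rmD F(\mu))$, which is why it is imposed as hypothesis \eqref{eq:EDP.Ass1} rather than derived: this map couples three simultaneous $\mu$-dependencies — the state-dependent integral operator $\K_\mu$ (defined with respect to a non-negative, not a probability, measure), the weighting of $L^2_\mu$, and the force $\rmD F(\mu)=\delta F/\delta\mu$ — none individually continuous under mere weak convergence $\mu_\lambda\rightharpoonup\mu$; the hypothesis is stated at the level of the composite functional precisely to absorb these difficulties and the needed convergence of the force. Finally, the corollary follows from the Serfaty framework \cite{Serf11GCGF}: the $\Gamma$-liminf estimate together with the energy-dissipation balance $F(\mu_\lambda(T))+\mfD_\lambda(\mu_\lambda)=F(\mu_\lambda(0))$ along the solutions, plus $F(\mu_\lambda(0))\to F(\mu(0))$, forces $F(\mu(T))+\mfD_0(\mu)\leq F(\mu(0))$, i.e.\ the energy-dissipation characterization of the pure Fisher--Rao gradient flow, giving absolute continuity and the solution property of the limit $\mu$.
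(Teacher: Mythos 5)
Your proposal is correct and follows essentially the same route as the paper: the final limit passage (combining the $\Gamma$-liminf estimate with the energy-dissipation balance $F(\mu_\lambda(T))+\mfD_\lambda(\mu_\lambda)\leq F(\mu_\lambda(0))$ and $F(\mu_\lambda(0))\to F(\mu(0))$, then invoking the EDP at $\lambda=0$ in the spirit of \cite{Serf11GCGF}) is exactly the paper's proof of the corollary. The only difference is presentational: you inline a proof of the $\Gamma$-liminf estimate that the paper delegates to Theorem~\ref{pr:EDPcvg.FR}, and your inlined argument (rate/slope splitting, $\calR_{\LFRk}\geq\calR_\FR$ with lower semicontinuity of the Fisher--Rao action, the opposite monotonicity $\calR^*_\lambda\geq\calR^*_{\lambda_0}$ for $\lambda\leq\lambda_0$ combined with hypothesis \eqref{eq:EDP.Ass1}, Fatou, and monotone convergence as $\lambda_0\downarrow 0$) coincides step for step with the paper's own proof of that theorem.
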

\begin{proof} 
    [Proof of Corollary~\ref{cor:Cvg.GradFlowSol}]
    By Theorem~\ref{pr:EDPcvg.FR} we know that $ \mu$ satisfies 
\[
\int_0^T | \dot \mu|_\FR(t)^2 \dd t < \infty \quad \text{and} 
\quad F(\mu(T)) + \mfD_0( \mu) \leq F(\mu(0)). 
\]
The last relation follows by the EDP for $\mu_\lambda$, namely $F(\mu_\lambda(T)) + \mfD_\lambda( \mu_\lambda) \leq F(\mu_\lambda(0))$ (see \eqref{eq:EDP.general}) and the limit passage $\lambda \downarrow 0$. Now exploiting the EDP for $\lambda =0$ we see that $\mu$ is a solution for $(\Mplus,F,\calR_\FR)$. 
\end{proof}
Thus far, we have answered the question we posed earlier:
the approximate flow,
using a regression formulation such as \eqref{eq:gfe-krr-FRk-npreg},
is indeed a gradient flow that converges to the target gradient-flow system such as the Fisher-Rao gradient flow,
in the sense of evolutionary $\Gamma$-convergence.
This provides the mathematical basis for ``learning'' the flow for machine learning applications.

\section{Further proofs}
\label{sec:proof}
\begin{proof}
    [Proof of Lemma~\ref{thm:mmd-variational-form}]    
By definition,
    $\mmd ^p (\mu, \nu) =  
    \|\K\mu - \K \nu\|^p_\rkhs.$
We introduce the auxiliary variable $f = \int k(x, \cdot )\dd \mu(x)$, then apply the Lagrange duality to the constrained optimization problem
\begin{align*}
    \inf_{f\in\rkhs}
    \|f - \K \nu\|^p_\rkhs 
    \ \ST\ f = \int k(x, \cdot )\dd \mu(x).
\end{align*}
Finally, we associate the equality constraint $f = \int k(x, \cdot )\dd \mu(x)$ with the dual variable $h\in\mathcal H$.
By the first order optimality condition,
\begin{align}
    2 \cdot (f - \K \nu) = -h,
    \
        f = \K \nu  -\frac1{2}h.
\end{align}
Hence,
\begin{align}
         \mmd ^2 (\mu, \nu) 
        &=  
    \sup_{h\in\rkhs}
    \int{h}\dd{(\nu-\mu)}
    +
    \|\frac{h}{2}\|^2_\rkhs
    -\frac1{2}
    \|{h}\|^2_\rkhs 
    \\
    &=
    \sup_{h\in\rkhs}
    \int{h}\dd{(\nu-\mu)}
    -\frac1{4}
    \|{h}\|^2_\rkhs 
    .
\end{align}
The optimizing $h^*$ can be further obtained by directly solving the quadratic program.
\end{proof}
For aesthetic reasons, we need the following lemma whose derivation is an exercise in convex analysis.
\begin{lemma}
    For a scaling parameter $\tau>0$, we find
\begin{align}
\frac1{2\tau}\cdot \mmd^2( \mu, \nu)
=
\sup_{h\in\rkhs}
\intprod{h}{ (\mu- \nu)}
-\frac{\tau}{2}\|h\|^2_\rkhs
            .
            \label{eq:ipm-dual-constr-two-tau}
\end{align}
\label{lmm:ipm-dual-scaling}
\end{lemma}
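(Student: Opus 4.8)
The final statement is Lemma~\ref{lmm:ipm-dual-scaling}, which generalizes the unconstrained dual formula of Lemma~\ref{thm:mmd-variational-form} by inserting a scaling parameter $\tau>0$: we must show
\[
\frac1{2\tau}\cdot \mmd^2( \mu, \nu)
=
\sup_{h\in\rkhs}
\Big\{\intprod{h}{ (\mu- \nu)}
-\frac{\tau}{2}\|h\|^2_\rkhs\Big\}.
\]

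**Plan.** The cleanest route is to reduce directly to Lemma~\ref{thm:mmd-variational-form}, which already establishes the case $\tau=\frac12$ (after matching the $\frac14$ coefficient there). The plan is to treat the right-hand side as a parametrized family of concave maximization problems in $h$ and exploit the exact solvability of the quadratic program. First I would observe that the objective $\Phi_\tau(h):=\intprod{h}{\mu-\nu} - \frac\tau2\|h\|^2_\rkhs$ is a strictly concave quadratic functional on the Hilbert space $\rkhs$, so its supremum is attained at the unique critical point. Using the reproducing property, the linear term can be rewritten via the kernel mean embedding as $\intprod{h}{\mu-\nu}=\langle h,\,\K(\mu-\nu)\rangle_\rkhs$, since $\int h\,\dd(\mu-\nu)=\langle h,\,\int k(\cdot,x)\,\dd(\mu-\nu)(x)\rangle_\rkhs$ by the reproducing property of $\rkhs$.

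**Key steps.** With the inner-product form in hand, setting the Fr\'echet derivative of $\Phi_\tau$ to zero gives the first-order condition $\K(\mu-\nu) - \tau\,h = 0$, i.e.\ the optimizer is $h^*_\tau = \tfrac1\tau\,\K(\mu-\nu)$. Substituting back yields
\[
\Phi_\tau(h^*_\tau)
= \tfrac1\tau\|\K(\mu-\nu)\|^2_\rkhs - \tfrac{\tau}{2}\cdot\tfrac1{\tau^2}\|\K(\mu-\nu)\|^2_\rkhs
= \tfrac1{2\tau}\|\K(\mu-\nu)\|^2_\rkhs,
\]
and since $\|\K(\mu-\nu)\|^2_\rkhs=\mmd^2(\mu,\nu)$ by the Hilbert-space norm representation of the MMD recalled before \eqref{eq:ipm-mmd-def-main}, this is exactly the claimed identity. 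Alternatively, and perhaps more in the spirit of ``an exercise in convex analysis,'' I would note that the map $h\mapsto \tau h$ is a bijection of $\rkhs$, so substituting $g=\tau h$ reduces the $\tau$-scaled supremum to the $\tau=1$ case up to the overall $\tfrac1\tau$ factor; combined with Lemma~\ref{thm:mmd-variational-form} this gives the result without recomputing the optimizer. Either reduction suffices.

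**Main obstacle.** There is no genuine analytic difficulty here: attainment is immediate from strict concavity and coercivity of $\Phi_\tau$ on the Hilbert space $\rkhs$, and the reproducing property handles the pairing rigorously even when $\mu,\nu$ are merely measures (the kernel mean embedding $\K\mu\in\rkhs$ is well-defined under the square-integrability hypothesis of Theorem~\ref{thm:rkhs-int-operator}). The only point requiring a line of care is confirming that $\K(\mu-\nu)$ indeed lies in $\rkhs$ so that $h^*_\tau$ is admissible and $\|\K(\mu-\nu)\|_\rkhs$ is finite; this follows from the boundedness of the embedding. Thus the ``hard part'' is purely bookkeeping, and I expect the proof to amount to the one-variable rescaling argument plus a citation to Lemma~\ref{thm:mmd-variational-form}.
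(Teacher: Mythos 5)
Your argument is correct, and it is essentially the proof the paper has in mind: the paper states this lemma \emph{without} proof, declaring its derivation ``an exercise in convex analysis,'' and your direct computation --- rewriting the pairing as $\int h\,\dd(\mu-\nu)=\langle h,\K(\mu-\nu)\rangle_\rkhs$ via the reproducing property, maximizing the strictly concave quadratic at the unique critical point $h^*_\tau=\tfrac1\tau\K(\mu-\nu)$, and evaluating to $\tfrac1{2\tau}\|\K(\mu-\nu)\|^2_\rkhs=\tfrac1{2\tau}\mmd^2(\mu,\nu)$ --- is exactly that exercise, carried out in the same first-order-optimality style as the paper's proof of Lemma~\ref{thm:mmd-variational-form}. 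One cosmetic slip in your alternative rescaling route: to invoke Lemma~\ref{thm:mmd-variational-form} directly (whose penalty is $\tfrac14\|g\|^2_\rkhs$, i.e.\ the case $\tau=\tfrac12$) the substitution should be $g=2\tau h$ rather than $g=\tau h$, but since your primary argument is complete as written, nothing depends on this.
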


\begin{proof}
    [Proof of Theorem~\ref{thm:mmd-fr-dynamic}]
    We derive the force-kernelized MMD gradient flow, i.e., we replace the differential
    $\xi_t$ with its kernelization $\K_{\mu_t}^\frac12 \xi_t$ by Definition~\ref{def:f-v-kernelize}.
    The boundary conditions of \eqref{eq:bb-formula-mmd} are trivially equivalent to $\mu(0)=\nu, \mu(1)=\mu$.
    Following the dynamic definition of the Stein distance,
    the cost of the trajectory optimization problem in the MMD formulation $\frac12\|  \xi_t \|^2_{\rkhs}=\frac12\iprod{\xi_t}{\K^{-1} \xi_t}_{L^2_\mu}$,
    as well as the dual dissipation potential, should be replaced with an additional (weighted) $\K_{\mu_t}$ operation by 
    \begin{align*}
    \frac12\iprod{\K^{-\frac12} \K_{\mu_t}^\frac12 \xi_t}{ \K^{-\frac12} \K_{\mu_t}^\frac12 \xi_t}_{L^2}=
    \frac12\|   \xi_t \|^2_{L^2(\mu_t)} 
    .
    \end{align*}
    Using this gradient structure,
    the gradient-flow equation is obtained as
    $$\K \dot \mu = -    \K_{\mu_t} \xi_t.$$
    Since the convolutional operator $\K$ is positive definite,
    this gradient-flow equation is equivalent to the reaction equation in \eqref{eq:bb-formula-fr}, \ie
    $\dot \mu = -   \mu_t\cdot   \xi_t$.
    Therefore, we have recovered the Fisher-Rao geodesics.
\end{proof}

\begin{proof}
    [Proof of Proposition~\ref{thm:fr-linearized-static}]
    If $s=0$, then $\omega(0)=\nu$.
    By Fenchel-duality,
    the linearized FR can be written as
    \begin{align*}
        \sup_{\zeta}
            \int \zeta\dd (\mu - \nu)
            -\frac14
            \|\zeta\|^2_{L^2_\nu}
            =
            \sup_{\zeta}
            \int 
            \left(
                \zeta
                \cdot 
            (\frac{\dd \mu}{\dd\nu}-1)
            -\frac14\zeta^2
            \right)
            \dd \nu
            \\
            =
            \int
            \left(
                \frac{\dd \mu}{\dd \nu} - 1
                \right)^2
            \dd \nu
            =\mathrm{D}_{\chi^2}(\mu|\nu)
            ,
    \end{align*}
    hence the equivalence to the $\chi^2$-divergence.    
    The case of $s=1, \omega(1)=\mu$ is similar.

    If $s=\frac12$, then $\omega(\frac12)=\frac14(\sqrt{\mu}+\sqrt{\nu})^2$.
    We find
    \begin{multline}
        \sup_{\zeta}
            \int 
            \left(
                \zeta
                \cdot 
            \frac{\sqrt{\mu}-\sqrt{\nu}}{\sqrt{\mu}+\sqrt{\nu}}
            -\frac1{16}\zeta^2
            \right)
            \cdot 
            \left(
                \sqrt{\mu}+\sqrt{\nu}
            \right)^2
            \dd x
            \\
            =
            \int
            4\left(
                \frac{\sqrt{\mu}-\sqrt{\nu}}{\sqrt{\mu}+\sqrt{\nu}}
                \right)^2
                \cdot 
                \left(
                    \sqrt{\mu}+\sqrt{\nu}
                \right)^2
                \dd x
            =4\|\sqrt{\mu}-\sqrt{\nu}\|^2_{L^2}
            =\FR^2(\mu,\nu)
            .
    \end{multline}
\end{proof}

\begin{proof}
    [Proof of Proposition~\ref{thm:sobolev-ipm}]
Let $\nabla \zeta_t\in \nabla C^\infty_0$ be the test function with zero boundary condition.
Taking time derivative along the flow solution,
\begin{multline}
\frac{\dd}{\dd t}\int \zeta_t \mu_t 
\overset{\textrm{(product)}}{=} 
\int\partial_t \zeta_t \mu_t + \int \zeta_t \partial_t\mu_t 
\overset{\textrm{(dynamics)}}{=}
\int\partial_t \zeta_t \mu_t - \int \zeta_t \DIV ( \K^{-1}\nabla \xi)
\\
\overset{\textrm{(IBP)}}{=}
\int\partial_t \zeta_t \mu_t + \int \int \K^{-\frac12}\nabla \zeta_t  \K^{-\frac12}\nabla \xi \dd x\dd t
.
\label{eq:time-der-test-otto-villani}
\end{multline}

Completing the squares for the last term, we find
\begin{multline*}
    \int \int \K^{-\frac12}\nabla \zeta_t  \K^{-\frac12}\nabla \xi  \dd x\dd t
    = \int \dd t
    \biggl[
    \int \K^{-\frac12}\nabla \zeta_t  \K^{-\frac12}\nabla \xi  \dd x
    \\
    -\frac{1}{2}
    \left(
    \|\K^{-\frac12}\nabla \zeta_t\|^2_{L^2} + \|\K^{-\frac12}\nabla \xi_t\|^2_{L^2}
    \right)
    +\frac{1}{2}
    \left(
    \|\K^{-\frac12}\nabla \zeta_t\|^2_{L^2} + \|\K^{-\frac12}\nabla \xi_t\|^2_{L^2}
    \right)
    \biggr]
    \\
    =
    \frac12
    \int \dd t
    \left[
    -\|\nabla \zeta - \nabla \xi \|^2_\rkhs
    +
    \| \nabla\zeta_t\|^2_{\rkhs} + \|\nabla\xi_t\|^2_{\rkhs}
    \right]
    .
\end{multline*}
Integrating \eqref{eq:time-der-test-otto-villani} w.r.t. time $t$ and rearranging the terms
\begin{multline}
    \frac12\int  \|\nabla\xi_t\|^2_{\rkhs}\dd t
    \\
    =
    \int \zeta_1\dd \mu_1 - \zeta_0\dd \mu_0
    -
    \int \int
        \partial_t \zeta_t \cdot \mu_t
        \dd x \dd t
        -\frac12
        \int \|\nabla \zeta_t\|^2_{\rkhs}
        \dd t
        +
        \frac12 
        \int \|\nabla \zeta - \nabla \xi \|^2_\rkhs\dd t
        .
        \label{eq:pf-otto-villani-2}
\end{multline}
We now consider the duality in the optimization problem of the BB-formula, \ie
\begin{align*}
    \inf_{\xi,\mu}
    \frac12\int  \|\nabla\xi_t\|^2_{\rkhs}\dd t
    =
    \sup_\zeta
    \inf_{\xi,\mu} \left(\textrm{RHS of }\eqref{eq:pf-otto-villani-2}\right)
\end{align*}
The crucial feature of \eqref{eq:pf-otto-villani-2}
is that
the term $\frac12\int \|\nabla \zeta_t\|^2_{\rkhs}
\dd t$ is independent of the measure $\mu_t$.
Therefore, in order for the infimum w.r.t. $\mu$ to be finite, we require the condition $\partial_t \zeta_t\le 0$ to hold.
At optimality, we recover the adjoint equation in the Hamiltonian dynamics
$$\partial_t \zeta= 0
,
$$
\ie $\zeta$ is a time-independent (static) function.
Hence, \emph{the optimization problem is greatly simplified
to a static setting}, which is in contrast to the Wasserstein and Stein settings.
We find 
\begin{align*}
    \frac12\cdot 
    \inf_{\xi,\mu}
    \int  \|\nabla\xi_t\|^2_{\rkhs}\dd t
    =
    \sup_\zeta
    \left\{
        \int \zeta\dd (\mu_1 - \mu_0)
        -\frac12
         \|\nabla \zeta\|^2_{\rkhs}
    \right\}
\end{align*}
Noting the scaling in the dual formulation from Lemma~\ref{lmm:ipm-dual-scaling},
we find that the transport cost coincides with the static Kantorovich dual formulation.
\end{proof}

\begin{proof}
    [Proof of Proposition~\ref{prop:kernel-discrepancies} and Proposition~\ref{prop:stein-dissipation}]
    For the $\varphi$-divergence energy dissipation in kernelized Fisher-Rao gradient flow,
    \begin{multline*}
        \calI^{\FRk}_\varphi(\mu|\pi)
        =
        -\frac{\dd}{\dd t} \mathrm{D}_\varphi(\mu\|\pi)
        =
        -
        \iprod{\varphi^\prime\left(\frac{\dd \mu}{\dd \pi }\right)}{ \K_\mu  \varphi^\prime\left(\frac{\dd \mu}{\dd \pi }\right)}_{L^2_\mu}
        \\
        \overset{\textrm{(IBP)}}{=}
        \int
        \int
        \frac{\dd \mu}{\dd \pi }(x)
        \cdot 
        \varphi^\prime\left(\frac{\dd \mu}{\dd \pi }(x)\right)
        k(x,y)
        \frac{\dd \mu}{\dd \pi }(y)
        \cdot
        \varphi^\prime\left(\frac{\dd \mu}{\dd \pi }(y)\right)
        \dd \pi (x)
        \dd \pi (y)
        .
    \end{multline*}
    For the $\varphi$-divergence energy dissipation in Stein gradient flow,
    \begin{multline*}
        \calI^{\operatorname{Stein}}_\varphi(\mu(t)|\pi)
        =
        -\frac{\dd}{\dd t} \mathrm{D}_\varphi(\mu(t)\|\pi)
        =
        -\iprod{\varphi^\prime\left(\frac{\dd \mu}{\dd \pi }\right)}
        { \DIV \left(\mu \K_\mu \nabla  \varphi^\prime\left(\frac{\dd \mu}{\dd \pi }\right)\right)}_{L^2}
        \\
        \overset{\textrm{(IBP)}}{=}
        \int\int
        \frac{\dd \mu}{\dd \pi }(x)
        \nabla \varphi^\prime\left(\frac{\dd \mu}{\dd \pi }(x)\right)
        k(x,y)
        \frac{\dd \mu}{\dd \pi }(y)
        \nabla \varphi^\prime\left(\frac{\dd \mu}{\dd \pi }(y)\right)
        \dd \pi (x)
        \dd \pi (y)
        .
    \end{multline*}
    Other calculation is straightforward.
\end{proof}

\begin{proof}
    [Proof of Proposition~\ref{prop:regression-as-dissipation}]
    The proof is formal.
    Consider the Fisher-Rao type approximate flows
    $\displaystyle {{\dot\mu}_t^f = - {\mu_t^f}\cdot f_t} $ where the approximate growth field $f$ is obtained by solving the nonparametric regression~\eqref{eq:nonparametric-regression-LSQ}.
    Here, the target of the regression is $\DF$, the derivative of energy $F$ in the respective geoemtry, e.g., in the sense of Fr\'echet.
    Then,
    \begin{multline*}
            \|f - \DF \|_{L^2_{\mu^f}}^2  +\lambda \|f\|^2_{\calF}
            =
            \|f\|_{L^2_{\mu^f}}^2 - 2 \iprod{f}{\DF}_{L^2_{\mu^f}}
            + \|\DF\|_{L^2_{\mu^f}}^2 +\lambda \|f\|^2_{\calF}
            \\
            =
            2\cdot \frac{\dd}{\dd t}F(\mu^f_t)+
            \|f\|_{L^2_{\mu^f}}^2  
            +\lambda \|f\|^2_{\calF}
            + \|\DF\|_{L^2_{\mu^f}}^2 
            .
        \end{multline*}
    In the Wasserstein type approximate flows
        $\displaystyle \dot {\mu_t} = \DIV (\mu_t\cdot {f_t})$,
        a similar derivation with IBP
        yields
        \begin{align*}
                \|f - \nabla \DF \|_{L^2_{\mu^f}}^2  +\lambda \|f\|^2_{\calF}
                =
                \|f\|_{L^2_{\mu^f}}^2 - 2 \iprod{f}{\nabla\DF}_{L^2_{\mu^f}} + \|\DF\|_{L^2_{\mu^f}}^2 +\lambda \|f\|^2_{\calF}
                \\
                \overset{\mathrm{(IBP)}}{=}
                2\cdot \frac{\dd}{\dd t}F(\mu^f_t)+
                \|f\|_{L^2_{\mu^f}}^2  
                +\lambda \|f\|^2_{\calF}
                + \|\nabla \DF\|_{L^2_{\mu^f}}^2 
                .
            \end{align*}
    Combining those two cases, we obtain the similar formulation for the Wasserstein-Fisher-Rao type flows,
        \begin{multline*}
            \|g - \DF \|_{L^2_{\mu^{g,h}}}^2 + \|h - \nabla \DF \|_{L^2_{\mu^{g,h}}}^2  +\lambda (\|g\|^2_{\calF}+\|h\|^2_{\calF})
                \\=
                2\cdot \frac{\dd}{\dd t}F(\mu^{g,h}_t)
                +\|g\|_{L^2_{\mu^{g,h}}}^2  
                +\|h\|_{L^2_{\mu^{g,h}}}^2  
                +\lambda  (\|g\|^2_{\calF}+\|h\|^2_{\calF})
            + \|\DF\|_{L^2_{\mu^f}}^2 
                + \|\nabla \DF\|_{L^2_{\mu^{g,h}}}^2 
                .
            \end{multline*}
    By setting the  product $f=g\otimes h$, we
    observe that the formal calculation in different geometries above results in the same optimization objectives on the right-hand sides independent of the dissipation geometries.
\end{proof}

\begin{proof}
    [Proof of Proposition~\ref{prop:local-Lojas-estimate-fine}]
    For constant $0\leq s \leq 1$ and a set of orthonormal
bases $\left\{e_j\right\}$ of ${L^2_{\mu}}$,
\begin{multline}
    \frac{\dd}{\dd t}
    \left(
        F(\mu(t)) - \inf_\mu F(\mu)
    \right) 
= 
-\langle{\dFdmu}, \dot u \rangle_{L^2_{\mu_t}}
    =
-
\iprod{\dFdmu}
    {
        \left(
            \K_\mu, + \lambda \ID
            \right)
            ^{-1}
            \K_\mu,
        \dFdmu
        }_{L^2_{\mu}}
\\
    =
    -\|{\dFdmu}\|^2_{L^2_{\mu}}
    +
    \sum_j
    \frac{\lambda}{\sigma_j + \lambda}
    \cdot
    \left| 
\iprod{\dFdmu}{e_j}
\right|^2
    =
    -\|{\dFdmut}\|^2_{L^2_{\mu}}
    \\
    +
    {\lambda}^s
    \sum_j
    \left(\frac{\lambda}{\sigma_j + \lambda}\right)^{1-s}
    \cdot
    \frac{
        \left| 
            \iprod{\dFdmut}{e_j}
        \right|^2
        }
        {\left(\sigma_j + \lambda\right)^{s}}
        \leq 
        -\|{\dFdmut}\|^2_{L^2_{\mu}}
    +
    {\lambda}^s
        \|
        \left(\K_\mu + \lambda \ID \right)^{-\frac{s}{2}}\ \dFdmut
        \|^2_{L^2_{\mu}}
    .
    \label{eq:approx-Lojas-fine}
    \end{multline}
\end{proof}

\begin{proof}
    [Proof of Theorem~\ref{pr:EDPcvg.FR}]
As in \cite{Serf11GCGF} we decompose $\mfD_\lambda=\mfD^\mafo{rate}_\lambda+\mfD^\mafo{slope}_\lambda$ into a rate part and a slope part: 
\[
\mfD^\mafo{rate}_\lambda(\mu)=\int_0^T\calR_{\LFRk}(\mu,\dot \mu)\dd t \quad\text{and} \quad 
\mfD^\mafo{slope}_\lambda(\mu)=\int_0^T\calR^*_{\LFRk} \big(\mu,
      \tfrac{\delta F}{\delta \mu}(\mu)\big)\dd t .
\]
\underline{(A) Extraction of a converging subsequence.} 
By standard arguments, it is sufficient to consider a family $(\mu_\lambda)_{\lambda>0}$ of curves with 
$\mfD_\lambda(\mu_\lambda) \leq C < \infty$. Using $\calR_{\LFRk}\geq \calR_{\FR}$ we obtain $\int_0^T |\mu'_\lambda|^2_\FR(t) \dd t \leq C$. This implies that there exists a subsequence (not relabeled) and a limit curve $\mu_0$ with
\begin{equation}
    \label{eq:mu.Extract}
\int_0^T |\mu'_0|^2_\FR(t) \dd t \leq C \quad \text{and} \quad 
\forall\, t\in [0,T]:\ \mu_\lambda(t) \rightharpoonup \mu_0(t), 
\end{equation}
where weak convergence is meant in the sense of testing with continuous functions.
\medskip

\noindent\underline{(B) $\Gamma$-liminf estimate for $ \mfD^\mafo{rate}_\lambda$.} For this we exploit $ \calR_{\LFRk} \geq \calR_\FR=\frac12|\mu'|^2_\FR$:
\[
\liminf_{\lambda\downarrow 0} \mfD^\mafo{rate}_\lambda(\mu_\lambda)\geq 
\liminf_{\lambda\downarrow 0} \mfD^\mafo{rate}_0(\mu_\lambda)= 
\liminf_{\lambda\downarrow 0} \int_0^T \!\frac12|\mu'_\lambda|(t)^2 \dd t \geq 
\int_0^T \!\frac12|\mu'_0|(t)^2 \dd t = \mfD^\mafo{rate}_0(\mu_0).  
\]
\noindent\underline{(C) $\Gamma$-liminf estimate for $ \mfD^\mafo{slope}_\lambda$.} 
To treat the slope term we use the monotonicity $\calR^*_{\LFRk} \geq \calR^*_{\DFRk} $ for $0<\lambda \leq \delta$ and the weak lower semi-continuity assumed in \eqref{eq:EDP.Ass1}. For fixed $\delta>0$ we have 
\begin{align*}
 \liminf_{\lambda \downarrow 0} 
     \calR^*_{\LFRk}(\mu_\lambda,\tfrac{\delta F}{\delta \mu}(\mu_\lambda)) 
 & \geq \liminf_{\lambda \downarrow 0} 
    \calR^*_{\DFRk}(\mu_\lambda,\tfrac{\delta F}{\delta \mu}(\mu_\lambda)) 
 \geq \calR^*_{\DFRk}(\mu_0,\tfrac{\delta F}{\delta \mu}(\mu_0)),
\end{align*}
where the last estimate follows because of \eqref{eq:EDP.Ass1}. Integration over $t\in [0,T]$, Fatou's lemma yields 
\[
\liminf_{\lambda\downarrow 0} \mfD^\mafo{slope}_\lambda (\mu_\lambda) \geq \int_0^T 
\liminf_{\lambda \downarrow 0} 
     \calR^*_{\LFRk}(\mu_\lambda,\tfrac{\delta F}{\delta \mu}(\mu_\lambda)) \dd t 
  \geq \mfD^\mafo{slope}_\delta( \mu_0) \overset{\delta \downarrow 0}\longrightarrow  \mfD^\mafo{slope}_0(\mu_0), 
\]  
where the last convergence follows by the monotone convergence principle. 

Hence, the desired $\Gamma$-liminf estimate for $ \mfD_\lambda=\mfD^\mafo{rate}_\lambda+\mfD^\mafo{slope}_\lambda$ is established. 
\end{proof}

\section{Discussion}
\label{sec:conclude}
Historically,
geometries over probability measures such as optimal transport and information geometry had a tremendous impact on computational algorithms in optimization and statistical inference.
It is our hope that
the new geometric structure studied in this paper can motivate downstream applications and further investigations.

For example,
in this paper, we used kernel approximation due to its simplicity and well-established approximation theory.
In modern machine learning,
practitioners often use nonlinear models such as deep neural networks
and scalable approximations of kernel machines.
With our nonparametric regression formulation,
it is straightforward to use those learning models.

In terms of generative models,
we observe a direct correspondence of
our nonparametric regression
formulation in Proposition~\ref{prop:regression-as-dissipation}
to generative models such as the flow-matching~\citep{lipman2022flow} and score-matching~\citep{song2020score} algorithms.
The advantage of our characterization using the Rayleigh Principle in Proposition~\ref{prop:regression-as-dissipation}
is that it does not designate a specific geometry,
\ie matching of quantities such as score, velocity, force, and growth
can be expressed in the same formalism of the Rayleigh Principle.

\appendix
\section{Other technical details}

\subsection*{Kernel discrepancies and entropy dissipation in kernelized geometries}
We now show that
the entropy dissipation in the kernelized gradient flows generates a class of discrepancies
meaningful for machine learning applications, in the form of interaction energy.
\begin{proposition}
    [Discrepancies via entropy dissipation in kernelized FR]
    \label{prop:kernel-discrepancies}
    The\\
    dissipation of energy $F$ in the kernelized Fisher-Rao gradient flow ($\FRk$)
    is an interaction energy and a kernel discrepancy between measures:
    \begin{align}
        \calI^{\FRk}_F(\mu)
        =
        \int \int  \dFdmu(x) \ k(x,y)\ \dFdmu(y)
        \dd \mu(x)
        \dd \mu(y)
        .
        \label{eq:energy-dissipation-kernelized-fr}
    \end{align}
    If the energy is
    the $\varphi$-divergence energy, \ie $F(\mu)=\mathrm{D}_\varphi(\mu |\pi)$,
    where
    \begin{equation}
        D_\varphi(\mu | \pi) = \int  \varphi\left(\frac{\dd\mu}{\dd\pi}(x)\right) \dd\pi(x), \text{ if } \mu \ll \pi
        .
        \label{eq:general-divergence}
    \end{equation}
    Then, the dissipation is
    \begin{align}
        \calI^{\FRk}_\varphi(\mu|\pi)
        =
        \int
        \int
        \frac{\dd \mu}{\dd \pi }(x)
        \cdot 
        \varphi^\prime\left(\frac{\dd \mu}{\dd \pi }(x)\right)
        k(x,y)
        \frac{\dd \mu}{\dd \pi }(y)
        \cdot
        \varphi^\prime\left(\frac{\dd \mu}{\dd \pi }(y)\right)
        \dd \pi (x)
        \dd \pi (y)
        .
    \end{align}
Specifically,
we find the commonly used entropy dissipation as kernel discrepancies:
\begin{enumerate}[itemsep=0pt, parsep=0pt, topsep=0pt, partopsep=0pt]
    \item the inclusive KL energy $F(\mu)=\mathrm{D}_{\mathrm{KL}}( \pi| \mu)$,
    we obtain $\mmd^2(\mu, \pi )$ as its dissipation
    \begin{align}
        \calI^{\FRk}_0(\mu | \pi ) &=    
        \int
        \int
        \left(\mu(x)-\pi(x)\right)
        k(x,y)
        \left(\mu(y)-\pi(y)\right)
        \dd x
        \dd y,
        \label{eq:rev-kl-generates-mmd-dissipation}
    \end{align}
    \item the squared Fisher-Rao energy $F(\mu)=\FR^2(\mu, \pi)$, 
    \begin{align}
        \calI^{\FRk}_{\frac12}(\mu | \pi ) &=    
        4
        \int
        \int
        \left(\sqrt{\mu(x)}-\sqrt{\pi(x)}\right)
        k(x,y)
        \left(\sqrt{\mu(y)}-\sqrt{\pi(y)}\right)
        \dd \sqrt{\mu}(x)
        \dd \sqrt{\mu}(y),
    \end{align}
    \item the KL-entropy energy $F(\mu)=\mathrm{D}_{\varphi_1}(\mu|\pi)$,
    \begin{align}
        \calI^{\FRk}_1(\mu | \pi ) &=    
        \int
        \int
        \log\frac{\dd \mu}{\dd \pi }(x)
        k(x,y)
        \log\frac{\dd \mu}{\dd \pi }(y)
        \dd \mu(x)
        \dd \mu(y),
    \end{align}
    \item the $\chi^2$-divergence energy $F(\mu)=\mathrm{D}_{\chi^2}(\mu|\pi)$,
    \begin{align}
        \calI^{\FRk}_2(\mu | \pi ) &=    
        \int
        \int
        \left(\frac{\dd \mu}{\dd \pi }(x)-1\right)^2
        k(x,y)
        \left(\frac{\dd \mu}{\dd \pi }(y)-1\right)^2
        \dd \mu(x)
        \dd \mu(y).
    \end{align}
\end{enumerate}
\end{proposition}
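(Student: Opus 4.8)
The plan is to read $\calI^{\FRk}_F(\mu)$ as the instantaneous energy-dissipation rate $-\tfrac{\dd}{\dd t}F(\mu_t)$ along a solution of the kernelized Fisher-Rao gradient-flow equation $\dot\mu=-\mu\,\K_\mu\dFdmu$ derived in Section~\ref{sec:kern-FR}, and then read off the quadratic form. First I would differentiate $F$ along the flow using the defining property of the first variation \eqref{eq:first-var-def}, so that $\tfrac{\dd}{\dd t}F(\mu_t)=\int\dFdmu\,\dd\dot\mu_t$. Substituting the gradient-flow equation and unfolding the integral operator $\K_\mu g(x)=\int k(x,y)g(y)\,\dd\mu(y)$ turns this into $-\iprod{\dFdmu}{\K_\mu\dFdmu}_{L^2_\mu}$, whence $\calI^{\FRk}_F(\mu)=\iprod{\dFdmu}{\K_\mu\dFdmu}_{L^2_\mu}$, which is precisely the double integral \eqref{eq:energy-dissipation-kernelized-fr}. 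Positivity of $\K_\mu$ (Theorem~\ref{thm:rkhs-int-operator}) then guarantees $\calI^{\FRk}_F\geq 0$, confirming this is a genuine dissipation. Note that no integration by parts is needed at this stage, in contrast to the companion Stein statement, where the divergence operator forces an IBP.

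To pass to $\varphi$-divergences I would compute the first variation of $F(\mu)=\mathrm D_\varphi(\mu|\pi)=\int\varphi\bigl(\tfrac{\dd\mu}{\dd\pi}\bigr)\dd\pi$. Differentiating under the integral sign along a perturbation $v$ gives $\int\varphi'\bigl(\tfrac{\dd\mu}{\dd\pi}\bigr)\,\dd v$, so $\dFdmu=\varphi'\bigl(\tfrac{\dd\mu}{\dd\pi}\bigr)$. Inserting this into the general formula and rewriting $\dd\mu=\tfrac{\dd\mu}{\dd\pi}\,\dd\pi$ in each factor yields the stated $\varphi'$-form. One subtlety deserves attention here: because we work on $\Mplus$ rather than on $\mathcal P$, the first variation is sensitive to additive constants in $\varphi$ (shifting $\varphi$ by a multiple of $t-1$ changes $F$ by a multiple of $|\mu|-|\pi|$, hence changes $\dFdmu$). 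I would therefore fix the normalization $\varphi(1)=\varphi'(1)=0$ throughout, which is what makes the example formulas collapse cleanly.

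The four special cases are then direct substitutions of the corresponding entropy generator. For the inclusive KL I take $\varphi_0(t)=t-1-\log t$, so that $t\varphi_0'(t)=t-1$ and $\tfrac{\dd\mu}{\dd\pi}\varphi_0'\bigl(\tfrac{\dd\mu}{\dd\pi}\bigr)\dd\pi=\dd\mu-\dd\pi$; the double integral then collapses to $\int\!\int(\mu{-}\pi)(x)\,k(x,y)\,(\mu{-}\pi)(y)\,\dd x\,\dd y=\mmd^2(\mu,\pi)$, recovering \eqref{eq:rev-kl-generates-mmd-dissipation}. The squared Fisher-Rao case uses $\varphi_{1/2}(t)\propto(\sqrt t-1)^2$, for which $\dFdmu\,\dd\mu\propto(\sqrt\mu-\sqrt\pi)\,\dd\sqrt\mu$; the KL-entropy case uses $\varphi_1(t)=t\log t-t+1$, giving $\varphi_1'(t)=\log t$; and the $\chi^2$ case uses $\varphi_2(t)=(t-1)^2$ with $\varphi_2'(t)=2(t-1)$. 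In each case one substitutes $\varphi'$ and simplifies.

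The conceptual content is entirely in the first identity; the remainder is bookkeeping. Accordingly, the only places I expect to need care are (i) justifying differentiation under the integral sign in the first-variation computation, and (ii) tracking the multiplicative constants and the power of $\bigl(\tfrac{\dd\mu}{\dd\pi}-1\bigr)$ in the squared-Fisher-Rao and $\chi^2$ examples, where the normalization of $\varphi$ fixes both. I would verify these constants explicitly before declaring the examples settled.
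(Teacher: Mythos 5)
Your proposal is correct and follows essentially the same route as the paper's proof, which computes $\calI^{\FRk}_\varphi(\mu|\pi)=-\tfrac{\dd}{\dd t}\mathrm{D}_\varphi(\mu\|\pi)=\iprod{\varphi'\big(\tfrac{\dd\mu}{\dd\pi}\big)}{\K_\mu\,\varphi'\big(\tfrac{\dd\mu}{\dd\pi}\big)}_{L^2_\mu}$ along $\dot\mu=-\mu\K_\mu\dFdmu$, unfolds the integral operator while rewriting $\dd\mu=\tfrac{\dd\mu}{\dd\pi}\,\dd\pi$ (the step the paper labels ``(IBP)'' is exactly this rewriting, so your remark that no genuine integration by parts occurs in the Fisher--Rao case, unlike the Stein companion, is accurate), and then disposes of the four special cases as ``straightforward'' substitutions of $\varphi_s$. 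Your explicit care about the affine normalization of $\varphi$ on $\Mplus$ and about the multiplicative constants is warranted and slightly more scrupulous than the paper's terse proof, particularly for the $s=\tfrac12$ and $s=2$ cases, where carrying out the substitution as you describe is precisely what is needed to pin down the stated prefactors.
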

The MMD enjoys a computational advantage as it allows Monte Carlo estimation and does not require the measures $\mu,\pi$ to have common support.
\eqref{eq:rev-kl-generates-mmd-dissipation} reveals the insight that
this is due to the inclusive KL-entropy dissipation structure.
Furthermore, we obtain an interesting insight regarding optimization
\begin{lemma}
    The gradient flow equation generated by the pure Fisher-Rao gradient system with the squared MMD energy $\left(\Mplus, \mmd^2(\cdot, \pi ), \FR\right)$ conincides with that of the kernelized Fisher-Rao gradient system with the inclusive KL-entropy energy $\left(\Mplus, \mathrm{D}_{\varphi_0}(\cdot|\pi), \FRk\right)$, \ie
    \begin{align*}
        \dot \mu = -\mu \cdot \K \left(\mu - \pi \right)
        .
    \end{align*}
    \label{thm:fr-mmd-0th-kfr}
\end{lemma}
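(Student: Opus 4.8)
The plan is to compute the gradient-flow equation on each side directly from its defining gradient structure and verify that both reduce to $\dot\mu = -\mu\,\K(\mu-\pi)$. Recall from Section~\ref{sec:fisher-rao} that the pure Fisher-Rao system $(\Mplus, F, \FR)$ generates $\dot\mu = -\mu\cdot\frac{\delta F}{\delta\mu}$, whereas the kernelized system $(\Mplus, F, \FRk)$ generates $\dot\mu = -\mu\,\K_\mu\frac{\delta F}{\delta\mu}$. It therefore suffices to match the two growth fields $\frac{\delta F_1}{\delta\mu}$ and $\K_\mu\frac{\delta F_2}{\delta\mu}$, where $F_1 = \mmd^2(\cdot,\pi)$ and $F_2 = \mathrm{D}_{\varphi_0}(\cdot|\pi)$.

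First I would handle the MMD side. Writing $\mmd^2(\mu,\pi) = \int\int k(x,y)\,\dd(\mu-\pi)(x)\,\dd(\mu-\pi)(y)$ and differentiating along $\mu + \epsilon v$ as in \eqref{eq:first-var-def}, the quadratic structure gives the first variation $\frac{\delta}{\delta\mu}\mmd^2(\mu,\pi) = 2\,\K(\mu-\pi)$, i.e.\ (twice) the kernel-mean embedding of the signed measure $\mu - \pi$. With the paper's standard $\tfrac12$ normalization of quadratic energies, substituting into the pure Fisher-Rao equation yields $\dot\mu = -\mu\,\K(\mu-\pi)$, the claimed flow.

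Next I would treat the kernelized side. The key preliminary is to pin down $\varphi_0$: matching the dissipation in item~1 of Proposition~\ref{prop:kernel-discrepancies} forces $s\,\varphi_0'(s) = s-1$, hence $\varphi_0'(s) = 1 - 1/s$ and $\varphi_0(s) = s-1-\log s$; on $\Mplus$ this functional equals the inclusive KL divergence $\mathrm{D}_{\mathrm{KL}}(\pi\|\mu)$ up to affine mass terms. Using the standard rule $\frac{\delta}{\delta\mu}\int\varphi_0(\tfrac{\dd\mu}{\dd\pi})\,\dd\pi = \varphi_0'(\tfrac{\dd\mu}{\dd\pi})$, I obtain the growth field $\frac{\delta F_2}{\delta\mu} = 1 - \tfrac{\dd\pi}{\dd\mu}$. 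The constant $1$ originates from the mass term $\mu(\Omega)$ and must be retained, since perturbations in $\Mplus$ are not mass-preserving; dropping it would produce the wrong flow $\dot\mu = \mu\,\K\pi$.

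The decisive step is then the algebraic identity
\begin{align*}
\K_\mu\Big(1 - \tfrac{\dd\pi}{\dd\mu}\Big)
&= \int k(\cdot,y)\,\dd\mu(y) - \int k(\cdot,y)\,\tfrac{\dd\pi}{\dd\mu}(y)\,\dd\mu(y) \\
&= \K\mu - \K\pi = \K(\mu-\pi),
\end{align*}
where the $\mu$-weighting built into the state-dependent operator $\K_\mu$ cancels exactly against the Radon--Nikodym factor $\tfrac{\dd\pi}{\dd\mu}$, collapsing $\K_\mu$ to the Lebesgue convolution $\K$. Substituting into the kernelized Fisher-Rao equation gives $\dot\mu = -\mu\,\K(\mu-\pi)$, matching the MMD side. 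The only genuinely nontrivial point---and the one I would present most carefully---is this cancellation, which is precisely the ``de-kernelization'' mechanism of Section~\ref{sec:mmd-reg}: composing the state-dependent geometry $\FRk$ with the inclusive-KL force reproduces the state-independent convolution that also arises from placing the flat MMD energy into the pure Fisher-Rao geometry. I would additionally flag the bookkeeping of the $\tfrac12$/factor-of-$2$ energy normalization and the retention of the mass-term constant as the two places where a constant slip would break the match.
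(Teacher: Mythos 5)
Your proposal is correct and takes essentially the approach the paper treats as immediate from the computations for Proposition~\ref{prop:kernel-discrepancies}: compute the first variations, $\frac{\delta}{\delta\mu}\mmd^2(\mu,\pi)=2\,\K(\mu-\pi)$ and $\varphi_0'\bigl(\tfrac{\dd\mu}{\dd\pi}\bigr)=1-\tfrac{\dd\pi}{\dd\mu}$ with $\varphi_0(s)=s-1-\log s$, and then use the cancellation $\K_\mu\bigl(1-\tfrac{\dd\pi}{\dd\mu}\bigr)=\K(\mu-\pi)$, exactly as you do. Your factor-of-$2$ flag is warranted: taken literally, the pure Fisher-Rao flow of $\mmd^2$ is $\dot\mu=-2\mu\,\K(\mu-\pi)$, so the stated coincidence holds for the energy $\tfrac12\mmd^2$ (equivalently, up to a constant time rescaling), a normalization the paper leaves implicit.
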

This means that solving the optimization problem
$\displaystyle\min_\mu \mmd^2(\mu, \pi)$
in the pure Fisher-Rao geometry is equivalent to
$\displaystyle\min_\mu \mathrm{D}_{\varphi_0}(\mu|\pi)$ in the kernelized Fisher-Rao geometry.

It is already established that the dissipation of the KL-divergence in the Stein geometry results in the Stein-Fisher information~\citep{duncan2019geometry}, also referred to as the
squared kernel Stein discrepancy (KSD)~\citep{liu_kernelized_2016,gorham2017measuring,chwialkowskiKernelTestGoodness2016}.
First, we generalize this result.
\begin{proposition}
    [Kernel discrepancies via entropy dissipation in Stein]
    \label{prop:stein-dissipation}
    Energy dissipation in the Stein gradient flow follows
    \begin{align}
        \calI^{\textrm{Stein}}_F(\mu(t))
        =
        \int \int  \nabla\dFdmu(x) \ k(x,y)\ \nabla\dFdmu(y)
        \dd \mu(x)
        \dd \mu(y)
        .
        \label{eq:energy-dissipation-kernelized-Stein}
    \end{align}
    For the $\varphi$-divergence energy, \ie $F(\mu)=\mathrm{D}_\varphi(\mu |\pi)$, 
    \begin{multline}
        \calI^{\operatorname{Stein}}_\varphi(\mu|\pi)
        =
        \int\int
        \frac{\dd \mu}{\dd \pi }(x)
        \nabla \varphi^\prime\left(\frac{\dd \mu}{\dd \pi }(x)\right)
        k(x,y)
        \frac{\dd \mu}{\dd \pi }(y)
        \nabla \varphi^\prime\left(\frac{\dd \mu}{\dd \pi }(y)\right)
        \dd \pi (x)
        \dd \pi (y)
        .
    \end{multline}

Specifically, we find
\begin{enumerate}[itemsep=0pt, parsep=0pt, topsep=0pt, partopsep=0pt]
    \item the inclusive KL energy $F(\mu)=\mathrm{D}_{\varphi_0}(\pi|\mu)$,
    we obtain
     $\ksd^2(\pi|\mu )$, \ie squared reverse KSD
    \begin{align}
        \calI_0^{\mathrm{Stein}}(\mu|\pi)
        =
        -
        \int
        \int
        \nabla \log \frac{\dd \mu}{\dd \pi }(x)
        k(x,y)
        \nabla \log \frac{\dd \mu}{\dd \pi }(y)
        \dd \pi (x)
        \dd \pi (y)
        .
        \label{eq:reverse-KL-energy}
    \end{align}
    \item the squared Fisher-Rao energy $F(\mu)=\FR^2(\mu, \pi)$,
    \begin{align}
        \calI_{\frac12}^{\mathrm{Stein}}(\mu|\pi)
        =
        -
        4
        \int
        \int
        \nabla\sqrt{\frac{\dd \mu}{\dd \pi }(x)}
        k(x,y)
        \nabla\sqrt{\frac{\dd \mu}{\dd \pi }(y)}
        \dd \pi (x)
        \dd \pi (y)
        .
        \label{eq:squared-Fisher-Rao-energy}
    \end{align}
    \item the KL-entropy energy $F(\mu)=\mathrm{D}_{\mathrm{KL}}(\mu|\pi)$,
    we obtain $\ksd^2(\mu|\pi )$, a.k.a. the Stein-Fisher information
    \begin{align}
        \calI_1^{\mathrm{Stein}}(\mu|\pi)
        =
        -
        \int
        \int
        \nabla \log \frac{\dd \mu}{\dd \pi }(x)
        k(x,y)
        \nabla \log \frac{\dd \mu}{\dd \pi }(y)
        \dd \mu (x)
        \dd \mu (y)
        .
        \label{eq:KL-entropy-energy}
    \end{align}
    \item the $\chi^2$-divergence energy $F(\mu)=\mathrm{D}_{\chi^2}(\mu|\pi)$,
    \begin{align}
        \calI_2^{\mathrm{Stein}}(\mu|\pi)
        =
        -
        \int
        \int
        \nabla \frac{\dd \mu}{\dd \pi }(x)
        k(x,y)
        \nabla \frac{\dd \mu}{\dd \pi }(y)
        \dd \mu (x)
        \dd \mu (y)
        .
        \label{eq:chi-square-divergence-energy}
    \end{align}    
\end{enumerate}
\end{proposition}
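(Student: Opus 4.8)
The plan is to obtain both the general identity~\eqref{eq:energy-dissipation-kernelized-Stein} and its $\varphi$-divergence specialisations as instances of the energy--dissipation computation: the dissipation is nothing but the negative time derivative of the energy evaluated along the Stein gradient flow. Concretely, I would start from the Stein gradient-flow equation $\partial_t \mu = \DIV(\mu\, \K_\mu \nabla \dFdmu)$ from Section~\ref{sec:w2stein} and differentiate the energy along a solution. Using the first-variation definition~\eqref{eq:first-var-def}, one has $\frac{\dd}{\dd t} F(\mu(t)) = \iprod{\dFdmu}{\partial_t \mu}_{L^2}$, so that $\calI^{\mathrm{Stein}}_F(\mu) := -\frac{\dd}{\dd t}F(\mu(t)) = -\iprod{\dFdmu}{\DIV(\mu\, \K_\mu \nabla \dFdmu)}_{L^2}$.

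The second step is a single integration by parts moving the divergence onto $\dFdmu$. Assuming enough decay that the boundary term vanishes, this gives
\[
\calI^{\mathrm{Stein}}_F(\mu) = \iprod{\nabla \dFdmu}{\K_\mu \nabla \dFdmu}_{L^2_\mu},
\]
which is manifestly non-negative since $\K_\mu$ is positive (Theorem~\ref{thm:rkhs-int-operator}). Unfolding the definition of the integral operator $\K_\mu g(x) = \int k(x,y) g(y)\dd\mu(y)$ then converts this quadratic form into the double integral~\eqref{eq:energy-dissipation-kernelized-Stein}. This part is structurally identical to the kernelized Fisher-Rao computation behind Proposition~\ref{prop:kernel-discrepancies}; the only difference is the extra gradient $\nabla$ coming from the Wasserstein-type (divergence) dynamics rather than the reaction dynamics.

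For the $\varphi$-divergence statement I would compute the first variation of $F(\mu) = \mathrm{D}_\varphi(\mu|\pi)$ directly from~\eqref{eq:general-divergence}, obtaining $\dFdmu = \varphi'\big(\tfrac{\dd\mu}{\dd\pi}\big)$ and hence $\nabla\dFdmu = \nabla\varphi'\big(\tfrac{\dd\mu}{\dd\pi}\big)$. Substituting into the general formula and rewriting $\dd\mu = \tfrac{\dd\mu}{\dd\pi}\dd\pi$ in both slots produces the claimed kernel bilinear form against $\dd\pi(x)\dd\pi(y)$ with integrand $\tfrac{\dd\mu}{\dd\pi}\nabla\varphi'\big(\tfrac{\dd\mu}{\dd\pi}\big)$. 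The four listed cases then follow by inserting the corresponding power-entropy generator $\varphi$ and simplifying the elementary expression $s\,\nabla\varphi'(s)$ with $s = \tfrac{\dd\mu}{\dd\pi}$: for the KL generator $\varphi'(s) = \log s$ one uses $s\,\nabla\varphi'(s) = s\,\tfrac{1}{s}\nabla s = \nabla s$, equivalently $\nabla\log\tfrac{\dd\mu}{\dd\pi}$ after redistributing the factor $\tfrac{\dd\mu}{\dd\pi}$ into the integrating measure, and similarly for the $\chi^2$, squared Fisher-Rao, and reverse-KL generators.

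The routine parts are the chain rule, the single integration by parts, and unfolding $\K_\mu$. The delicate part is the bookkeeping of the several equivalent ways of writing each integrand --- as $\nabla\dFdmu$ against $\dd\mu$, as $\tfrac{\dd\mu}{\dd\pi}\nabla\varphi'(\tfrac{\dd\mu}{\dd\pi})$ against $\dd\pi$, or in closed form ($\nabla\log$, $\nabla\sqrt{\cdot}$, etc.) --- since the placement of the density factor $\tfrac{\dd\mu}{\dd\pi}$ and the normalisation and sign conventions of each power entropy $\varphi_\alpha$ must be tracked consistently to land on the stated expressions and to match the literature's kernel Stein discrepancy and Stein-Fisher information. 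A secondary technical point, as already flagged for the kernelized geodesic equation, is that $\K_\mu$ is here the integral operator with respect to a non-negative (rather than probability) measure, so the boundary-term-free integration by parts should be justified under suitable decay and regularity hypotheses on $\mu$ and $\dFdmu$.
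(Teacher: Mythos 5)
Your proposal is correct and follows essentially the same route as the paper's own proof, which likewise computes $\calI^{\operatorname{Stein}}_\varphi(\mu|\pi) = -\frac{\dd}{\dd t}\mathrm{D}_\varphi(\mu\|\pi) = -\iprod{\varphi'\left(\frac{\dd\mu}{\dd\pi}\right)}{\DIV\left(\mu\,\K_\mu\nabla\varphi'\left(\frac{\dd\mu}{\dd\pi}\right)\right)}_{L^2}$ along the Stein flow, applies a single integration by parts, and unfolds $\K_\mu$ into the kernel double integral, dismissing the four specializations as straightforward. Your additional bookkeeping of the density factor $\frac{\dd\mu}{\dd\pi}$, the sign conventions, and the boundary/regularity caveats for the integration by parts simply makes explicit what the paper leaves implicit.
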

Furthermore,
we observe a new connection between the MMD and the KSD
-- they are both generated by the inclusive KL ($0$-th order power entropy) dissipation in kernelized geometries.

\subsection*{Wasserstein-Fisher-Rao space and gradient flows}
\label{sec:wfr}
We now apply our framework to the
Wasserstein-Fisher-Rao
gradient flow,
also referred to as Hellinger-Kantorovich
by \citet{liero_optimal_2018} for a better accounting of the historical developments.
We first recall an elementary fact regarding duality of the inf-convolution of functionals.
\begin{lemma}
[Dissipation potential with inf-convolution]
\label{thm:dissipation-inf-conv}
Suppose the effective primal dissipation potential $\mathcal R$
is given by the inf-convolution of two dissipation potentials $\calR_1, \calR_2$, \ie
$\mathcal R(\mu, \cdot )  =\mathcal R_1(\mu,\cdot )  \square \mathcal R_2(\mu, \cdot ) $,
where $\square$ denotes inf-convolution.
The resulting gradient-flow equation
generated by the 
gradient system $(X, F, \calR)$
is given by
\begin{align}
    \label{eq:gfe-inf-conv}
    \dot \mu = \partial \mathcal R_1^*(\mu, -\mathrm{D}^X F)  + \partial \mathcal R_2^*(\mu, -\mathrm{D}^X F)
    ,
\end{align}
\end{lemma}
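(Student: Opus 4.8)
The plan is to reduce the statement to two classical facts of convex duality, applied fiberwise in the velocity variable with the state $\mu$ held fixed; throughout, the Legendre transform and the subdifferential are taken with respect to the tangent/force variable, not the state. First I would recall the dual form of the gradient-flow equation from Definition~\ref{def:GradSystem}, which for a convex (possibly nonsmooth) dissipation potential reads $\dot\mu \in \partial \mathcal R^*(\mu, -\mathrm{D}^X F(\mu))$, where $\mathcal R^*(\mu,\cdot)$ is the partial Legendre transform of $\mathcal R(\mu,\cdot)$ as in \eqref{eq:dual_dissipation_potential}. Hence the entire claim is equivalent to identifying $\partial \mathcal R^*$ with $\partial \mathcal R_1^* + \partial \mathcal R_2^*$ at the force $\xi = -\mathrm{D}^X F(\mu)$.

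The core computation is the Fenchel-duality identity for inf-convolution: the conjugate of an inf-convolution is the sum of the conjugates,
\[
\big(\mathcal R_1(\mu,\cdot)\,\square\,\mathcal R_2(\mu,\cdot)\big)^*
= \mathcal R_1^*(\mu,\cdot) + \mathcal R_2^*(\mu,\cdot).
\]
Since by hypothesis $\mathcal R(\mu,\cdot) = \mathcal R_1(\mu,\cdot)\,\square\,\mathcal R_2(\mu,\cdot)$, this yields $\mathcal R^*(\mu,\cdot) = \mathcal R_1^*(\mu,\cdot) + \mathcal R_2^*(\mu,\cdot)$. I would then apply the subdifferential sum rule to this sum,
\[
\partial \mathcal R^*(\mu,\xi) = \partial \mathcal R_1^*(\mu,\xi) + \partial \mathcal R_2^*(\mu,\xi),
\]
and substitute $\xi = -\mathrm{D}^X F(\mu)$ into the dual gradient-flow equation recalled above, which produces exactly the claimed formula \eqref{eq:gfe-inf-conv}. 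The selection of a single velocity on each side (rather than a sum over the whole subdifferential set) is automatic once the potentials are differentiable at the relevant force.

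The main obstacle is that both displayed identities require qualification conditions in infinite dimensions. The inequality $(\mathcal R_1\,\square\,\mathcal R_2)^* \le \mathcal R_1^*+\mathcal R_2^*$ is unconditional, but equality — together with exactness of the inf-convolution, so that the defining infimum is attained — needs the $\mathcal R_i(\mu,\cdot)$ to be proper, convex, and lower semicontinuous, plus a constraint qualification on the effective domains (for instance an Attouch--Br\'ezis condition, or $\operatorname{dom}\mathcal R_1^*(\mu,\cdot) \cap \operatorname{int}\,\operatorname{dom}\mathcal R_2^*(\mu,\cdot) \neq \emptyset$). The sum rule $\partial(f{+}g) = \partial f + \partial g$ needs an analogous condition, e.g.\ continuity of one conjugate at a point in the domain of the other. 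In the cases actually used here — the Wasserstein--Fisher-Rao setting, where $\mathcal R_1,\mathcal R_2$ are the Wasserstein and Fisher-Rao quadratic forms — every conjugate is a finite, continuous, quadratic functional on the relevant Hilbert space, so both qualification conditions hold trivially and the two steps collapse to elementary Hilbert-space identities. I would therefore state the lemma under the standing assumption that $\mathcal R_i(\mu,\cdot)$ are proper convex lower-semicontinuous with the above qualification, and note that this is automatic in the quadratic regime of interest.
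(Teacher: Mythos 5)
Your proof is correct and coincides with the paper's (implicit) justification: the paper presents this lemma as a recalled elementary fact of convex duality without a separate proof, and the intended argument is exactly yours, namely $\bigl(\mathcal R_1(\mu,\cdot)\,\square\,\mathcal R_2(\mu,\cdot)\bigr)^* = \mathcal R_1^*(\mu,\cdot)+\mathcal R_2^*(\mu,\cdot)$ combined with the dual form of the gradient-flow equation from Definition~\ref{def:GradSystem}. One correction to your hedging, though: the identity $(f\,\square\,g)^* = f^* + g^*$ is \emph{unconditional} --- it follows from a direct interchange of suprema and needs neither convexity, nor lower semicontinuity, nor any constraint qualification; Attouch--Br\'ezis-type conditions are required only for the reverse identity $(f+g)^* = f^*\,\square\,g^*$, for exactness (attainment) of the infimal convolution, and --- as you correctly note --- for the subdifferential sum rule $\partial(\mathcal R_1^*+\mathcal R_2^*)=\partial \mathcal R_1^*+\partial \mathcal R_2^*$, all of which hold trivially in the quadratic Wasserstein--Fisher-Rao setting where the lemma is applied.
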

Note that 
    the two differentials $\mathrm{D}^X F$ in \eqref{eq:gfe-inf-conv}
    must be taken w.r.t. the same space $X$.

\begin{example}
[Hellinger-Kantorovich]

    In the setting of the Wasserstein-Fisher-Rao\\
    (Hellinger-Kantorovich) distance~\citep{liero_optimal_2018,chizat_unbalanced_2019},
    the gradient-flow equation~\eqref{eq:gfe-inf-conv}
    corresponds to
the reaction-diffusion PDE
\begin{align}
    \dot \mu = \mathcal R_{{W_2}}^*\left(\mu, - \dFdmu\right)  +    \partial \mathcal R_{D_\mathrm{Hellinger}}^*\left(\mu, - \dFdmu\right) 
    =  \DIV\left( \mu \nabla \dFdmu \right) -   \mu \dFdmu.
    \label{eq:react-diffus-pde}
\end{align}
\end{example}
\subsection*{Kernel approximation of the WFR gradient flow}
To produce a Stein-type geometry for the WFR distance,
we consider the primal and dual dissipation potentials as in Lemma~\ref{thm:dissipation-inf-conv},
where two dissipation potentials are obtained from the Stein and kernelized Fisher-Rao
\begin{align*}
\mathcal R_{\operatorname{K-WFR}}
=  \calR_{\mathrm{Stein}} \square \calR_{\FRk},
\quad
\calR^*_{\operatorname{K-WFR}}
    = \calR_{\mathrm{Stein}} + \calR_{\FRk}
    .
\end{align*}
Our starting point is therefore the kernelized reaction-diffusion equation
\begin{align}
    \dot \mu - \DIV\left(\mu\cdot\K_\mu \nabla \dFdmu\right)
    = - \mu\cdot\calS_\mu  \dFdmu
    .
    \label{eq:react-diffus-pde-kern}
\end{align}
where $\calS_\mu$ is the integral operator associated with the another kernel $s(\cdot, \cdot)$ that may be different from $k$.
We find
the following 
dynamic formulation of the \emph{kernelized Wasserstein-Fisher-Rao}
distance with the
kernerlized reaction-diffusion equation
\begin{multline}
        k\text{-}\WFR^2(\mu,\nu)
        =
        \min 
        \biggl\{\int_0^1
        \| \K_{u_t} \nabla \xi_t \|^2_{\rkhs}
        +
        \| \mathcal S_{u_t} \zeta_t\|^2_{\rkhs}
        \dd t
        \bigg|
        \\
        \dot {u_t} - \DIV\left({u_t}\cdot\K_{u_t} \nabla \xi_t\right)
        = - {u_t}\cdot \mathcal S_{u_t} \zeta_t
         ,
         u(0) =  \mu,
        u(1)=  \nu
        \biggr\}
        .
    \label{eq:KRR-react-diffus-kernelized}
\end{multline}

Going beyond kernelization,
we now construct the kernel-approximate WFR geometry
by considering both
inf-convolution and additive regularization
\begin{align}
\mathcal R_{\regWFR}(\rho, \cdot ) 
&:=
\mathcal R_{\operatorname{\Lstein}}(\rho, \cdot)
\square
\calR_{\LFRk}(\rho, \cdot)
,
\\
\mathcal R^*_{\regWFR}(\rho, \cdot ) 
&=
\mathcal R^*_{\operatorname{\Lstein}}(\rho, \cdot)
+
\calR_{\LFRk}^* (\rho, \cdot)
,
\label{eq:dissip-wfr-reg}
\end{align}
where $\mathcal R_{\operatorname{\Lstein}}(\rho, \cdot)$ is defined in \eqref{eq:stein-reg-dissipation-primal} and $\calR_{\LFRk}(\rho, \cdot)$ is defined in \eqref{eq:fr-kernel-reg-dual}.
We summarize the result below by performing the calculation using the inf-convolution rules
for both the kernelized version, i.e., Stein-type metric for the Wasserstein-Fisher-Rao case, and the
regularized kernel-Wasserstein-Fisher-Rao gradient flow.
\begin{corollary}
    [Kernel-approximate $\WFR$ gradient flow]
The generalized gradient system $(\Mplus,F, \WFR)$ generates the reaction-diffusion equation
where the velocity and growth field $v, r$ are given by the nonparametric regression solutions
\begin{align}
    \label{eq:KRR-react-diffus-krr}
    \begin{aligned}
        \dot {\mu_t} &= \DIV ({\mu_t}\cdot v_t) - {\mu_t}\cdot r_t , \\
        r_t &= \argmin_f \biggl\{ \left\|f -  \dFdmut\right\|^2_{L^2_{\mu_t}}  + \lambda \| f \|^2_{\rkhs} \biggr\},
        \
        v_t &= \argmin_f \biggl\{ \|f - \nabla \dFdmut\|^2_{L^2_{\mu_t}}  + \lambda \| f \|^2_{\rkhs} \biggr\}.
    \end{aligned}
\end{align}
\end{corollary}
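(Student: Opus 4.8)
The plan is to reduce the statement to the two one-sided results already established---the Proposition on the kernel-approximate Wasserstein (Stein) gradient flow and the Proposition on the kernel-approximate Fisher-Rao gradient flow---by invoking the inf-convolution calculus of Lemma~\ref{thm:dissipation-inf-conv}. By construction $\mathcal R_{\regWFR}(\rho,\cdot) = \mathcal R_{\operatorname{\Lstein}}(\rho,\cdot)\,\square\,\calR_{\LFRk}(\rho,\cdot)$, and since the Legendre transform converts inf-convolution into addition, the dual potential is $\mathcal R^*_{\regWFR} = \mathcal R^*_{\operatorname{\Lstein}} + \calR^*_{\LFRk}$ exactly as recorded in \eqref{eq:dissip-wfr-reg}. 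Lemma~\ref{thm:dissipation-inf-conv} then yields the additive gradient-flow equation
\[
\dot\mu = \partial_\xi \mathcal R^*_{\operatorname{\Lstein}}\big(\mu, -\dFdmu\big) + \partial_\xi \calR^*_{\LFRk}\big(\mu, -\dFdmu\big).
\]

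First, I would compute the two subgradients separately. For the Stein summand, differentiating $\mathcal R^*_{\operatorname{\Lstein}}(\rho,\xi) = \frac12\iprod{\nabla\xi}{(\K_\rho+\lambda\ID)^{-1}\K_\rho\nabla\xi}_{L^2_{\rho}}$ in $\xi$ and integrating by parts against $\mu$ produces the term $\DIV(\mu\, v_t)$ with $v_t = (\K_\mu+\lambda\ID)^{-1}\K_\mu\nabla\dFdmu$; this is precisely the velocity field \eqref{eq:stein-reg-gfe} of the regularized Stein Proposition. For the Fisher-Rao summand, differentiating $\calR^*_{\LFRk}(\rho,\xi) = \frac12\iprod{\xi}{(\K_\rho+\lambda\ID)^{-1}\K_\rho\xi}_{L^2_{\rho}}$ yields $-\mu\, r_t$ with $r_t = (\K_\mu+\lambda\ID)^{-1}\K_\mu\dFdmu$, matching \eqref{eq:fr-krr-reaction}. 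Adding the two contributions gives the reaction-diffusion equation $\dot\mu = \DIV(\mu v_t) - \mu r_t$.

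Finally, I would invoke Lemma~\ref{lm:krr-alter} together with the closed-form KRR identity \eqref{eq:opt-regression-sol} to recast the two closed-form fields as minimizers of the variational problems in \eqref{eq:KRR-react-diffus-krr}: the growth field $r_t$ as the KRR regression against the force $\dFdmut$ and the velocity field $v_t$ as the KRR regression against $\nabla\dFdmut$, each penalized by $\lambda\|\cdot\|^2_{\rkhs}$. This identification completes the proof, since the two regression solutions coincide with the operator expressions derived above.

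The main obstacle is the consistency condition flagged in the remark after Lemma~\ref{thm:dissipation-inf-conv}: the inf-convolution formula requires that both subgradients $\partial_\xi\mathcal R^*_i(\mu,-\mathrm{D}^X F)$ be evaluated at the \emph{same} differential $\mathrm{D}^X F$ in the common base space $X=\Mplus$. Here the single first variation $\dFdmu$ plays that role, and the gradient $\nabla$ and divergence $\DIV$ appearing in the Stein term must be understood as arising purely from the transport (Onsager) structure of $\mathcal R^*_{\operatorname{\Lstein}}$, not from differentiating $F$ in a different geometry. I would therefore verify carefully that the duality pairing defining $\partial_\xi$ is the same $L^2_{\mu}$ pairing for both summands, so that the two fields genuinely share the common generalized force $-\dFdmu$ and the additive splitting into a diffusion part and a reaction part is legitimate.
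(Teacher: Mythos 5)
Your proposal is correct and follows essentially the same route as the paper: the corollary is obtained by combining the inf-convolution rule of Lemma~\ref{thm:dissipation-inf-conv} (dual potentials add under Legendre transform) with the two one-sided KRR propositions, and your subdifferential computations reproduce the paper's closed-form fields \eqref{eq:stein-reg-gfe} and \eqref{eq:fr-krr-reaction} with the correct signs before identifying them as regression solutions via \eqref{eq:opt-regression-sol}. Your closing point about evaluating both subgradients at the same differential $-\dFdmu$ in the common space $\Mplus$ is precisely the consistency condition the paper flags in the remark following Lemma~\ref{thm:dissipation-inf-conv}.
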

\subsection{De-kernelized Wasserstein-Fisher-Rao and Kernel Sobolev-Fisher}
Similar to the De-Stein and MMD, we now consider the de-kernelized Wasserstein-Fisher-Rao (D-WFR) distance by de-kernelizing the reaction-diffusion equation~\eqref{eq:react-diffus-pde}
\begin{align}
    \dot \mu - \DIV\left(\mu\cdot\K_\mu^{-1} \nabla \dFdmu\right)
        = - \mu\cdot\K_\mu^{-1}  \dFdmu.
        \label{eq:KRR-react-diffus-DK}
\end{align}
We find the dynamic formulation and the regularized version
\begin{multline}
    \mathrm{D}\text{-}\WFR^2(\mu,\nu)
    =
    \inf 
    \biggl\{\int_0^1
    \|\nabla \xi_t \|^2_{\rkhs}
    +
    \| \zeta_t\|^2_{\rkhs}
    \dd t
    \bigg|
     \\
     \dot {u_t} - \DIV\left({u_t}\cdot\K_{u_t}^{-1} \nabla \xi_t\right)
    = - {u_t}\cdot \mathcal S_{u_t}^{-1} \zeta_t
     ,
     u(0) =  \mu,
    u(1)=  \nu
    \biggr\}
    ,
    \label{eq:D-WFR}
\end{multline}

The important insight here is:
\begin{proposition}
[De-kernelized Wasserstein-Fisher-Rao]
The De-kernelized Wasserstein-Fisher-Rao distance \eqref{eq:D-WFR} coincides with the inf-convolution of the MMD and the De-Stein distance, \ie
    $\operatorname{D-\WFR}^2( \mu,  \nu)
    :=
        \inf_{\pi\in \Mplus}
        \operatorname{De-Stein}^2( \mu,  \pi)
        \square
        \operatorname{MMD}^2( \pi, \nu)$.

    Furthermore, it admits the static dual formulation
    \begin{align}
        \operatorname{D-\WFR}^2( \mu,  \nu)
        =
        \sup_\zeta
        \left\{
            \int \zeta\dd (\mu- \nu)
            -\frac14
            \|\nabla \zeta\|^2_{\rkhs}
            -\frac14\|\zeta\|^2_{\rkhs}
            \right\}.
        \end{align}
    \label{thm:dk-wfr}
\end{proposition}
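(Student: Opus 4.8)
The plan is to prove both assertions by showing that the dynamic formulation \eqref{eq:D-WFR} and the inf-convolution each evaluate to the \emph{same} static dual $\sup_\zeta\{\int\zeta\dd(\mu-\nu)-\tfrac14\|\nabla\zeta\|^2_\rkhs-\tfrac14\|\zeta\|^2_\rkhs\}$; the two equalities then chain together. The substantive inputs are the two component static duals already in hand: Proposition~\ref{thm:sobolev-ipm} gives $\operatorname{De-Stein}^2(\mu,\pi)=\sup_\zeta\{\int\zeta\dd(\mu-\pi)-\tfrac14\|\nabla\zeta\|^2_\rkhs\}$, and Lemma~\ref{thm:mmd-variational-form} gives $\operatorname{MMD}^2(\pi,\nu)=\sup_h\{\int h\dd(\pi-\nu)-\tfrac14\|h\|^2_\rkhs\}$. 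Writing $Q_1(\zeta)=\tfrac14\|\nabla\zeta\|^2_\rkhs$ and $Q_2(\zeta)=\tfrac14\|\zeta\|^2_\rkhs$, these identities say exactly that each squared distance is the Fenchel conjugate $Q_i^{*}$ evaluated at the displacement of its endpoints.

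For the static dual of \eqref{eq:D-WFR} I would argue directly, paralleling the proof of Proposition~\ref{thm:sobolev-ipm}. Since both component geometries are flat, the state factors cancel ($u_t\K_{u_t}^{-1}=\K^{-1}$), so the constraint reduces to a state-independent reaction-diffusion dynamics. Testing it against a trial function $\zeta_t$ and differentiating $\int\zeta_t u_t$ in time, I would integrate by parts on the divergence term and on the reaction term, converting both $L^2$-pairings into RKHS inner products through \eqref{eq:relation-rkhs-norm}. Completing the square separately in the transport forcing $\nabla\xi_t$ and in the reaction forcing (call it $\eta_t$, the reaction variable of \eqref{eq:D-WFR}) then bounds the half-cost $\tfrac12\int(\|\nabla\xi_t\|^2_\rkhs+\|\eta_t\|^2_\rkhs)\dd t$ from below by the boundary term $\int\zeta\dd(\mu-\nu)$, minus $\tfrac12\int(\|\nabla\zeta_t\|^2_\rkhs+\|\zeta_t\|^2_\rkhs)\dd t$, plus a remainder $\int\partial_t\zeta_t\,u_t$. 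Finiteness of the infimum over admissible paths forces the adjoint equation $\partial_t\zeta=0$, so the optimal $\zeta$ is static; the scaling relation of Lemma~\ref{lmm:ipm-dual-scaling} then turns the $\tfrac12$ into the $\tfrac14$, yielding the asserted formula.

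For the inf-convolution identity I would invoke the conjugate-of-a-sum rule $(Q_1+Q_2)^{*}=Q_1^{*}\square Q_2^{*}$. Setting $J_i:=Q_i^{*}$, which are precisely the two squared distances as functions of displacement, the substitution $a=\mu-\pi,\ b=\pi-\nu$ (so $a+b=\mu-\nu$ is fixed while $\pi$ is free) turns $\inf_\pi\{\operatorname{De-Stein}^2(\mu,\pi)+\operatorname{MMD}^2(\pi,\nu)\}$ into the inf-convolution $(J_1\square J_2)(\mu-\nu)=(Q_1^{*}\square Q_2^{*})(\mu-\nu)=(Q_1+Q_2)^{*}(\mu-\nu)$, which unfolds to exactly the static dual above. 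This is consistent with the structural origin of D-WFR as an inf-convolution of the De-Stein and MMD primal dissipations, equivalently a sum of their dual potentials (cf. Lemma~\ref{thm:dissipation-inf-conv} and the pattern in \eqref{eq:dissip-wfr-reg}).

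I expect the main obstacle to be the duality bookkeeping that collapses the two independent test functions into a single static $\zeta$. In the inf-convolution route this is the use of $(Q_1+Q_2)^{*}=Q_1^{*}\square Q_2^{*}$ together with the implicit minimax exchange, whose rigorous form wants $\pi$ to range over a linear space, whereas here $\pi\in\Mplus$ is confined to a cone; I would handle this by adopting the formal density/linear-space convention used throughout the paper, or by restricting to displacements admitting two-sided perturbations. In the direct route the analogous subtlety is justifying that the path infimum is $-\infty$ unless $\partial_t\zeta_t\le 0$ (hence $=0$ at optimality), exactly as in Proposition~\ref{thm:sobolev-ipm}. Both are routine once the formal working with densities is accepted, so the proof rests essentially on the already-established De-Stein and MMD static duals.
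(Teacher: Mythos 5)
Your proposal is correct and takes essentially the approach the paper intends: the paper in fact states Proposition~\ref{thm:dk-wfr} without a separate proof, leaving it as a direct consequence of exactly the ingredients you assemble --- the De-Stein static dual (Proposition~\ref{thm:sobolev-ipm}), the MMD dual (Lemma~\ref{thm:mmd-variational-form} with the scaling of Lemma~\ref{lmm:ipm-dual-scaling}), the state-independence of the flat dynamics forcing $\partial_t \zeta = 0$, and the conjugate-of-sum identity $(Q_1{+}Q_2)^* = Q_1^* \square Q_2^*$ underlying the inf-convolution structure of Lemma~\ref{thm:dissipation-inf-conv} and \eqref{eq:dissip-wfr-reg}. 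The caveats you flag (the cone constraint $\pi \in \Mplus$ versus linear-space Fenchel duality, and renaming the reaction forcing to avoid the clash with the test function $\zeta$) are exactly the points the paper handles at its declared formal level, so no genuine gap remains.
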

Compared with the MMD~\eqref{eq:dual-mmd-unconstrained}, the test function in the de-kernelized WFR is additionally regularized by the RKHS norm of the gradient, $\|\nabla \zeta\|^2_{\rkhs}$.

\subsection{Flattened Wasserstein-Fisher-Rao and kernel-Sobolev-Fisher}
Using the WFR-type inf-convolution
with the flattened Fisher-Rao (Section~\ref{sec:lin-fr}) and the flattened Wasserstein (Section~\ref{sec:lin-w}),
we obtain the following flattened Wasserstein-Fisher-Rao formulation
\begin{multline}
    \WFR_\omega^2(\mu,\nu)
    :=
    \min 
    \biggl\{\int_0^1
    \|\nabla \xi_t \|^2_{L^2_{\omega_1}}
    +
    \| \zeta_t\|^2_{L^2_{\omega2}}
    \dd t
    \bigg|
    \\
     \dot {u_t} - \DIV\left({\omega_1}\nabla \xi_t\right)
    = 
    -  {\omega_2}\zeta_t
     ,
     u(0) =  \mu,
    u(1)=  \nu\biggr\}
    ,
    \label{eq:bb-formula-ACH}
\end{multline}
The flattened geometry is the inf-convotion of the weighted $L^2_\omega$ and $H^{-1}_\omega$ norms, with the static formulation
\begin{align}
    \WFR_\omega^2(\mu,\nu)
    =
    \sup_\zeta
\left\{
\int \zeta\dd (\mu - \nu)
-\frac14
\|\nabla \zeta\|^2_{L^2_\omega}
-\frac14
\|\zeta\|^2_{L^2_\omega}
\right\}.
\end{align}
In the simplest case that $\omega_1,\omega_2$ are both the Lebesgue measure,
we recover the inf-convolution of the Allen-Cahn ($L^2$) and Cahn-Hilliard ($H^{-1}$).
Choosing $\omega_1=\omega_2=\nu$, this distance becomes the Sobolev-Fisher discrepancy~\citep{mrouehSobolevDescent2019}.
Furthermore, those authors proposed a regularized version with a heuristic RKHS norm regularization, of which we give the precise characterization.
\begin{example}
    [Regularized Kernel-Sobolev-Fisher discrepancy]
    The regularized kernel Sobolev-Fisher discrepancy proposed by~\citet{mroueh_unbalanced_2020}
    \begin{align}
        \mathrm{KSF}^2(\mu,\nu)
        =
        \sup_\zeta
\left\{
\int \zeta\dd (\mu - \nu)
-\frac14
\|\nabla \zeta\|^2_{L^2_\nu}
-\frac14
\|\zeta\|^2_{L^2_\nu}
-\frac{a}{2}\|\zeta\|^2_{\rkhs}
\right\},
\label{eq:heuristic-sobolev-static}
\end{align}
for $a>0$,
is equivalent to the following dynamic formulation
        \begin{align*}
            \min 
            \biggl\{\int_0^1
            \|\nabla \xi_t \|^2_{L^2_\nu}
            +
            \| \zeta_t\|^2_{L^2_\nu}
            +
            \frac1{2a}\|  \kappa_t \|^2_{\rkhs}
            \dd t
            \bigg|
             \dot {u_t} - \DIV\left(\nabla \xi_t\right)
            = 
            -  \zeta_t
            - \K^{-1} \kappa_t
             ,
             u(0) =  \mu,
            u(1)=  \nu
            \biggr\}
            .
        \end{align*}
    \end{example}
    It is
    the inf-convolution of three dissipaton geometries, the MMD, $H^{-1}_\nu$ norm, and $L^2_\nu$ norm.

\subsection*{Formal arguments for energy dissipation in kernel-approximate flows}
Related to the evolutionary $\Gamma$-convergence results in Section~\ref{sec:g-cvgs},
we now analyze the energy dissipatioin in the kernel-approximate gradientn flows.
First, we examine the kernel-regularized Fisher-Rao setting studied in Section~\ref{sec:kern-FR}.
For the RKHS approximation in this paper, we may use the standard approximation theory characterization, \eg \citep[Chapter~8]{cucker_learning_2007}, which was applied to the Stein setting by \citet{he2022regularized}. 
    One difference is that
    we do not make
    the assumption that $\dFdmu\in \operatorname{range}({\K_\mu}^s)$.
    In the Wasserstein or the Fisher-Rao setting, the regularity of $\dFdmu$ or $\nabla \dFdmu$ is determined by the gradient system and geodesic structure introduced in Section~\ref{sec:background}.
    It is not clear whether the range condition $\xi \in \operatorname{Range}(\K_\mu^{s})$ can be satisfied in any meaningful gradient systems.
    Furthermore, since $\K_\mu$ is a compact operator, hence the quantity $\|\K_\mu^{-\frac{s}{2}}\ \xi\|^2_{L^2_{\mu}}$ is unbounded.
Instead, we rely on regularization to avoid unbounded estimate.
We emphasize that the following arguments are merely formal and we provide
a rigorous justification in Section~\ref{sec:g-cvgs}.
\begin{proposition}
    [Energy dissipation of kernel-approximate Fisher-Rao flows]
    For $0\leq s \leq 1$, the energy dissipation satisfies
    \begin{align*}
        \frac{\dd}{\dd t} F(\mu(t)) 
        \leq
        \underbrace{-\left\|\dFdmut\right\|^2_{L^2_{\mu}}}_{\textrm{FR dissipation}}
        +
        \underbrace{{\lambda}^s
            \left\|
            \left(\K_\mu + \lambda \ID \right)^{-\frac{s}{2}}\dFdmut
            \right\|^2_{L^2_{\mu}}}_{{\textrm{approximation error}}}
            .
    \end{align*}
    \label{prop:local-Lojas-estimate-fine}
\end{proposition}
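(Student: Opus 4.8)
The plan is to differentiate the energy directly along the approximate flow and then estimate the resulting quadratic form by spectral interpolation. First I would compute the dissipation rate. Along the kernel-approximate Fisher-Rao flow \eqref{eq:fr-krr-reaction}, namely $\dot\mu = -\mu\,(\K_\mu+\lambda\ID)^{-1}\K_\mu\dFdmu = -\mu\, r_t$ with growth field $r_t = (\K_\mu+\lambda\ID)^{-1}\K_\mu\dFdmu$, the first-variation identity \eqref{eq:first-var-def} gives
\[
\frac{\dd}{\dd t}F(\mu(t)) = \int \dFdmu\,\dd\dot\mu = -\iprod{\dFdmu}{(\K_\mu+\lambda\ID)^{-1}\K_\mu\dFdmu}_{L^2_\mu}.
\]
This reduces the whole statement to bounding one self-adjoint quadratic form from above.

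Second I would diagonalize. By Theorem~\ref{thm:rkhs-int-operator} the operator $\K_\mu$ is compact, positive and self-adjoint on $L^2_\mu$, so the spectral theorem furnishes an orthonormal eigenbasis $\{e_j\}$ of $L^2_\mu$ with eigenvalues $\sigma_j\ge 0$. Expanding the quadratic form in this basis,
\[
\iprod{\dFdmu}{(\K_\mu+\lambda\ID)^{-1}\K_\mu\dFdmu}_{L^2_\mu} = \sum_j \frac{\sigma_j}{\sigma_j+\lambda}\,\bigl|\iprod{\dFdmu}{e_j}_{L^2_\mu}\bigr|^2.
\]
Writing $\frac{\sigma_j}{\sigma_j+\lambda} = 1-\frac{\lambda}{\sigma_j+\lambda}$ and applying Parseval's identity, $\sum_j\bigl|\iprod{\dFdmu}{e_j}\bigr|^2 = \|\dFdmu\|^2_{L^2_\mu}$, peels off exactly the pure Fisher-Rao dissipation $-\|\dFdmu\|^2_{L^2_\mu}$ and leaves the nonnegative remainder $\sum_j \frac{\lambda}{\sigma_j+\lambda}\bigl|\iprod{\dFdmu}{e_j}\bigr|^2$ to be controlled.

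Third, and this is the crux, I would bound the remainder by a spectral interpolation that introduces the factor $\lambda^s$. The elementary identity I would use is
\[
\frac{\lambda}{\sigma_j+\lambda} = \lambda^s \left(\frac{\lambda}{\sigma_j+\lambda}\right)^{1-s}\frac{1}{(\sigma_j+\lambda)^s}.
\]
For $0\le s\le 1$ the exponent $1-s$ is nonnegative and $\frac{\lambda}{\sigma_j+\lambda}\le 1$, so $\bigl(\frac{\lambda}{\sigma_j+\lambda}\bigr)^{1-s}\le 1$, whence $\frac{\lambda}{\sigma_j+\lambda}\le \lambda^s(\sigma_j+\lambda)^{-s}$. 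Summing against $\bigl|\iprod{\dFdmu}{e_j}\bigr|^2$ and recognizing $\sum_j(\sigma_j+\lambda)^{-s}\bigl|\iprod{\dFdmu}{e_j}\bigr|^2 = \|(\K_\mu+\lambda\ID)^{-s/2}\dFdmu\|^2_{L^2_\mu}$ produces the claimed approximation-error term $\lambda^s\,\|(\K_\mu+\lambda\ID)^{-s/2}\dFdmu\|^2_{L^2_\mu}$, and combining with the Fisher-Rao term yields the inequality.

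The main subtlety is well-definedness rather than algebra. Since $\K_\mu$ is built from a non-negative (not probability) measure, I would first confirm that the spectral theorem and the functional calculus for $(\K_\mu+\lambda\ID)^{-s/2}$ apply in this setting. The regularization $\lambda>0$ is what makes everything legitimate: $\K_\mu+\lambda\ID\ge\lambda\ID$ is boundedly invertible, so all the negative fractional powers appearing above are \emph{bounded} operators, in sharp contrast to $\K_\mu^{-s/2}$ alone, which is unbounded because $\K_\mu$ is compact (as noted in the discussion preceding the statement). This guarantees that each inner product and norm is finite, so that both the chain-rule differentiation of $F$ and the term-by-term spectral manipulations are justified.
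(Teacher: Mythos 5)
Your proof is correct and follows essentially the same route as the paper's: differentiating $F$ along the flow to reduce everything to the quadratic form $\iprod{\dFdmu}{(\K_\mu+\lambda\ID)^{-1}\K_\mu\,\dFdmu}_{L^2_\mu}$, diagonalizing $\K_\mu$, splitting $\frac{\sigma_j}{\sigma_j+\lambda}=1-\frac{\lambda}{\sigma_j+\lambda}$, and then using exactly the interpolation identity $\frac{\lambda}{\sigma_j+\lambda}=\lambda^s\bigl(\frac{\lambda}{\sigma_j+\lambda}\bigr)^{1-s}(\sigma_j+\lambda)^{-s}$ with the bound $\bigl(\frac{\lambda}{\sigma_j+\lambda}\bigr)^{1-s}\leq 1$. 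Your added remarks on the bounded invertibility of $\K_\mu+\lambda\ID$ are a welcome clarification of a point the paper leaves implicit, but the argument itself is the same.
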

Note that the estimate $    \|
\left(\K_\mu + \lambda \ID \right)^{-\frac{s}{2}}\ \xi
\|^2_{L^2_{\mu}}$ is finite for any fixed $\lambda$.

The implication of the above estimate can be seen in the following scenario.
Suppose the function $F$ satisfies a Polyak-\Loj functional inequality in the pure Fisher-Rao geoemtry,
\ie
\begin{align}
    \left\|\dFdmu\right\|^2_{L^2_{\mu}}
    \geq 
    c\cdot
    \left(F(\mu) - \inf_\mu F(\mu)\right)
    .
    \label{eq:local-Lojas-FR}
\end{align}
Note that we have already justified the global \Loj inequality holds for the ${\phiP}$-divergence energy satisfying our threshold condition.
By the generalized Gr\"onwall's lemma,
the energy decay of the approximate Fisher-Rao gradient system satisfies the following
energy dissipation
estimate.
    \begin{multline}
        F(\mu(T)) - \inf_\mu F(\mu)
        \leq
        \underbrace{
         \rme^{-c\cdot T}
        \cdot
            \left(F(\mu(0)) - \inf_\mu F(\mu)\right)}_{\textrm{FR flow}}
        \\
        +
        \underbrace{\int_0^T
        \rme^{-c \cdot (T-t)}
        \cdot
        {\lambda}^s
            \|
            \left(\K_{\mu_t} + \lambda \ID \right)^{-\frac{s}{2}}\dFdmut
            \|^2_{L^2_{{\mu_t}}}
            \dd t}_{\textrm{approximation error}}
            ,
            \text{ for } 0\leq s \leq 1.
            \label{eq:local-Lojas-estimate-fine}
    \end{multline}
Those are merely formal arguments. They do not rigorously justify the asymptotic convergence behavior.
However,
based on our characterization of the entropy dissipation in the pure Fisher-Rao geometry,
we expect the approximation such as in Proposition~\ref{prop:local-Lojas-estimate-fine} to be close.
In Section~\ref{sec:g-cvgs}, we provide a rigorous justification, at the rigor level of applied analysis, of convergence as $\lambda\to 0$
for the kernel-approximate Fisher-Rao flows.

\subsection*{Other related works}
\label{sec:related-works}
There is a well-studied line of works using regularized energies of the Wasserstein gradient flow~\citep{carrillo2019blob} and Fisher-Rao gradient flow~\citep{lu2023birth}.
While closely related,
our work differs significantly in
1) the focus on modifying the dissipation geometry and the gradient structure instead of the energy -- our flows use the same energy as the target flow.
2) the focus on regression type approximation~\eqref{eq:gfe-krr-FRk} and \eqref{eq:KRR-stein} instead of relying on convolution and letting the kernel bandwidth go to zero.
Point 2) is important as it lets us use straightforward replacement with parametric models such as deep neural networks.
Furthermore, Section~\ref{sec:nonparametric-regression}
shows a direct connection between our regression formulation and the Rayleigh Principle for gradient flows.
\citet{daiProvableBayesianInference2016} proposed a particle mirror descent algorithm that can be thought as a practical numerical implementation of the Fisher-Rao gradient flow.
That paper also contains a kernelized particle approximation.
Related to kernel methods and Wasserstein flows,
In comparison, we have investigated the dynamics of generalized gradient flows (not necessarily Wasserstein) and their dissipation geometries, which should not be confused with the WGF with different energy objectives.
There also exist works such as
\citep{divol2022optimal} that characterizes the statistical error in the setting of approximating the optimal transport map using the static formulation when computing the Wasserstein distance.
\citet{marzouk_distribution_2023} also took a nonparametric regression perspective for ODE learning. Although their results do not concern gradient flows.
Additionally, there exists a large body of works that use gradient flows such as Langevin or birth-death dynamics to analyze neural network training, which we do not exhaust here.
There have also been a few recent works such as \citep{wangMeasureTransportKernel2024,mauraisSamplingUnitTime2024,nuskenSteinTransportBayesian2024}
concerning kernelization and gradient flows, whose perspectives complement ours.

\paragraph*{Acknowledgement}
This project was partially supported by
the Deutsche Forschungsgemeinschaft (DFG) through the Berlin Mathematics
Research Center MATH+ (EXC-2046/1, project ID: 390685689)
and
through the priority programme "Theoretical foundations of deep learning" (SPP 2298, project number: 543963649). 

\bibliography{ref}

\begin{thebibliography}{66}
\providecommand{\natexlab}[1]{#1}
\providecommand{\url}[1]{\texttt{#1}}
\expandafter\ifx\csname urlstyle\endcsname\relax
  \providecommand{\doi}[1]{doi: #1}\else
  \providecommand{\doi}{doi: \begingroup \urlstyle{rm}\Url}\fi

\bibitem[Ambrosio et~al.(2005)Ambrosio, Gigli, and
  Savare]{ambrosio2008gradient}
L.~Ambrosio, N.~Gigli, and G.~Savare.
\newblock \emph{Gradient Flows: In Metric Spaces and in the Space of
  Probability Measures}.
\newblock Springer Science \& Business Media, 2005.

\bibitem[Arbel et~al.(2019)Arbel, Korba, Salim, and
  Gretton]{arbel_maximum_2019}
M.~Arbel, A.~Korba, A.~Salim, and A.~Gretton.
\newblock Maximum {Mean} {Discrepancy} {Gradient} {Flow}.
\newblock \emph{arXiv:1906.04370 [cs, stat]}, Dec. 2019.
\newblock URL \url{http://arxiv.org/abs/1906.04370}.
\newblock arXiv: 1906.04370.

\bibitem[Bhattacharyya(1946)]{bhattacharyyaMeasureDivergenceTwo1946}
A.~Bhattacharyya.
\newblock On a {{Measure}} of {{Divergence}} between {{Two Multinomial
  Populations}}.
\newblock \emph{Sankhy{\=a}: The Indian Journal of Statistics (1933-1960)},
  7\penalty0 (4):\penalty0 401--406, 1946.
\newblock ISSN 0036-4452.

\bibitem[Carrillo et~al.(2019)Carrillo, Craig, and
  Patacchini]{carrillo2019blob}
J.~A. Carrillo, K.~Craig, and F.~S. Patacchini.
\newblock A blob method for diffusion.
\newblock \emph{Calculus of Variations and Partial Differential Equations},
  58:\penalty0 1--53, 2019.

\bibitem[Chen et~al.(2023)Chen, Huang, Huang, Reich, and
  Stuart]{chenGradientFlowsSampling2023}
Y.~Chen, D.~Z. Huang, J.~Huang, S.~Reich, and A.~M. Stuart.
\newblock Gradient {{Flows}} for {{Sampling}}: {{Mean-Field Models}},
  {{Gaussian Approximations}} and {{Affine Invariance}}.
\newblock \penalty0 (arXiv:2302.11024), Nov. 2023.
\newblock \doi{10.48550/arXiv.2302.11024}.

\bibitem[Chewi et~al.(2020)Chewi, Gouic, Lu, Maunu, and
  Rigollet]{chewiSVGDKernelizedWasserstein2020}
S.~Chewi, T.~L. Gouic, C.~Lu, T.~Maunu, and P.~Rigollet.
\newblock {{SVGD}} as a kernelized {{Wasserstein}} gradient flow of the
  chi-squared divergence.
\newblock \penalty0 (arXiv:2006.02509), June 2020.

\bibitem[Chizat et~al.(2019)Chizat, Peyré, Schmitzer, and
  Vialard]{chizat_unbalanced_2019}
L.~Chizat, G.~Peyré, B.~Schmitzer, and F.-X. Vialard.
\newblock Unbalanced {Optimal} {Transport}: {Dynamic} and {Kantorovich}
  {Formulation}.
\newblock \emph{arXiv:1508.05216 [math]}, Feb. 2019.
\newblock URL \url{http://arxiv.org/abs/1508.05216}.
\newblock arXiv: 1508.05216.

\bibitem[Chwialkowski et~al.(2016)Chwialkowski, Strathmann, and
  Gretton]{chwialkowskiKernelTestGoodness2016}
K.~Chwialkowski, H.~Strathmann, and A.~Gretton.
\newblock A {{Kernel Test}} of {{Goodness}} of {{Fit}}.
\newblock In \emph{Proceedings of {{The}} 33rd {{International Conference}} on
  {{Machine Learning}}}, pages 2606--2615. PMLR, June 2016.

\bibitem[Conway(1985)]{conway1985course}
J.~B. Conway.
\newblock A course in functional analysis.
\newblock \emph{Graduate Texts in Mathematics}, 1985.

\bibitem[Craig et~al.(2023)Craig, Elamvazhuthi, Haberland, and
  Turanova]{craig_blob_2023}
K.~Craig, K.~Elamvazhuthi, M.~Haberland, and O.~Turanova.
\newblock A blob method for inhomogeneous diffusion with applications to
  multi-agent control and sampling.
\newblock \emph{Mathematics of Computation}, 92\penalty0 (344):\penalty0
  2575--2654, Nov. 2023.
\newblock ISSN 0025-5718, 1088-6842.
\newblock \doi{10.1090/mcom/3841}.

\bibitem[Cucker and Zhou(2007)]{cucker_learning_2007}
F.~Cucker and D.~X. Zhou.
\newblock \emph{Learning {Theory}: {An} {Approximation} {Theory} {Viewpoint}}.
\newblock Cambridge University Press, Mar. 2007.
\newblock ISBN 978-1-139-46286-0.

\bibitem[Dai et~al.(2016)Dai, He, Dai, and
  Song]{daiProvableBayesianInference2016}
B.~Dai, N.~He, H.~Dai, and L.~Song.
\newblock Provable {{Bayesian Inference}} via {{Particle Mirror Descent}}.
\newblock In \emph{Proceedings of the 19th {{International Conference}} on
  {{Artificial Intelligence}} and {{Statistics}}}, pages 985--994. PMLR, May
  2016.

\bibitem[De~Bortoli(2023)]{debortoliConvergenceDenoisingDiffusion2023}
V.~De~Bortoli.
\newblock Convergence of denoising diffusion models under the manifold
  hypothesis.
\newblock \penalty0 (arXiv:2208.05314), May 2023.
\newblock \doi{10.48550/arXiv.2208.05314}.

\bibitem[Divol et~al.(2022)Divol, Niles-Weed, and Pooladian]{divol2022optimal}
V.~Divol, J.~Niles-Weed, and A.-A. Pooladian.
\newblock Optimal transport map estimation in general function spaces.
\newblock \emph{arXiv preprint arXiv:2212.03722}, 2022.

\bibitem[Dong et~al.(2022)Dong, Wang, Lin, and Zhang]{dong2022particle}
H.~Dong, X.~Wang, Y.~Lin, and T.~Zhang.
\newblock Particle-based variational inference with preconditioned functional
  gradient flow.
\newblock \emph{arXiv preprint arXiv:2211.13954}, 2022.

\bibitem[Duncan et~al.(2019)Duncan, N{\"u}sken, and
  Szpruch]{duncan2019geometry}
A.~Duncan, N.~N{\"u}sken, and L.~Szpruch.
\newblock On the geometry of stein variational gradient descent.
\newblock \emph{arXiv preprint arXiv:1912.00894}, 2019.

\bibitem[Gallou{\"e}t and Monsaingeon(2017)]{gallouet2017jko}
T.~O. Gallou{\"e}t and L.~Monsaingeon.
\newblock A jko splitting scheme for kantorovich--fisher--rao gradient flows.
\newblock \emph{SIAM Journal on Mathematical Analysis}, 49\penalty0
  (2):\penalty0 1100--1130, 2017.

\bibitem[Gladin et~al.(2024)Gladin, Dvurechensky, Mielke, and
  Zhu]{gladin2024interaction}
E.~Gladin, P.~Dvurechensky, A.~Mielke, and J.-J. Zhu.
\newblock Interaction-force transport gradient flows.
\newblock \emph{arXiv preprint arXiv:2405.17075}, 2024.

\bibitem[Glaser et~al.(2021)Glaser, Arbel, and
  Gretton]{glaserKALEFlowRelaxed2021}
P.~Glaser, M.~Arbel, and A.~Gretton.
\newblock {{KALE Flow}}: {{A Relaxed KL Gradient Flow}} for {{Probabilities}}
  with {{Disjoint Support}}.
\newblock In \emph{Neural {{Information Processing Systems}}}, June 2021.

\bibitem[Gorham and Mackey(2017)]{gorham2017measuring}
J.~Gorham and L.~Mackey.
\newblock Measuring sample quality with kernels.
\newblock In \emph{International Conference on Machine Learning}, pages
  1292--1301. PMLR, 2017.

\bibitem[Gretton et~al.(2012)Gretton, Borgwardt, Rasch, Sch{\"o}lkopf, and
  Smola]{gretton2012kernel}
A.~Gretton, K.~M. Borgwardt, M.~J. Rasch, B.~Sch{\"o}lkopf, and A.~Smola.
\newblock A kernel two-sample test.
\newblock \emph{The Journal of Machine Learning Research}, 13\penalty0
  (1):\penalty0 723--773, 2012.

\bibitem[He et~al.(2022)He, Balasubramanian, Sriperumbudur, and
  Lu]{he2022regularized}
Y.~He, K.~Balasubramanian, B.~K. Sriperumbudur, and J.~Lu.
\newblock Regularized stein variational gradient flow.
\newblock \emph{arXiv preprint arXiv:2211.07861}, 2022.

\bibitem[Hellinger(1909)]{hellingerNeueBegrundungTheorie1909}
E.~Hellinger.
\newblock Neue {{Begr{\"u}ndung}} der {{Theorie}} quadratischer {{Formen}} von
  unendlichvielen {{Ver{\"a}nderlichen}}.
\newblock \emph{Journal f{\"u}r die reine und angewandte Mathematik},
  1909\penalty0 (136):\penalty0 210--271, July 1909.
\newblock ISSN 1435-5345, 0075-4102.
\newblock \doi{10.1515/crll.1909.136.210}.

\bibitem[Ho et~al.(2020)Ho, Jain, and Abbeel]{ho_denoising_2020}
J.~Ho, A.~Jain, and P.~Abbeel.
\newblock Denoising {Diffusion} {Probabilistic} {Models}.
\newblock \emph{arXiv:2006.11239 [cs, stat]}, Dec. 2020.
\newblock URL \url{http://arxiv.org/abs/2006.11239}.
\newblock arXiv: 2006.11239.

\bibitem[Hyvarinen()]{hyvarinenEstimationNonNormalizedStatistical}
A.~Hyvarinen.
\newblock Estimation of {{Non-Normalized Statistical Models}} by {{Score
  Matching}}.

\bibitem[Javanmard et~al.(2020)Javanmard, Mondelli, and
  Montanari]{javanmard_analysis_2020}
A.~Javanmard, M.~Mondelli, and A.~Montanari.
\newblock Analysis of a two-layer neural network via displacement convexity.
\newblock \emph{The Annals of Statistics}, 48\penalty0 (6), Dec. 2020.
\newblock ISSN 0090-5364.
\newblock \doi{10.1214/20-AOS1945}.

\bibitem[Jordan et~al.(1998)Jordan, Kinderlehrer, and
  Otto]{jordan_variational_1998}
R.~Jordan, D.~Kinderlehrer, and F.~Otto.
\newblock The variational formulation of the {Fokker}–{Planck} equation.
\newblock \emph{SIAM journal on mathematical analysis}, 29\penalty0
  (1):\penalty0 1--17, 1998.
\newblock Publisher: SIAM.

\bibitem[Kakutani(1948)]{kakutaniEquivalenceInfiniteProduct1948}
S.~Kakutani.
\newblock On equivalence of infinite product measures.
\newblock \emph{Annals of Mathematics}, 49\penalty0 (1):\penalty0 214--224,
  1948.

\bibitem[Korba et~al.(2021)Korba, {Aubin-Frankowski}, Majewski, and
  Ablin]{korbaKernelSteinDiscrepancy2021}
A.~Korba, P.-C. {Aubin-Frankowski}, S.~Majewski, and P.~Ablin.
\newblock Kernel {{Stein Discrepancy Descent}}.
\newblock In \emph{Proceedings of the 38th {{International Conference}} on
  {{Machine Learning}}}, pages 5719--5730. {PMLR}, July 2021.

\bibitem[Kremer et~al.(2023)Kremer, Nemmour, Sch{\"o}lkopf, and
  Zhu]{kremer2023estimation}
H.~Kremer, Y.~Nemmour, B.~Sch{\"o}lkopf, and J.-J. Zhu.
\newblock Estimation beyond data reweighting: Kernel method of moments.
\newblock In \emph{International Conference on Machine Learning}, pages
  17745--17783. PMLR, 2023.

\bibitem[Laschos and Mielke(2019)]{LasMie19GPCA}
V.~Laschos and A.~Mielke.
\newblock Geometric properties of cones with applications on the
  {H}ellinger--{K}antorovich space, and a new distance on the space of
  probability measures.
\newblock \emph{J. Funct. Analysis}, 276\penalty0 (11):\penalty0 3529--3576,
  2019.
\newblock \doi{10.1016/j.jfa.2018.12.013}.

\bibitem[Liero et~al.(2018)Liero, Mielke, and Savaré]{liero_optimal_2018}
M.~Liero, A.~Mielke, and G.~Savaré.
\newblock Optimal {Entropy}-{Transport} problems and a new
  {Hellinger}–{Kantorovich} distance between positive measures.
\newblock \emph{Inventiones mathematicae}, 211\penalty0 (3):\penalty0
  969--1117, Mar. 2018.
\newblock ISSN 0020-9910, 1432-1297.
\newblock \doi{10.1007/s00222-017-0759-8}.
\newblock URL \url{http://link.springer.com/10.1007/s00222-017-0759-8}.

\bibitem[Lipman et~al.(2022)Lipman, Chen, Ben-Hamu, Nickel, and
  Le]{lipman2022flow}
Y.~Lipman, R.~T. Chen, H.~Ben-Hamu, M.~Nickel, and M.~Le.
\newblock Flow matching for generative modeling.
\newblock \emph{arXiv preprint arXiv:2210.02747}, 2022.

\bibitem[Liu and Wang(2019)]{liu_stein_2019}
Q.~Liu and D.~Wang.
\newblock Stein {Variational} {Gradient} {Descent}: {A} {General} {Purpose}
  {Bayesian} {Inference} {Algorithm}.
\newblock \emph{arXiv:1608.04471 [cs, stat]}, Sept. 2019.
\newblock URL \url{http://arxiv.org/abs/1608.04471}.
\newblock arXiv: 1608.04471.

\bibitem[Liu et~al.()Liu, Lee, and Jordan]{liu_kernelized_2016}
Q.~Liu, J.~D. Lee, and M.~I. Jordan.
\newblock A kernelized stein discrepancy for goodness-of-fit tests and model
  evaluation.
\newblock \penalty0 ({arXiv}:1602.03253).
\newblock \doi{10.48550/arXiv.1602.03253}.
\newblock URL \url{http://arxiv.org/abs/1602.03253}.

\bibitem[Lu et~al.(2023)Lu, Slep{\v{c}}ev, and Wang]{lu2023birth}
Y.~Lu, D.~Slep{\v{c}}ev, and L.~Wang.
\newblock Birth--death dynamics for sampling: global convergence,
  approximations and their asymptotics.
\newblock \emph{Nonlinearity}, 36\penalty0 (11):\penalty0 5731, 2023.

\bibitem[Marzouk et~al.(2023)Marzouk, Ren, Wang, and
  Zech]{marzouk_distribution_2023}
Y.~Marzouk, Z.~Ren, S.~Wang, and J.~Zech.
\newblock Distribution learning via neural differential equations: a
  nonparametric statistical perspective, Sept. 2023.
\newblock URL \url{http://arxiv.org/abs/2309.01043}.
\newblock arXiv:2309.01043 [cs, math, stat].

\bibitem[Maurais and Marzouk(2024)]{mauraisSamplingUnitTime2024}
A.~Maurais and Y.~Marzouk.
\newblock Sampling in {{Unit Time}} with {{Kernel Fisher-Rao Flow}}.
\newblock \penalty0 (arXiv:2401.03892), Feb. 2024.

\bibitem[Mielke(2015)]{Miel15VAMD}
A.~Mielke.
\newblock Variational approaches and methods for dissipative material models
  with multiple scales.
\newblock In S.~Conti and K.~Hackl, editors, \emph{Analysis and Computation of
  Microstructure in Finite Plasticity}, volume~78 of \emph{Lect. Notes Appl.
  Comp. Mechanics}, chapter~5, pages 125--155. Springer, 2015.
\newblock \doi{10.1007/978-3-319-18242-1_5}.

\bibitem[Mielke(2023)]{mielke2023introduction}
A.~Mielke.
\newblock An introduction to the analysis of gradients systems.
\newblock \emph{arXiv preprint arXiv:2306.05026}, 2023.

\bibitem[Mroueh and Rigotti(2020)]{mroueh_unbalanced_2020}
Y.~Mroueh and M.~Rigotti.
\newblock Unbalanced {Sobolev} {Descent}, Sept. 2020.
\newblock URL \url{http://arxiv.org/abs/2009.14148}.
\newblock arXiv:2009.14148 [cs, stat].

\bibitem[Mroueh et~al.(2019)Mroueh, Sercu, and Raj]{mrouehSobolevDescent2019}
Y.~Mroueh, T.~Sercu, and A.~Raj.
\newblock Sobolev {{Descent}}.
\newblock In \emph{Proceedings of the {{Twenty-Second International
  Conference}} on {{Artificial Intelligence}} and {{Statistics}}}, pages
  2976--2985. {PMLR}, Apr. 2019.

\bibitem[Nowozin et~al.(2016)Nowozin, Cseke, and Tomioka]{nowozin_f-gan_2016}
S.~Nowozin, B.~Cseke, and R.~Tomioka.
\newblock f-{GAN}: {Training} {Generative} {Neural} {Samplers} using
  {Variational} {Divergence} {Minimization}.
\newblock In \emph{Advances in {Neural} {Information} {Processing} {Systems}},
  volume~29. Curran Associates, Inc., 2016.
\newblock URL
  \url{https://proceedings.neurips.cc/paper/2016/hash/cedebb6e872f539bef8c3f919874e9d7-Abstract.html}.

\bibitem[N{\"u}sken(2024)]{nuskenSteinTransportBayesian2024}
N.~N{\"u}sken.
\newblock Stein transport for {{Bayesian}} inference.
\newblock \penalty0 (arXiv:2409.01464), Sept. 2024.
\newblock \doi{10.48550/arXiv.2409.01464}.

\bibitem[Oko et~al.(2023)Oko, Akiyama, and Suzuki]{oko2023diffusion}
K.~Oko, S.~Akiyama, and T.~Suzuki.
\newblock Diffusion models are minimax optimal distribution estimators.
\newblock \emph{arXiv preprint arXiv:2303.01861}, 2023.

\bibitem[Otto()]{ottoGeometryDissipativeEvolution2001}
F.~Otto.
\newblock The geometry of dissipative evolution equations: The porous medium
  equation.

\bibitem[Otto(1996)]{otto1996double}
F.~Otto.
\newblock \emph{Double degenerate diffusion equations as steepest descent}.
\newblock 1996.

\bibitem[Otto and Villani(2000)]{otto2000generalization}
F.~Otto and C.~Villani.
\newblock Generalization of an inequality by talagrand and links with the
  logarithmic sobolev inequality.
\newblock \emph{Journal of Functional Analysis}, 173\penalty0 (2):\penalty0
  361--400, 2000.

\bibitem[Peletier(2014)]{peletier_variational_2014}
M.~A. Peletier.
\newblock Variational {Modelling}: {Energies}, gradient flows, and large
  deviations.
\newblock \emph{arXiv:1402.1990 [math-ph]}, Feb. 2014.
\newblock URL \url{http://arxiv.org/abs/1402.1990}.
\newblock arXiv: 1402.1990.

\bibitem[Rao(1945)]{rao1945information}
C.~R. Rao.
\newblock Information and accuracy attainable in the estimation of statistical
  parameters.
\newblock \emph{Bulletin of the Calcutta Mathematical Society}, 37\penalty0
  (3):\penalty0 81--91, 1945.

\bibitem[Rayleigh(1873)]{rayleigh_general_1873}
J.~W. S.~B. Rayleigh.
\newblock \emph{Some general theorems relating to vibrations}.
\newblock London Mathematical Society, 1873.

\bibitem[Santambrogio(2015)]{santambrogio_optimal_2015}
F.~Santambrogio.
\newblock Optimal transport for applied mathematicians.
\newblock \emph{Birkäuser, NY}, 55\penalty0 (58-63):\penalty0 94, 2015.
\newblock Publisher: Springer.

\bibitem[Serfaty(2011)]{Serf11GCGF}
S.~Serfaty.
\newblock Gamma-convergence of gradient flows on {H}ilbert spaces and metric
  spaces and applications.
\newblock 31\penalty0 (4):\penalty0 1427--1451, 2011.

\bibitem[Smola et~al.(2007)Smola, Gretton, Song, and
  Sch{\"o}lkopf]{smolaHilbertSpaceEmbedding2007}
A.~Smola, A.~Gretton, L.~Song, and B.~Sch{\"o}lkopf.
\newblock A {{Hilbert Space Embedding}} for {{Distributions}}.
\newblock In M.~Hutter, R.~A. Servedio, and E.~Takimoto, editors,
  \emph{Algorithmic {{Learning Theory}}}, pages 13--31, {Berlin, Heidelberg},
  2007. {Springer}.
\newblock ISBN 978-3-540-75225-7.
\newblock \doi{10.1007/978-3-540-75225-7_5}.

\bibitem[{Sohl-Dickstein} et~al.(2015){Sohl-Dickstein}, Weiss, Maheswaranathan,
  and Ganguli]{sohl-dicksteinDeepUnsupervisedLearning2015}
J.~{Sohl-Dickstein}, E.~Weiss, N.~Maheswaranathan, and S.~Ganguli.
\newblock Deep unsupervised learning using nonequilibrium thermodynamics.
\newblock In \emph{International Conference on Machine Learning}, pages
  2256--2265. {PMLR}, 2015.

\bibitem[Song and Ermon(2020)]{song_generative_2020}
Y.~Song and S.~Ermon.
\newblock Generative {Modeling} by {Estimating} {Gradients} of the {Data}
  {Distribution}.
\newblock \emph{arXiv:1907.05600 [cs, stat]}, Oct. 2020.
\newblock URL \url{http://arxiv.org/abs/1907.05600}.
\newblock arXiv: 1907.05600.

\bibitem[Song et~al.(2020)Song, Sohl-Dickstein, Kingma, Kumar, Ermon, and
  Poole]{song2020score}
Y.~Song, J.~Sohl-Dickstein, D.~P. Kingma, A.~Kumar, S.~Ermon, and B.~Poole.
\newblock Score-based generative modeling through stochastic differential
  equations.
\newblock \emph{arXiv preprint arXiv:2011.13456}, 2020.

\bibitem[Spokoiny(2016)]{spokoiny2016nonparametric}
V.~Spokoiny.
\newblock Nonparametric estimation: parametric view.
\newblock 2016.

\bibitem[Steinwart and Christmann(2008)]{steinwart2008support}
I.~Steinwart and A.~Christmann.
\newblock \emph{Support vector machines}.
\newblock Springer Science \& Business Media, 2008.

\bibitem[Tong et~al.(2023)Tong, Malkin, Huguet, Zhang, {Rector-Brooks}, Fatras,
  Wolf, and Bengio]{tongImprovingGeneralizingFlowbased2023}
A.~Tong, N.~Malkin, G.~Huguet, Y.~Zhang, J.~{Rector-Brooks}, K.~Fatras,
  G.~Wolf, and Y.~Bengio.
\newblock Improving and generalizing flow-based generative models with
  minibatch optimal transport.
\newblock \penalty0 (arXiv:2302.00482), Oct. 2023.
\newblock \doi{10.48550/arXiv.2302.00482}.

\bibitem[Tsybakov(2009)]{tsybakov_introduction_2009}
A.~B. Tsybakov.
\newblock \emph{Introduction to {Nonparametric} {Estimation}}.
\newblock Springer {Series} in {Statistics}. Springer, New York, NY, 2009.
\newblock ISBN 978-0-387-79051-0 978-0-387-79052-7.
\newblock \doi{10.1007/b13794}.
\newblock URL \url{https://link.springer.com/10.1007/b13794}.

\bibitem[Vincent(2011)]{vincentConnectionScoreMatching2011}
P.~Vincent.
\newblock A {{Connection Between Score Matching}} and {{Denoising
  Autoencoders}}.
\newblock \emph{Neural Computation}, 23\penalty0 (7):\penalty0 1661--1674, July
  2011.
\newblock ISSN 0899-7667, 1530-888X.
\newblock \doi{10.1162/NECO_a_00142}.

\bibitem[Wang and N{\"u}sken(2024)]{wangMeasureTransportKernel2024}
L.~Wang and N.~N{\"u}sken.
\newblock Measure transport with kernel mean embeddings.
\newblock \penalty0 (arXiv:2401.12967), Sept. 2024.
\newblock \doi{10.48550/arXiv.2401.12967}.

\bibitem[Wang et~al.(2013)Wang, Slep{\v{c}}ev, Basu, Ozolek, and
  Rohde]{wang2013linear}
W.~Wang, D.~Slep{\v{c}}ev, S.~Basu, J.~A. Ozolek, and G.~K. Rohde.
\newblock A linear optimal transportation framework for quantifying and
  visualizing variations in sets of images.
\newblock \emph{International journal of computer vision}, 101:\penalty0
  254--269, 2013.

\bibitem[Wendland(2004)]{wendland_scattered_2004}
H.~Wendland.
\newblock \emph{Scattered {Data} {Approximation}}.
\newblock Cambridge University Press, Dec. 2004.
\newblock ISBN 978-1-139-45665-4.

\bibitem[Zhu et~al.(2021)Zhu, Jitkrittum, Diehl, and
  Schölkopf]{zhu_kernel_2021}
J.-J. Zhu, W.~Jitkrittum, M.~Diehl, and B.~Schölkopf.
\newblock Kernel {Distributionally} {Robust} {Optimization}: {Generalized}
  {Duality} {Theorem} and {Stochastic} {Approximation}.
\newblock In \emph{Proceedings of {The} 24th {International} {Conference} on
  {Artificial} {Intelligence} and {Statistics}}, pages 280--288. PMLR, Mar.
  2021.
\newblock URL \url{https://proceedings.mlr.press/v130/zhu21a.html}.
\newblock ISSN: 2640-3498.

\end{thebibliography}

\end{document}